\documentclass[twoside,11pt]{article}

\usepackage[preprint]{jmlr2e}
\usepackage{amsmath}
\usepackage{amsfonts}
\usepackage{mathtools}
\usepackage{graphicx}
\usepackage{booktabs}
\usepackage{array}
\usepackage{enumitem}
\usepackage{xcolor}
\usepackage{url}
\usepackage{microtype}
\usepackage{lastpage}

\jmlrheading
  {1}
  {2026}
  {1--\pageref{LastPage}}
  {7/26}
  {}
  {26-0000}
  {Shikhman}

\ShortHeadings
  {Statistical Consistency of Functional Flow Matching}
  {Shikhman}
\firstpageno{1}

\newtheorem{assumption}[theorem]{Assumption}

\newcommand{\Hsp}{\mathsf H}
\newcommand{\Vsp}{\mathsf V}

\newcommand{\Hm}{\mathsf H_m}
\newcommand{\R}{\mathbb R}
\newcommand{\N}{\mathbb N}
\newcommand{\Pp}{\mathbb P}
\newcommand{\Ee}{\mathbb E}
\newcommand{\Law}{\operatorname{Law}}

\newcommand{\Lip}{\operatorname{Lip}}
\newcommand{\cP}{\mathcal P}
\newcommand{\cG}{\mathcal G}
\newcommand{\cF}{\mathcal F}
\newcommand{\cL}{\mathcal L}

\newcommand{\cA}{\mathcal A}

\newcommand{\cD}{\mathcal D}

\newcommand{\norm}[1]{\left\lVert #1\right\rVert}
\newcommand{\abs}[1]{\left\lvert #1\right\rvert}
\newcommand{\inner}[2]{\left\langle #1,#2\right\rangle}
\newcommand{\dd}{\,\mathrm d}
\newcommand{\dt}{\,\mathrm dt}
\newcommand{\Wtwo}{W_2}

\hypersetup{
  pdftitle={Discretization and Statistical Consistency of Functional Flow Matching},
  pdfauthor={Lennon J. Shikhman},
  pdfkeywords={functional flow matching, neural operators, Hilbert spaces, finite sensors, Wasserstein convergence}
}

\begin{document}

\title{Discretization and Statistical Consistency\\of Functional Flow Matching}

\author{\name Lennon J. Shikhman
  \email lshikhman3@gatech.edu\\
  \addr School of Computer Science, College of Computing\\
  Georgia Institute of Technology\\
  Atlanta, Georgia 30332, USA
}

\editor{To be assigned}

\maketitle

\begin{abstract}
Functional flow matching is posed on distributions of functions but implemented from finitely many coefficients or point values. Under scattered
or adaptive refinement, the resulting conditioning sigma-algebras need not be nested, so martingale convergence does not justify the sensor limit. We prove strong $L^2$ convergence of finite conditional velocity targets for every strongly consistent sequence of finite-rank reconstructions, with quantitative bounds for orthogonal projections and a point-sensor extension through a regularity space.  For learned flows, coupling directly to a population superposition path yields an end-to-end Wasserstein bound without assuming uniqueness of the population finite-dimensional ODE.  We verify sensor-independent constants for a normalized quadrature neural operator, including globally Lipschitz activations through an explicit magnitude recurrence.  A noncommuting trace-class Gaussian example gives boundary multiplier $0$ under projected restriction and $0.72$ under exact conditioning.  A spatial regularity--cubature certificate
closes the operator-realization term, a Bernstein argument gives an
$\widetilde O(n^{-1})$ excess-risk term for fixed model dimension and
envelopes, and an exactly realizable clipped Gaussian scaling specialization
yields an explicit end-to-end rate.
\end{abstract}

\begin{keywords}
functional flow matching, neural operators, Hilbert-space probability flows,
finite sensors, discretization consistency, conditional expectation,
Wasserstein distance
\end{keywords}

\section{Introduction}
\label{sec:introduction}

Scientific data such as velocity fields, climate states, and PDE solutions
are naturally random functions; a grid is only one finite observation.
Functional flow matching therefore learns a time-dependent velocity on a
function space rather than on a fixed pixel array
\citep{kerrigan2024functional}.  Yet every implementation observes finitely
many coefficients or point values and integrates a finite-dimensional ODE.
Parameter sharing across grids does not prove convergence to a common
continuum flow.

The unresolved step is the conditional target.  For a stochastic
interpolation $X_t$ with velocity $U_t$, the population and implemented
targets are
\[
  v^\star(t,X_t)=\Ee[U_t\mid X_t],
  \qquad
  v_m^\star(t,A_mX_t)=\Ee[A_mU_t\mid A_mX_t].
\]
The sigma-algebras $\sigma(A_mX_t)$ generally change with the mesh and need
not be nested for scattered, remeshed, or adaptive sensors.  Thus neither
$A_mX_t\to X_t$ nor a martingale theorem alone proves
$v_m^\star\to v^\star$.  This paper supplies that missing sensor limit and
propagates it through population flows, statistical learning, and the
implemented sampler.  For literal sensors, conditioning on $A_mX_t$ equals
conditioning on $S_mX_t$ under the same-information condition in
Section~\ref{subsec:point-sensor-setting}.

\paragraph{Why projection is not conditioning.}
A two-dimensional calculation already exposes the issue.  In the
trace-class Gaussian model of Section~\ref{subsec:noncommuting-gaussian}, the
last observed direction at level $m$ is
$g_m=(e_{2m-1}+e_{2m})/\sqrt2$.  At $t=1/2$, with endpoint variance ratios
$r_+=4$ and $r_-=1/4$, the continuum conditional field has multipliers
$k_+=1.2$ and $k_-=-1.2$.  Projecting this field onto $g_m$ therefore gives
$\overline k=(k_++k_-)/2=0$.  Conditioning after observation instead gives
\[
  \kappa
  =
  \frac{1.5-0.375}{1.25+0.3125}
  =
  0.72,
\]
so the exact finite target contains
$0.72\inner{z}{g_m}g_m$.  Thus restricting a continuum-trained field is not,
in general, the finite flow-matching target.  This order-one multiplier
discrepancy is compatible with the target-consistency theorem: the affected
direction moves into the trace-class tail, so the global $L^2$ error still
vanishes.

Literal point evaluation is not continuous on $L^2$, so point sensors require
a regularity space and stable reconstruction.  Also, a weak continuity
equation yields a measure on characteristics, not automatically a unique flow
map \citep{stepanov2017superposition}; the distinction matters below.

\subsection{Contributions}
\label{subsec:contributions}

\begin{enumerate}[leftmargin=2.1em,itemsep=2pt]
  \item \textbf{Nonnested target consistency and a quantitative stress test.}
  Strongly consistent, uniformly bounded finite-rank reconstructions imply
  \[
    v_m^\star(\tau,A_mX_\tau)
    \longrightarrow v^\star(\tau,X_\tau)
    \quad\text{in }L^2(\dd t\otimes\Pp;\Hsp),
  \]
  without nested sigma-algebras.  We give orthogonal tail bounds, a
  point-sensor version, and the noncommuting Gaussian model above.

  \item \textbf{Generated-law control without population uniqueness.}
  Coupling the learned ODE to a population superposition path requires
  stability only of the learned field and separates all sensing, learning,
  numerical, and endpoint errors.

  \item \textbf{Mesh-uniform learning constants and a composed rate.}
  A normalized quadrature neural operator has sensor-independent output,
  parameter, and state constants after an explicit nodal--function norm
  bridge.  Spatial regularity, stable reconstruction, and $W_1$ cubature
  imply realization consistency; Bernstein localization gives a fast risk
  bound; and an exactly realizable clipped Gaussian scaling model yields an
  explicit rate.  ReLU, GeLU, and SiLU are covered.
\end{enumerate}

\paragraph{A boundary result for stability assumptions.}
We also construct a globally $1$-Lipschitz continuum velocity whose finite
conditional targets have no mesh-uniform Lipschitz envelope.  Its finite flows
nevertheless equal the projected continuum flow.  The example delimits a
sufficient hypothesis; it is not a failure-of-convergence result.

\subsection{Scope}
The results concern probability flows, not infinite-dimensional likelihoods,
and require controlled reconstruction, realization, and stability constants.
Arbitrary refinement and unseen-frequency extrapolation are not covered;
likelihoods require separate quasi-invariance machinery
\citep{bogachev1998gaussian}.

\section{Related Work}
\label{sec:related}

Flow matching regresses conditional velocities along prescribed probability
paths \citep{lipman2023flow}; rectified flow and stochastic interpolants give
closely related constructions \citep{liu2023rectified,albergo2023stochastic}.
Functional flow matching lifts this idea to random functions
\citep{kerrigan2024functional}.  Functional Mean Flow extends one-step mean-flow generation to infinite-dimensional Hilbert spaces, with both velocity- and endpoint-prediction formulations
\citep{li2025functionalmean}. Subsequent work proves Hilbert-space marginal
preservation by superposition, regularity of conditional drifts, and
conditional-to-marginal objective equivalence
\citep{zhang2026functional,chen2025scaleadaptive,li2026fotcfm}.
Operator flow matching and infinite-dimensional probability-flow ODEs provide
other function-space formulations
\citep{shi2025operatorflow,na2025probability}.  Those continuum results do not
settle conditional-target convergence under nonnested finite observations.
Function-space diffusion models have dimension-aware approximation and
multilevel convergence results
\citep{pidstrigach2024infinite,hagemann2025multilevel}.  Flow matching adds a
distinct difficulty because its target changes with the conditioning
information.  Scattered-data recovery requires regularity and stable
reconstruction
\citep{wendland2005scattered,narcowich2005sobolev}; point values are continuous
on $H^s(D)$ only above the Sobolev threshold $s>d/2$
\citep{adams2003sobolev}.

Neural operators share parameters across discretizations
\citep{kovachki2023neural}; FNO and DeepONet supply prominent realizations and
approximation theories
\citep{kovachki2021fno,lu2021deeponet,lanthaler2022error}.
Sharing parameters is weaker than convergence of targets, norms, operator
implementations, and flows.  We verify these links for a normalized
quadrature operator; a standard FNO still needs analogous Fourier-layer
bounds and a realization estimate.
Finite-dimensional flow-matching and neural-operator learning theory provide
complementary statistical tools
\citep{benton2024error,fukumizu2025minimax,reinhardt2024statistical}.

Superposition principles connect weak continuity equations to distributions
on characteristic curves \citep{ambrosio2008gradient,
stepanov2017superposition}; ODE uniqueness is needed only where a
deterministic map is claimed.
\section{Setting and notation}
\label{sec:setup}

Let $\Hsp$ be a separable real Hilbert space.  Write $\cP_2(\Hsp)$ for its
Borel laws with finite second moment, equipped with
\[
  \Wtwo^2(\mu,\nu)
  =
  \inf_{\pi\in\Pi(\mu,\nu)}
  \int_{\Hsp\times\Hsp}\norm{x-y}_{\Hsp}^2\,\pi(\dd x,\dd y).
\]
Conditional expectations are Bochner conditional expectations; vector
fields are identified $\dd t\,\mu_t(\dd x)$-almost everywhere unless a
representative is specified.

\subsection{The stochastic interpolation}

Fix $T\in(0,1]$ and a complete probability space $(\Omega,\cA,\Pp)$.

\begin{assumption}[Absolutely continuous interpolation]
\label{ass:interpolation}
The map $X:[0,1]\times\Omega\to\Hsp$ is jointly measurable, almost every
path is absolutely continuous, and its jointly measurable Bochner derivative
$U_t=\dot X_t$ satisfies
\[
  \Ee\norm{X_0}_{\Hsp}^2<\infty,
  \qquad
  \int_0^1\Ee\norm{U_t}_{\Hsp}^2\dt<\infty.
\]
\end{assumption}

The Bochner fundamental theorem gives
$X_t=X_0+\int_0^tU_r\dd r$ and
\begin{equation}
\label{eq:path-sup-moment}
  \Ee\sup_{0\le t\le T}\norm{X_t}_{\Hsp}^2
  \le
  2\Ee\norm{X_0}_{\Hsp}^2
  +
  2T\int_0^T\Ee\norm{U_t}_{\Hsp}^2\dt
  <\infty.
\end{equation}
Let $\mu_t=\Law(X_t)$ for $0\le t\le1$.  All training risks and flow
equations below are restricted to $[0,T]$, while $\mu_1$ denotes the desired
endpoint law.  A standard example is the linear interpolation
\begin{equation}
\label{eq:linear-interpolation}
  X_t=(1-t)X_0+tX_1,\qquad U_t=X_1-X_0,
\end{equation}
for square-integrable $X_0,X_1$.

Regard time as random: on $\overline\Omega=[0,T]\times\Omega$, let
$\overline\Pp=T^{-1}\mathcal L^1|_{[0,T]}\otimes\Pp$,
$\tau(t,\omega)=t$, and
$\cG=\sigma((t,\omega)\mapsto(t,X_t(\omega)))$.
Since $\Hsp$ is separable, the Bochner conditional expectation
$\Ee[U_\tau\mid\cG]$ has a representation
\begin{equation}
\label{eq:population-target}
  v^\star(\tau,X_\tau)
  =
  \Ee[U_\tau\mid\cG]
  \quad\overline\Pp\text{-almost surely}
\end{equation}
for a Borel function $v^\star:[0,T]\times\Hsp\to\Hsp$.  This follows from
the factorization lemma because $[0,T]\times\Hsp$ is a standard Borel space
\citep[Chapter~1]{kallenberg2021foundations}.
We abbreviate this as $v^\star(t,x)=\Ee[U_t\mid X_t=x]$.

For a Borel field $w:[0,T]\times\Hsp\to\Hsp$ with
$w(\tau,X_\tau)\in L^2(\overline\Pp;\Hsp)$, define the normalized population
risk
\begin{equation}
\label{eq:population-risk}
  \cL(w)
  =
  \frac1T\int_0^T
  \Ee\norm{w(t,X_t)-U_t}_{\Hsp}^2\dt.
\end{equation}

\subsection{Finite-rank reconstructed observations}

Let $A_m\in\mathcal L(\Hsp)$ have finite-dimensional
range $\Hm=\operatorname{ran}(A_m)$ and satisfy
\begin{equation}
\label{eq:strong-reconstruction}
  A_mx\longrightarrow x
  \quad\text{in }\Hsp\quad\text{for every }x\in\Hsp.
\end{equation}
The uniform boundedness principle implies
\begin{equation}
\label{eq:uniform-reconstruction-bound}
  C_A:=\sup_{m\ge1}\norm{A_m}_{\mathcal L(\Hsp)}<\infty.
\end{equation}

Set
\[
  X_t^m=A_mX_t,\qquad U_t^m=A_mU_t,\qquad
  \mu_t^m=(A_m)_\#\mu_t,
\]
and
\[
  \cG_m
  =
  \sigma\bigl((t,\omega)\mapsto(t,A_mX_t(\omega))\bigr).
\]
The finite-observation target is the Borel field
$v_m^\star:[0,T]\times\Hm\to\Hm$ determined by
\begin{equation}
\label{eq:finite-target}
  v_m^\star(\tau,A_mX_\tau)
  =
  \Ee[A_mU_\tau\mid\cG_m].
\end{equation}
Its risk, for $f:[0,T]\times\Hm\to\Hm$, is
\begin{equation}
\label{eq:finite-risk}
  \cL_m(f)
  =
  \frac1T\int_0^T
  \Ee\norm{f(t,A_mX_t)-A_mU_t}_{\Hsp}^2\dt.
\end{equation}

The projected path $A_mX_t$ is absolutely continuous with derivative
$A_mU_t$.

\subsection{Point sensors and a regularity space}
\label{subsec:point-sensor-setting}

Literal point evaluation requires a refinement of this abstract setting.
Let $\Vsp$ be a separable Hilbert space continuously and densely embedded in
$\Hsp$, and suppose $X$ and $U$ satisfy Assumption~\ref{ass:interpolation}
with $\Vsp$ in place of $\Hsp$.  A sensor--reconstruction pair has the form
\[
  S_m\in\mathcal L(\Vsp,\R^{d_m}),\qquad
  R_m\in\mathcal L(\R^{d_m},\Vsp),\qquad
  A_m=R_mS_m.
\]
We require
\begin{equation}
\label{eq:regularity-reconstruction}
  \sup_m\norm{A_m}_{\mathcal L(\Vsp,\Hsp)}<\infty,
  \qquad
  \norm{A_mf-f}_{\Hsp}\longrightarrow0
  \quad(f\in\Vsp).
\end{equation}
We additionally require
$\sigma(\tau,S_mX_\tau)=\sigma(\tau,A_mX_\tau)$.  The identity
$S_mR_m=I$ is sufficient;
for oversampled least squares it is enough that $S_mR_m$ be injective on the
almost-sure range of $S_mX_t$.
For a bounded Lipschitz domain $D\subset\R^d$, the canonical example is
\[
  \Vsp=H^s(D;\R^q),\qquad
  \Hsp=L^2(D;\R^q),\qquad s>d/2,
\]
for which point evaluation is continuous on $\Vsp$ by Sobolev embedding
\citep{adams2003sobolev}.  More precisely, for any
$0<\alpha\le1$ satisfying $\alpha<s-d/2$, this choice also gives the
embedding $H^s(D;\R^q)\hookrightarrow C^{0,\alpha}(D;\R^q)$ used in
Proposition~\ref{prop:qno-realization}.  Local averages are a bounded
alternative when no pointwise representative is desired.

\section{Continuum background: probability flows and superposition}
\label{sec:continuum}

This section records the continuum background needed by the discretization
arguments.  The Wasserstein absolute-continuity and regression identities are
standard Hilbert-space consequences of path coupling and conditional
expectation.  The superposition statement is a specialization of
\citet[Theorem~3.4]{stepanov2017superposition}, equivalently the
Hilbert-space result of \citet{zhang2026functional}.  We state the exact
hypotheses because the learned-law argument later uses measure-valued
characteristics rather than assuming a unique population ODE; these
background results are not claimed as contributions.  The argument uses
neither a density relative to an infinite-dimensional Lebesgue measure nor
coordinatewise vector calculus.

Here $C_b^1(\Hsp)$ denotes the bounded, continuously Fr\'echet differentiable
real functions on $\Hsp$ whose derivatives are bounded and continuous.

\begin{theorem}[Continuum flow-matching identities]
\label{thm:continuum}
Under Assumption~\ref{ass:interpolation}, the following statements hold.
\begin{enumerate}[label=(\roman*),leftmargin=2.2em]
  \item The curve $t\mapsto\mu_t$ belongs to
  $AC^2([0,T];\cP_2(\Hsp))$ and, for $0\le s\le t\le T$,
  \begin{equation}
  \label{eq:w2-ac-bound}
    \Wtwo(\mu_s,\mu_t)
    \le
    \bigl(\Ee\norm{X_t-X_s}_{\Hsp}^2\bigr)^{1/2}
    \le
    \int_s^t
    \bigl(\Ee\norm{U_r}_{\Hsp}^2\bigr)^{1/2}\dd r .
  \end{equation}
  \item The conditional field is square integrable and obeys
  \begin{equation}
  \label{eq:conditional-jensen}
    \frac1T\int_0^T
    \int_{\Hsp}\norm{v^\star(t,x)}_{\Hsp}^2\,\mu_t(\dd x)\dt
    \le
    \frac1T\int_0^T\Ee\norm{U_t}_{\Hsp}^2\dt .
  \end{equation}
  \item For every admissible $w$,
  \begin{equation}
  \label{eq:population-pythagoras}
    \cL(w)
    =
    \cL(v^\star)
    +
    \frac1T\int_0^T
    \Ee\norm{w(t,X_t)-v^\star(t,X_t)}_{\Hsp}^2\dt .
  \end{equation}
  Hence $v^\star$ is the unique minimizer in
  $L^2(\dd t\,\mu_t;\Hsp)$.
  \item The pair $(\mu_t,v^\star)$ satisfies the weak continuity equation:
  for every $\varphi\in C_b^1(\Hsp)$ and
  $\zeta\in C_c^1((0,T))$,
  \begin{equation}
  \label{eq:weak-continuity}
    \int_0^T\!\int_{\Hsp}
    \left[
      \zeta'(t)\varphi(x)
      +
      \zeta(t)\inner{D\varphi(x)}{v^\star(t,x)}_{\Hsp}
    \right]\mu_t(\dd x)\dt
    =0.
  \end{equation}
\end{enumerate}
\end{theorem}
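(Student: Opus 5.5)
The four items follow from the path coupling $(X_s,X_t)$, the $L^2$-contraction property of Bochner conditional expectation on the product space $(\overline\Omega,\overline\Pp)$, and an integration by parts along absolutely continuous paths. I will prove them in order.

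\textbf{Item (i).} The pair $(X_s,X_t)$ is a coupling of $\mu_s$ and $\mu_t$, so $\Wtwo(\mu_s,\mu_t)\le(\Ee\norm{X_t-X_s}_{\Hsp}^2)^{1/2}$. Write $X_t-X_s=\int_s^tU_r\dd r$ (Bochner) and apply Minkowski's integral inequality in $L^2(\Pp;\Hsp)$. This gives $\norm{X_t-X_s}_{L^2(\Pp;\Hsp)}\le\int_s^t\norm{U_r}_{L^2(\Pp;\Hsp)}\dd r$, which is \eqref{eq:w2-ac-bound}. Joint measurability of $U$ is needed for this step.

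Each $\mu_t$ lies in $\cP_2(\Hsp)$ by \eqref{eq:path-sup-moment}. The function $g(r)=(\Ee\norm{U_r}^2)^{1/2}$ lies in $L^2(0,T)$ by Assumption~\ref{ass:interpolation}. So $\mu$ is an $AC^2$ curve with metric derivative bounded by $g$.

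\textbf{Items (ii) and (iii).} Work on $(\overline\Omega,\overline\Pp)$, where $U_\tau\in L^2(\overline\Pp;\Hsp)$ by the time-integrability assumption. Conditional Jensen gives $\norm{\Ee[U_\tau\mid\cG]}^2\le\Ee[\norm{U_\tau}^2\mid\cG]$. Taking expectations and rewriting with Fubini and $\Law(X_t)=\mu_t$ yields \eqref{eq:conditional-jensen}.

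For (iii), observe that $w(\tau,X_\tau)$ is $\cG$-measurable, and so is $v^\star(\tau,X_\tau)$. Expand
\[
w(\tau,X_\tau)-U_\tau=\bigl(w-v^\star\bigr)(\tau,X_\tau)+\bigl(v^\star(\tau,X_\tau)-U_\tau\bigr).
\]
The cross term vanishes by orthogonality of $L^2(\cG)$ to $U_\tau-\Ee[U_\tau\mid\cG]$. To justify this rigorously, pair against an orthonormal basis of $\Hsp$, or use the tower property with the $\cG$-measurable, square-integrable factor. This gives \eqref{eq:population-pythagoras}. Uniqueness holds because the excess term vanishes only if $w=v^\star$ almost everywhere for $\dd t\,\mu_t$.

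\textbf{Item (iv).} This is the main place where care is needed. Fix $\varphi\in C_b^1(\Hsp)$. Along almost every absolutely continuous path, the chain rule for Fréchet-differentiable $\varphi$ composed with an absolutely continuous Bochner path gives that $t\mapsto\varphi(X_t)$ is absolutely continuous with derivative $\inner{D\varphi(X_t)}{U_t}$. I would prove this chain rule through the mean value theorem on difference quotients, using continuity of $D\varphi$ and Lebesgue points of $U$.

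Then integrate by parts against $\zeta\in C_c^1((0,T))$ pathwise:
\[
\int_0^T\zeta'\varphi(X_t)\dt=-\int_0^T\zeta\inner{D\varphi(X_t)}{U_t}\dt .
\]
Take expectations; Fubini applies because $\abs{\varphi}$ and $\norm{D\varphi}$ are bounded and $U\in L^2\subset L^1$.

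Finally, replace $U_t$ by $v^\star(t,X_t)$. The factor $\zeta(\tau)D\varphi(X_\tau)$ is bounded and $\cG$-measurable, so the tower property on $\overline\Omega$ lets us condition $U_\tau$ on $\cG$. Rewriting the result as integrals against $\mu_t$ gives \eqref{eq:weak-continuity}.
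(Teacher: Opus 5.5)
Your proposal is correct and follows the paper's proof essentially step by step: the coupling $(X_s,X_t)$ with Minkowski for (i), conditional Jensen for (ii), orthogonality of the cross term for (iii), and for (iv) the pathwise chain rule, Fubini, and conditioning on $\cG$. The only difference is cosmetic: you integrate by parts against $\zeta$ pathwise before taking expectations, whereas the paper first derives the integrated identity $\Ee\varphi(X_t)-\Ee\varphi(X_s)=\int_s^t\int\inner{D\varphi(x)}{v^\star(r,x)}\,\mu_r(\dd x)\dd r$ and then integrates by parts in time.
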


\begin{proof}
The coupling $(X_s,X_t)$, the identity
$X_t-X_s=\int_s^tU_r\dd r$, and Minkowski give
\eqref{eq:w2-ac-bound} and $AC^2$.  Conditional Jensen gives
\eqref{eq:conditional-jensen}; expanding the squared loss and conditioning
the cross term on $\cG$ gives \eqref{eq:population-pythagoras}.  Finally, the
chain rule along the absolutely continuous path, followed by Fubini and
conditioning on $(r,X_r)$, gives
\begin{equation}
\label{eq:integrated-ce}
  \Ee\varphi(X_t)-\Ee\varphi(X_s)
  =
  \int_s^t
  \int_{\Hsp}
  \inner{D\varphi(x)}{v^\star(r,x)}_{\Hsp}
  \,\mu_r(\dd x)\dd r .
\end{equation}
Integration by parts in time proves \eqref{eq:weak-continuity}.  All
interchanges are justified by bounded $D\varphi$ and
$U\in L^1(\dd t\otimes\Pp;\Hsp)$.
\end{proof}

The regression problem determines only an equivalence class in
$L^2(\dd t\,\mu_t;\Hsp)$, not a regular off-support extension.  Classical
ODE claims therefore assume a specified regular representative.

\subsection{Superposition and deterministic flows}

\begin{corollary}[Superposition representation]
\label{cor:superposition}
Under Assumption~\ref{ass:interpolation}, there exists a Borel probability
measure $\eta$ on $C([0,T];\Hsp)$ such that
\[
  (e_t)_\#\eta=\mu_t
  \quad(0\le t\le T),
  \qquad e_t(\gamma)=\gamma_t,
\]
$\eta$ is concentrated on absolutely continuous curves, and
\begin{equation}
\label{eq:characteristic}
  \dot\gamma_t=v^\star(t,\gamma_t)
  \quad\text{for }\eta\text{-almost every }\gamma
  \text{ and almost every }t.
\end{equation}
\end{corollary}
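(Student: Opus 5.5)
The plan is to obtain the corollary by specializing the superposition principle of \citet[Theorem~3.4]{stepanov2017superposition} (equivalently the Hilbert-space version of \citet{zhang2026functional}) to the pair $(\mu_t,v^\star)$ produced by Theorem~\ref{thm:continuum}. The work consists of checking that its hypotheses hold exactly as stated there, and then reading off the conclusion. Three inputs are needed:
\begin{itemize}[leftmargin=2.1em,itemsep=1pt]
  \item a narrowly continuous curve of Borel probability measures on the separable space $\Hsp$;
  \item a Borel velocity field satisfying the weak continuity equation against a class of test functions rich enough to separate measures;
  \item the integrability condition $\int_0^T\int_{\Hsp}\norm{v^\star(t,x)}_{\Hsp}\,\mu_t(\dd x)\dt<\infty$.
\end{itemize}

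First, narrow continuity follows from Theorem~\ref{thm:continuum}(i): the bound \eqref{eq:w2-ac-bound} makes $t\mapsto\mu_t$ continuous in $\Wtwo$, hence narrowly continuous. The endpoint values $\mu_0$ and $\mu_T$ are the actual laws of $X_0$ and $X_T$, since the $W_2$ bound holds for all $s\le t$ and not merely almost every $t$.

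Second, integrability follows from Theorem~\ref{thm:continuum}(ii) together with Cauchy--Schwarz. These give $\int_0^T\int\norm{v^\star}\dd\mu_t\dt\le \sqrt T\,\bigl(\int_0^T\int\norm{v^\star}^2\dd\mu_t\dt\bigr)^{1/2}<\infty$, which is the $L^1$ condition in Stepanov--Trevisan. It is in fact the stronger $L^2$ condition, which also ensures that the characteristic curves are absolutely continuous with $L^2$-in-time derivatives in expectation.

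Third, the weak continuity equation is Theorem~\ref{thm:continuum}(iv), stated for $\varphi\in C_b^1(\Hsp)$ and $\zeta\in C_c^1((0,T))$. The main obstacle is the choice of test class. Stepanov--Trevisan work on a metric space, or on $\R^\infty$ through cylindrical functions, so I would reduce to their setting in one of two ways. The first is to note that cylindrical functions $\varphi(x)=\psi(\inner{x}{e_1},\dots,\inner{x}{e_k})$ with $\psi\in C_b^1(\R^k)$ lie in $C_b^1(\Hsp)$, so \eqref{eq:weak-continuity} holds for them in particular. The second is to embed $\Hsp$ into $\R^\infty$ via an orthonormal basis, where the superposition theorem applies directly. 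In the second route one must pull the resulting measure on $C([0,T];\R^\infty)$ back to curves in $\Hsp$. That requires showing the limiting curves are $\Hsp$-valued and $\Hsp$-continuous, which follows because $\int\int_0^T\norm{\dot\gamma_t}_{\Hsp}\dt\,\dd\eta<\infty$ and $\eta$-a.e.\ $\gamma_0$ lies in $\Hsp$. Lower semicontinuity of the $\Hsp$-norm under coordinatewise convergence makes the metric speed $\int\norm{\dot\gamma}$ finite and the curves absolutely continuous in $\Hsp$, with $\dot\gamma_t=v^\star(t,\gamma_t)$ for a.e.\ $t$ as an $\Hsp$-valued identity.

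Finally, the conclusions $(e_t)_\#\eta=\mu_t$ for every $t\in[0,T]$ (by narrow continuity of both sides) and \eqref{eq:characteristic} are read off from the theorem. Because $v^\star$ is only an $L^2(\dd t\,\mu_t)$ class, I would also check that \eqref{eq:characteristic} does not depend on the representative. Two representatives agreeing $\dd t\,\mu_t$-a.e.\ agree $\dd t\otimes\eta$-a.e.\ along curves, by Fubini and $(e_t)_\#\eta=\mu_t$. No uniqueness of characteristics is asserted, consistent with the remark following the statement.
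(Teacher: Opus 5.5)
Your proposal is correct and takes the same route as the paper. You get narrow continuity from the $\Wtwo$ bound in Theorem~\ref{thm:continuum}(i), $L^1(\dd t\,\mu_t)$ integrability of $v^\star$ from (ii) plus Cauchy--Schwarz, and the weak continuity equation from (iv), then invoke the Stepanov--Trevisan / Zhang et al.\ superposition theorem. Your extra details (cylindrical test functions, the pull-back from $\R^\infty$, and independence from the choice of representative) are sound and fill in steps the paper leaves to the cited references.
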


\begin{proof}
Theorem~\ref{thm:continuum} gives narrow continuity and the weak equation,
while Cauchy--Schwarz and \eqref{eq:conditional-jensen} give
$v^\star\in L^1(\dd t\,\mu_t)$.  The Hilbert-space superposition theorem
\citep[Theorem~16 and Corollary~17]{zhang2026functional}, equivalently
\citet[Theorem~3.4]{stepanov2017superposition}, therefore applies.
\end{proof}

Corollary~\ref{cor:superposition} does not say that the curve through a given
initial state is unique.  The next result records a sufficient condition for
that stronger conclusion.

\begin{assumption}[Regular ODE representative]
\label{ass:continuum-lipschitz}
A specified Borel representative of $v^\star$ is measurable in $t$ for each
$x$, continuous in $x$ for almost every $t$, and there are
$L,g\in L^1(0,T)$ such that
\[
  \norm{v^\star(t,x)-v^\star(t,y)}_{\Hsp}
  \le L(t)\norm{x-y}_{\Hsp},
  \qquad
  \norm{v^\star(t,0)}_{\Hsp}\le g(t)
\]
for all $x,y\in\Hsp$ and almost every $t$.
\end{assumption}

\begin{theorem}[Deterministic continuum flow]
\label{thm:deterministic-continuum-flow}
Under the interpolation and regular-ODE assumptions, the Carath\'eodory
equation
\begin{equation}
\label{eq:continuum-ode}
  \dot Y_t=v^\star(t,Y_t),\qquad Y_0=x,
\end{equation}
has a unique global solution $\Phi_t(x)$ for every $x\in\Hsp$.  Moreover,
\begin{equation}
\label{eq:continuum-flow-lipschitz}
  \norm{\Phi_t(x)-\Phi_t(y)}_{\Hsp}
  \le
  \exp\!\left(\int_0^tL(r)\dd r\right)\norm{x-y}_{\Hsp}
\end{equation}
and
\[
  \mu_t=(\Phi_t)_\#\mu_0,\qquad 0\le t\le T.
\]
\end{theorem}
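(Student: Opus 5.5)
The plan has three parts: build the flow by a Carathéodory Picard iteration in $C([0,T];\Hsp)$, get the Lipschitz estimate from Grönwall, and identify $\mu_t$ with $(\Phi_t)_\#\mu_0$ by combining Corollary~\ref{cor:superposition} with the uniqueness just established.

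\textbf{Existence and uniqueness.} Assumption~\ref{ass:continuum-lipschitz} makes $v^\star$ a Carathéodory field: measurable in $t$, continuous in $x$. Hence $t\mapsto v^\star(t,Y_t)$ is strongly measurable for every continuous $Y$; separability of $\Hsp$ and Pettis' theorem make this Bochner measurability. It is also integrable, since $\norm{v^\star(t,y)}\le g(t)+L(t)\norm{y}$. The Picard map
\[
(\Gamma Y)_t=x+\int_0^t v^\star(r,Y_r)\dd r
\]
is therefore well defined on $C([0,T];\Hsp)$. I will make $\Gamma$ a contraction under the weighted norm $\sup_t e^{-2\Lambda(t)}\norm{Y_t}$, where $\Lambda(t)=\int_0^tL$. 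The standard alternative is to show that the $n$-th iterate differences are bounded by $\Lambda(t)^n/n!$. Either route gives a unique global solution $\Phi_t(x)$ on $[0,T]$, with no blow-up, because $L$ and $g$ are integrable on $[0,T]$.

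\textbf{Stability estimate.} For two initial data $x,y$, subtract the integral equations. Then $\norm{\Phi_t(x)-\Phi_t(y)}\le\norm{x-y}+\int_0^tL(r)\norm{\Phi_r(x)-\Phi_r(y)}\dd r$. The integral form of Grönwall with $L\in L^1$ gives \eqref{eq:continuum-flow-lipschitz}. This also shows that $\Phi_t$ is continuous, hence Borel, so the pushforward $(\Phi_t)_\#\mu_0$ makes sense.

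\textbf{Identifying the law.} I expect this step to be the main obstacle. Corollary~\ref{cor:superposition} supplies $\eta$ concentrated on absolutely continuous curves with $\dot\gamma_t=v^\star(t,\gamma_t)$ for a.e.\ $t$. The difficulty is that this holds only for the $\dd t\,\mu_t$-equivalence class of $v^\star$, whereas the specified representative may differ from it off a null set.

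I handle this as follows. Let $N=\{(t,x):v^\star(t,x)\neq\tilde v(t,x)\}$, where $\tilde v$ is the representative used in the superposition. Then $\int_0^T\mu_t(N_t)\dt=0$. Since $(e_t)_\#\eta=\mu_t$, Fubini gives
\[
\int\!\int_0^T \mathbf 1_N(t,\gamma_t)\dt\,\eta(\dd\gamma)=\int_0^T\mu_t(N_t)\dt=0 .
\]
So $\eta$-a.e.\ curve satisfies $\dot\gamma_t=v^\star(t,\gamma_t)$ for a.e.\ $t$ with the specified representative. For such $\gamma$, absolute continuity gives $\gamma_t=\gamma_0+\int_0^t v^\star(r,\gamma_r)\dd r$. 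Uniqueness from the first step then forces $\gamma_t=\Phi_t(\gamma_0)$ for every $t$.

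Consequently,
\[
\mu_t=(e_t)_\#\eta=(\Phi_t\circ e_0)_\#\eta=(\Phi_t)_\#\mu_0 .
\]
The remaining measurability point, joint Borel measurability of $\mathbf 1_N(t,\gamma_t)$, follows from Borel measurability of $v^\star$ and $\tilde v$ and continuity of the evaluation map $(t,\gamma)\mapsto\gamma_t$.
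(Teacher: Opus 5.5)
Your proposal is correct and follows the paper's route. The paper cites the Banach-space Carath\'eodory theorem, which you prove directly via a weighted Picard contraction; both then use Gr\"onwall for the Lipschitz bound and identify $\mu_t$ by forcing $\eta$-almost every characteristic from Corollary~\ref{cor:superposition} to coincide with $\Phi_\cdot(\gamma_0)$. Your pathwise identification replaces the paper's disintegration over $e_0$, and your Fubini step makes explicit the independence from the choice of representative that the paper leaves implicit.
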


\begin{proof}
The Banach-space Carath\'eodory theorem gives a unique global solution, and
Gr\"onwall gives \eqref{eq:continuum-flow-lipschitz}.  Disintegrating the
measure in Corollary~\ref{cor:superposition} over $e_0$ and using uniqueness
forces each conditional path law to be
$\delta_{\Phi_\cdot(x)}$; hence
$\mu_t=(e_t)_\#\eta=(\Phi_t)_\#\mu_0$.
\end{proof}

A globally Lipschitz flow is invertible and need not reach a
lower-dimensional endpoint from a full-support source; stopping at $T<1$
retains the explicit bias $\Wtwo(\mu_T,\mu_1)$.
\section{Finite observations and conditional-target consistency}
\label{sec:observations}

We first show that each reconstructed problem has the correct population
interpretation, and then prove that its conditional regression target
converges to the infinite-dimensional target.  The convergence proof does not
require nested grids.

\begin{proposition}[Finite reconstructed continuity equation]
\label{prop:finite-continuity}
Under Assumption~\ref{ass:interpolation}, for every $m$:
\begin{enumerate}[label=(\roman*),leftmargin=2.2em]
  \item $v_m^\star$ uniquely minimizes $\cL_m$ in
  $L^2(\dd t\,\mu_t^m;\Hm)$, and
  \begin{equation}
  \label{eq:finite-pythagoras}
    \cL_m(f)-\cL_m(v_m^\star)
    =
    \frac1T\int_0^T
    \Ee\norm{f(t,A_mX_t)-v_m^\star(t,A_mX_t)}_{\Hsp}^2\dt .
  \end{equation}
  \item $(\mu_t^m,v_m^\star)$ satisfies the weak continuity equation on
  $\Hm$.
  \item There is a superposition measure $\eta_m$ on
  $C([0,T];\Hm)$ with $(e_t)_\#\eta_m=\mu_t^m$, concentrated on
  absolutely continuous solutions of
  $\dot z_t=v_m^\star(t,z_t)$.
\end{enumerate}
\end{proposition}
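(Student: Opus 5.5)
The plan is to observe that the reconstructed pair $(X^m_t,U^m_t)=(A_mX_t,A_mU_t)$ is itself an absolutely continuous stochastic interpolation, now in the finite-dimensional Hilbert space $\Hm$ with the norm inherited from $\Hsp$. Theorem~\ref{thm:continuum} and Corollary~\ref{cor:superposition} can then be applied verbatim with $\Hm$ in place of $\Hsp$.

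\textbf{Checking Assumption~\ref{ass:interpolation} in $\Hm$.} Since $A_m\in\mathcal L(\Hsp)$ is bounded and linear, and $\Hm$ is a closed (finite-dimensional) subspace, $A_mX:[0,1]\times\Omega\to\Hm$ is jointly measurable. Almost every path $t\mapsto A_mX_t$ is absolutely continuous. Moreover, bounded operators commute with Bochner integrals, so $A_mX_t=A_mX_0+\int_0^tA_mU_r\dd r$ and the derivative is $A_mU_t$. The moment conditions follow from $\norm{A_mx}_{\Hsp}\le C_A\norm{x}_{\Hsp}$, using \eqref{eq:uniform-reconstruction-bound}.

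\textbf{Identifying the objects.} The population target built from this interpolation is $\Ee[A_mU_\tau\mid\cG_m]$. By \eqref{eq:finite-target} this equals $v_m^\star(\tau,A_mX_\tau)$, where factorization is again available because $[0,T]\times\Hm$ is standard Borel. The associated marginals are $\Law(A_mX_t)=\mu_t^m$, and the associated risk is exactly $\cL_m$ in \eqref{eq:finite-risk}.

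\textbf{Deriving (i)--(iii).} Part (iii) of Theorem~\ref{thm:continuum} then gives \eqref{eq:finite-pythagoras} together with uniqueness of the minimizer in $L^2(\dd t\,\mu^m_t;\Hm)$. The uniqueness holds as an equivalence class: if the excess risk vanishes, then $f=v^\star_m$ holds $\dd t\,\mu_t^m$-a.e. Part (iv) of the theorem gives (ii), the weak continuity equation tested against $C_b^1(\Hm)$. Corollary~\ref{cor:superposition} applied in $\Hm$ gives (iii). In finite dimensions one could alternatively cite the Euclidean superposition principle of \citet{ambrosio2008gradient}, since $\Hm\cong\R^{\dim\Hm}$ isometrically.

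\textbf{Expected obstacle.} The only delicate point is measurability and the conditional-expectation bookkeeping. One must show that $\cG_m$, defined on $\overline\Omega$ through $A_mX_\tau$, is the $\sigma$-algebra generated by the new interpolation at random time, and that conditional expectation commutes with $A_m$ where this is needed. The first holds by definition. The second is not needed for the proposition, because the target is defined directly as $\Ee[A_mU_\tau\mid\cG_m]$. Everything else is inherited.
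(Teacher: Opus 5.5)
Your proposal is correct and follows the paper's proof: you check that $A_mX_t$ is an absolutely continuous $\Hm$-valued interpolation with derivative $A_mU_t$ and square-integrable velocity, then apply Theorem~\ref{thm:continuum} and Corollary~\ref{cor:superposition} in $\Hm$, identifying the resulting conditional velocity with \eqref{eq:finite-target}. One small point: for fixed $m$ you only need $\norm{A_m}_{\mathcal L(\Hsp)}<\infty$, so the mesh-uniform constant $C_A$, which relies on strong convergence, is not required.
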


\begin{proof}
The projected path $A_mX_t$ is absolutely continuous with derivative
$A_mU_t$, and
\[
  \int_0^T\Ee\norm{A_mU_t}_{\Hsp}^2\dt
  \le \norm{A_m}_{\mathcal L(\Hsp)}^2
  \int_0^T\Ee\norm{U_t}_{\Hsp}^2\dt<\infty.
\]
Apply Theorem~\ref{thm:continuum} to this $\Hm$-valued path.  Its conditional
velocity is exactly \eqref{eq:finite-target}, giving (i) and (ii).
Corollary~\ref{cor:superposition}, now in the finite-dimensional Hilbert
space $\Hm$, gives (iii).
\end{proof}

\subsection{Qualitative and quantitative target convergence}

We use the normalized product-space norm
\[
  \norm{Z}_{\overline L^2}
  :=
  \left(\frac1T\int_0^T\Ee\norm{Z_t}_{\Hsp}^2\dt\right)^{1/2}.
\]

\begin{theorem}[Conditional-target consistency]
\label{thm:target-consistency}
Under the interpolation and reconstruction assumptions,
\begin{equation}
\label{eq:target-consistency}
  \norm{
    v_m^\star(\tau,A_mX_\tau)
    -
    v^\star(\tau,X_\tau)
  }_{\overline L^2}
  \longrightarrow0.
\end{equation}
The operators and their observation sigma-algebras need not be nested.
\end{theorem}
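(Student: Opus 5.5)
Since the sigma-algebras $\cG_m$ are not nested, martingale convergence is unavailable. The plan is to compare $\Ee[A_mU_\tau\mid\cG_m]$ with $\Ee[U_\tau\mid\cG]$ through intermediate terms that use only $L^2$-contractivity of conditional expectations and the strong convergence $A_m\to I$. Write $P_m=\Ee[\,\cdot\mid\cG_m]$ and $P=\Ee[\,\cdot\mid\cG]$, both orthogonal projections on $L^2(\overline\Pp;\Hsp)$. The first step is the triangle inequality
\[
  \norm{P_m(A_mU)-PU}_{\overline L^2}
  \le
  \norm{P_m(A_mU-U)}_{\overline L^2}
  +
  \norm{P_mU-PU}_{\overline L^2}.
\]
The first term is at most $\norm{A_mU_\tau-U_\tau}_{\overline L^2}$, which tends to $0$ by dominated convergence: we have pointwise convergence from \eqref{eq:strong-reconstruction} and the bound $(C_A+1)^2\norm{U}^2$ from \eqref{eq:uniform-reconstruction-bound}. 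It therefore suffices to show $P_mU\to PU$ in $L^2$ for a fixed $U\in L^2(\overline\Pp;\Hsp)$.

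Next, decompose $U=PU+(U-PU)$ and write $V=v^\star(\tau,X_\tau)=PU$. For the first piece, approximate $V$ in $L^2$ by $h(\tau,X_\tau)$ with $h$ bounded and Lipschitz in $x$ (uniformly in $t$). Such functions are dense in $L^2(\Omega,\cG;\Hsp)$ because $\cG$ is generated by $(\tau,X_\tau)$, which takes values in the Polish space $[0,T]\times\Hsp$. Then $h(\tau,A_mX_\tau)$ is $\cG_m$-measurable, and $\norm{h(\tau,A_mX_\tau)-h(\tau,X_\tau)}_{\overline L^2}\to0$ by dominated convergence. Since $P_m$ is the best $\cG_m$-measurable approximation,
\[
  \norm{P_mV-V}\le \norm{V-h(\tau,A_mX_\tau)}\le 2\norm{V-h(\tau,X_\tau)}+o(1).
\]
So $P_mV\to V$; the information in $\cG$ is asymptotically recovered by $\cG_m$ even without nesting.

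The main obstacle is the residual $W=U-PU$, which is orthogonal to $L^2(\cG)$. We need $P_mW\to0$, yet $\cG_m\not\subset\cG$ in general: $A_mX_\tau$ is a function of $X_\tau$, so $\cG_m\subset\cG$ actually does hold. Indeed $(\tau,A_mX_\tau)$ is a measurable function of $(\tau,X_\tau)$, so $\cG_m\subseteq\cG$ for every $m$, only without nesting among the $\cG_m$ themselves. Hence $P_m=P_mP$, and $P_mW=P_mPW=0$. I will state this inclusion explicitly, since it is what replaces nestedness.

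Combining the pieces gives $P_mU=P_mV\to V=PU$, and with the first term this proves \eqref{eq:target-consistency}. The only genuinely delicate point is the density step: measurable dependence on $t$ must be handled, for example by using functions that are Borel in $t$, Lipschitz in $x$, and bounded, via a monotone class argument. Dominated convergence is then applied jointly in $(t,\omega)$.
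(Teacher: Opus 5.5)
Your argument is correct and is essentially the paper's proof. In both, $\cG_m\subseteq\cG$ and the tower property reduce the claim to $\Ee[V\mid\cG_m]\to V$, which follows by approximating $V$ with a bounded regular function of $(\tau,X_\tau)$ and using best approximation by that function evaluated at $(\tau,A_mX_\tau)$. The one difference is where you handle $A_m$: you remove $A_m-I$ from $U_\tau$ before conditioning via $L^2$-contractivity, which avoids commuting $A_m$ past the conditional expectation and the factor $C_A$, whereas the paper does commute $A_m$ and gets the smaller term $\norm{(A_m-I)v^\star(\tau,X_\tau)}_{\overline L^2}$ that its quantitative corollary uses.
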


\begin{proof}
Write $V=v^\star(\tau,X_\tau)$ and
$M_m=\Ee[V\mid\cG_m]$ on the product probability space.  Since
$\cG_m\subseteq\cG$, the tower property and commutation of a bounded linear
operator with Bochner conditional expectation (verified by testing against
continuous linear functionals) give
\begin{equation}
\label{eq:finite-target-factorization}
  v_m^\star(\tau,A_mX_\tau)
  =
  A_m\Ee[U_\tau\mid\cG_m]
  =
  A_m\Ee[V\mid\cG_m]
  =
  A_mM_m.
\end{equation}

We claim that $M_m\to V$ in $\overline L^2$.  Let
$\xi_m=(\tau,A_mX_\tau)$ and $\xi=(\tau,X_\tau)$.  Strong convergence of
$A_m$ gives $\xi_m\to\xi$ almost surely.  Bounded continuous
$\Hsp$-valued functions on $[0,T]\times\Hsp$ are dense in
$L^2(\Law(\xi);\Hsp)$ by regularity, Urysohn's lemma, and simple-function
approximation.  Given $\varepsilon>0$, choose such a
function $g$ with
\[
  \norm{V-g(\xi)}_{\overline L^2}<\varepsilon.
\]
Because $g(\xi_m)$ is $\cG_m$-measurable and conditional expectation is the
orthogonal projection onto $L^2(\cG_m;\Hsp)$,
\[
  \norm{V-M_m}_{\overline L^2}
  \le
  \norm{V-g(\xi_m)}_{\overline L^2}
  \le
  \varepsilon+
  \norm{g(\xi)-g(\xi_m)}_{\overline L^2}.
\]
The last term tends to zero by bounded convergence.  Taking the limsup and
then letting $\varepsilon\downarrow0$ proves the claim.

Finally, \eqref{eq:finite-target-factorization} yields
\begin{align}
  \norm{A_mM_m-V}_{\overline L^2}
  &\le
  C_A\norm{M_m-V}_{\overline L^2}
  +
  \norm{(A_m-I)V}_{\overline L^2}.
  \label{eq:target-two-terms}
\end{align}
The first term tends to zero.  The second tends to zero by pointwise strong
convergence and dominated convergence, since
$\norm{(A_m-I)V}\le(C_A+1)\norm V$ and $V\in\overline L^2$.
\end{proof}

\begin{corollary}[Quantitative target bounds]
\label{cor:quantitative-target}
\label{cor:orthogonal-target}
In addition to the hypotheses of Theorem~\ref{thm:target-consistency}, suppose
the chosen representative of $v^\star$ satisfies, for a measurable
$L:[0,T]\to[0,\infty)$,
\[
  \norm{v^\star(t,x)-v^\star(t,y)}_{\Hsp}
  \le L(t)\norm{x-y}_{\Hsp}
\]
for all $x,y$ and almost every $t$, and suppose
\[
  \int_0^TL(t)^2
  \Ee\norm{A_mX_t-X_t}_{\Hsp}^2\dt<\infty
\]
for every $m$.  Then
\begin{align}
  &\norm{
    v_m^\star(\tau,A_mX_\tau)-v^\star(\tau,X_\tau)
  }_{\overline L^2}
  \nonumber\\
  &\quad\le
  \norm{(A_m-I)v^\star(\tau,X_\tau)}_{\overline L^2}
  +
  C_A
  \left[
    \frac1T\int_0^TL(t)^2
    \Ee\norm{A_mX_t-X_t}_{\Hsp}^2\dt
  \right]^{1/2}.
  \label{eq:quantitative-target-general}
\end{align}
If $A_m=P_m$ is an orthogonal projection, the sharper decomposition
\begin{align}
 &\norm{
   v^\star(\tau,X_\tau)
   -
   v_m^\star(\tau,P_mX_\tau)
 }_{\overline L^2}^{2}
 \nonumber\\
 &\quad\le
 \norm{(I-P_m)v^\star(\tau,X_\tau)}_{\overline L^2}^{2}
 +
 \frac1T\int_0^TL(t)^2
 \Ee\norm{(I-P_m)X_t}_{\Hsp}^2\dt .
 \label{eq:orthogonal-target-bound}
\end{align}
\end{corollary}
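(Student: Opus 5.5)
We reuse the factorization \eqref{eq:finite-target-factorization} from the proof of Theorem~\ref{thm:target-consistency}: with $V=v^\star(\tau,X_\tau)$ and $M_m=\Ee[V\mid\cG_m]$, we have $v_m^\star(\tau,A_mX_\tau)=A_mM_m$. The qualitative $\varepsilon$-approximation by bounded continuous $g$ is then replaced by a single explicit $\cG_m$-measurable competitor, namely $v^\star(\tau,A_mX_\tau)$. This is legitimate because the chosen representative is Borel and Lipschitz in $x$ for almost every $t$, so its composition with $\xi_m=(\tau,A_mX_\tau)$ is $\cG_m$-measurable. It also lies in $\overline L^2$, since
\[
  \norm{v^\star(\tau,A_mX_\tau)}\le\norm{V}+L(\tau)\norm{A_mX_\tau-X_\tau},
\]
and the second term is square integrable by the standing integrability hypothesis.

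\textbf{General bound.} Split
\[
  A_mM_m-V=A_m(M_m-V)+(A_m-I)V,
\]
and use the triangle inequality together with $\norm{A_m}\le C_A$. Since conditional expectation is the $L^2$-orthogonal projection onto $L^2(\cG_m;\Hsp)$, we have
\[
  \norm{M_m-V}_{\overline L^2}\le\norm{V-v^\star(\tau,A_mX_\tau)}_{\overline L^2}.
\]
The Lipschitz hypothesis bounds the right-hand side pointwise by $L(\tau)\norm{A_mX_\tau-X_\tau}$. Squaring and integrating in the normalized measure gives the bracket in \eqref{eq:quantitative-target-general}. This step is routine.

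\textbf{Orthogonal case.} Here the aim is a Pythagorean identity instead of the triangle inequality. With $A_m=P_m$, write
\[
  V-P_mM_m=(I-P_m)V+P_m(V-M_m).
\]
The two summands are pointwise orthogonal in $\Hsp$, since one lies in $\operatorname{ran}(I-P_m)$ and the other in $\operatorname{ran}P_m$. Hence
\[
  \norm{V-P_mM_m}^2=\norm{(I-P_m)V}^2+\norm{P_m(V-M_m)}^2
\]
pointwise, and therefore also in $\overline L^2$. Then $\norm{P_m}\le1$ and the same projection bound as above give
\[
  \norm{P_m(V-M_m)}_{\overline L^2}^2\le\frac1T\int_0^T L(t)^2\,\Ee\norm{(I-P_m)X_t}^2\dt.
\]
This yields \eqref{eq:orthogonal-target-bound}, with the constant $C_A$ replaced by $1$ and the squares adding without cross terms.

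\textbf{Main obstacle.} The only real care point is the measurability and admissibility of the competitor $v^\star(\tau,A_mX_\tau)$. Because $v^\star$ is specified only $\dd t\,\mu_t$-a.e., while $A_mX_t$ may leave the support of $\mu_t$, the argument uses the stated global Lipschitz representative, evaluated off-support. One should also check that its joint Borel measurability in $(t,x)$ is what the hypothesis provides. If only Carath\'eodory measurability is given, first note that a Carath\'eodory function on $[0,T]\times\Hsp$ with $\Hsp$ separable is jointly Borel.
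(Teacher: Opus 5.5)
Your proposal is correct and follows the paper's proof essentially step for step. It uses the same explicit $\cG_m$-measurable competitor $v^\star(\tau,A_mX_\tau)$ in the best-approximation bound, the same two-term split from \eqref{eq:target-two-terms}, and the same pointwise orthogonal decomposition $(I-P_m)V+P_m(V-M_m)$ in the projection case. The only difference is cosmetic: you bound $\norm{P_m(V-M_m)}$ via $\norm{P_m}\le1$ and best approximation of $V$, whereas the paper uses $P_mM_m=\Ee[P_mV\mid\cG_m]$ and best approximation of $P_mV$ by $P_mv^\star(\tau,P_mX_\tau)$. Your explicit check that the competitor lies in $\overline L^2$ is a useful detail the paper leaves implicit.
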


\begin{proof}
With $V=v^\star(\tau,X_\tau)$ and $M_m=\Ee[V\mid\cG_m]$,
best approximation by the $\cG_m$-measurable variable
$v^\star(\tau,A_mX_\tau)$ gives
\[
 \norm{M_m-V}_{\overline L^2}
 \le
 \norm{v^\star(\tau,A_mX_\tau)-v^\star(\tau,X_\tau)}
      _{\overline L^2}.
\]
The Lipschitz bound and \eqref{eq:target-two-terms} prove
\eqref{eq:quantitative-target-general}.  If $A_m=P_m$, then
\[
  V-v_m^\star(\tau,P_mX_\tau)
  =
  (I-P_m)V+P_m(V-M_m).
\]
This is an orthogonal sum.  Since
$P_mM_m=\Ee[P_mV\mid\cG_m]$, best approximation by the
$\cG_m$-measurable variable $P_mv^\star(\tau,P_mX_\tau)$ gives
\begin{align*}
  \Ee_{\overline\Pp}\norm{P_m(V-M_m)}^2
  &\le
  \Ee_{\overline\Pp}
  \norm{P_m\{v^\star(\tau,X_\tau)
                  -v^\star(\tau,P_mX_\tau)\}}^2\\
  &\le
  \frac1T\int_0^TL(t)^2
  \Ee\norm{(I-P_m)X_t}^2\dt .
\end{align*}
This proves \eqref{eq:orthogonal-target-bound}.
\end{proof}

\subsection{Marginal-law convergence}

\begin{proposition}[Uniform convergence of reconstructed laws]
\label{prop:law-convergence}
Under Assumption~\ref{ass:interpolation} and
\eqref{eq:strong-reconstruction},
\begin{equation}
\label{eq:law-canonical-bound}
  \Wtwo(\mu_t^m,\mu_t)
  \le
  \bigl(\Ee\norm{A_mX_t-X_t}_{\Hsp}^2\bigr)^{1/2}
\end{equation}
for every $t$, and
\begin{equation}
\label{eq:uniform-law-convergence}
  \sup_{0\le t\le T}\Wtwo(\mu_t^m,\mu_t)
  \longrightarrow0.
\end{equation}
\end{proposition}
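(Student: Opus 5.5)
The pointwise bound \eqref{eq:law-canonical-bound} comes from a coupling argument. The pair $(A_mX_t,X_t)$ is a coupling of $\mu_t^m=(A_m)_\#\mu_t$ and $\mu_t$. Both marginals lie in $\cP_2(\Hsp)$, because $\norm{A_mX_t}\le C_A\norm{X_t}$ and, by \eqref{eq:path-sup-moment}, $\Ee\norm{X_t}^2<\infty$. Plugging this coupling into the definition of $\Wtwo$ gives the inequality directly.

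For the uniform statement \eqref{eq:uniform-law-convergence} I will bound the right-hand side of \eqref{eq:law-canonical-bound} by a quantity that no longer depends on $t$. Set
\[
  D_m:=\sup_{0\le t\le T}\norm{(A_m-I)X_t}_{\Hsp}.
\]
This is a supremum of a continuous function over $[0,T]$ along absolutely continuous paths. Since the path is continuous, the supremum can be taken over rational $t$, so $D_m$ is measurable. Then
\[
  \sup_t\Wtwo(\mu_t^m,\mu_t)\le(\Ee D_m^2)^{1/2},
\]
and it suffices to show $\Ee D_m^2\to0$.

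\textbf{Domination.} We have $D_m\le(C_A+1)\sup_t\norm{X_t}$. The right-hand side is square integrable by \eqref{eq:path-sup-moment}, so dominated convergence reduces the claim to $D_m\to0$ almost surely.

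\textbf{Pointwise convergence (main step).} Strong convergence $A_m\to I$ holds only pointwise, so I need uniformity over the set $K_\omega=\{X_t(\omega):t\in[0,T]\}$. For almost every $\omega$ this set is compact, being the continuous image of $[0,T]$. On compact sets, pointwise convergence of the equibounded family $A_m$ (with $\sup_m\norm{A_m}\le C_A$) is uniform. The standard $\varepsilon$-net argument shows this. Cover $K_\omega$ by finitely many balls of radius $\varepsilon$ centred at $x_1,\dots,x_N$. Then for every $x\in K_\omega$,
\[
  \norm{(A_m-I)x}\le(C_A+1)\varepsilon+\max_i\norm{(A_m-I)x_i}.
\]
Letting $m\to\infty$ and then $\varepsilon\downarrow0$ gives $D_m(\omega)\to0$.

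This equicontinuity-on-compacts step is the only nontrivial point. Everything else, namely measurability and domination, is routine once \eqref{eq:uniform-reconstruction-bound} and \eqref{eq:path-sup-moment} are in hand.
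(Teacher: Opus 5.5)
Your proposal is correct and follows the paper's proof essentially step for step: the canonical coupling $(A_mX_t,X_t)$ gives the pointwise bound, uniform convergence of the equibounded $A_m$ on the compact path image via a finite-net argument gives almost-sure convergence, and dominated convergence with the envelope $(C_A+1)\sup_t\norm{X_t}$ from \eqref{eq:path-sup-moment} finishes. Your extra remarks on the measurability of $D_m$ and on finite second moments are correct details that the paper leaves implicit.
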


\begin{proof}
The pair $(A_mX_t,X_t)$ is a coupling, proving
\eqref{eq:law-canonical-bound}.  Almost surely, the path image
$K_\omega=\{X_t(\omega):0\le t\le T\}$ is compact in $\Hsp$.  Strong
convergence of the uniformly bounded linear operators $A_m$ is uniform on
compact sets by a finite-net argument, so
\[
  \sup_t\norm{A_mX_t-X_t}\longrightarrow0
  \quad\text{almost surely}.
\]
It is dominated by $(C_A+1)\sup_t\norm{X_t}$, whose square is integrable by
\eqref{eq:path-sup-moment}.  Dominated convergence and
\eqref{eq:law-canonical-bound} prove \eqref{eq:uniform-law-convergence}.
\end{proof}

\subsection{Point sensors and the finite-information limit}

No finite observation can reconstruct the infinite-dimensional unit ball
uniformly: for any linear $S:\Hsp\to\R^d$ and arbitrary $R$,
\[
  \sup_{\norm f_{\Hsp}\le1}\norm{R(Sf)-f}_{\Hsp}\ge1.
\]
Indeed, take a unit $h\in\ker S$ and apply the triangle inequality to
$h,-h$, which share observation zero.  For fixed point sensors on an open
$D$, the same bound holds over the $L^2$-unit ball in $C_c^\infty(D)$ by
choosing a smooth bump supported away from every sensor.  Thus pointwise
convergence for each regular function is compatible with, but cannot replace,
a class-dependent estimate such as
\[
  \norm{A_mf-f}_{\Hsp}\le a_m\norm f_{\Vsp},
  \qquad a_m\downarrow0.
\]
For scattered sensors, the value of $a_m$ depends on fill distance, domain
geometry, smoothness, and the reconstruction scheme
\citep{wendland2005scattered,narcowich2005sobolev}.

\begin{corollary}[Point-sensor target consistency]
\label{cor:point-target}
In the regularity-space setting of
Section~\ref{subsec:point-sensor-setting}, suppose
\eqref{eq:regularity-reconstruction} holds and
$\sigma(\tau,S_mX_\tau)=\sigma(\tau,A_mX_\tau)$.  Define the continuum conditional
expectation in $\Vsp$ and the finite target in $\Hsp$.  Then
\[
  v_m^\star(\tau,A_mX_\tau)
  \longrightarrow
  v^\star(\tau,X_\tau)
  \quad\text{in }L^2(\overline\Pp;\Hsp).
\]
\end{corollary}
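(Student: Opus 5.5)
The plan is to rerun the proof of Theorem~\ref{thm:target-consistency} with the topologies split: conditioning and the approximation argument take place in $\Vsp$, while the error is measured in $\Hsp$. Let $\iota:\Vsp\hookrightarrow\Hsp$ be the continuous dense embedding. Set $V=v^\star(\tau,X_\tau)=\Ee[U_\tau\mid\cG]$, a $\Vsp$-valued element of $L^2(\overline\Pp;\Vsp)$ by conditional Jensen in $\Vsp$, exactly as in Theorem~\ref{thm:continuum}(ii). Here $\cG=\sigma(\tau,X_\tau)$ is generated by the $\Vsp$-valued path. Because $\Vsp$ and $\Hsp$ are Borel-isomorphic on $\Vsp$ via $\iota$ (Lusin--Souslin), $\cG$ does not depend on which topology is used. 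The same-information hypothesis gives $\cG_m:=\sigma(\tau,A_mX_\tau)=\sigma(\tau,S_mX_\tau)$, and since $A_mX_\tau$ is a continuous function of $X_\tau\in\Vsp$, we have $\cG_m\subseteq\cG$.

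The first step is the factorization. Since $A_m\in\mathcal L(\Vsp,\Hsp)$ commutes with Bochner conditional expectation (test against $\Hsp$-functionals and pull them back to $\Vsp^*$), the tower property gives
\[
  v_m^\star(\tau,A_mX_\tau)=\Ee[A_mU_\tau\mid\cG_m]=A_m\Ee[U_\tau\mid\cG_m]=A_mM_m,
  \qquad M_m=\Ee[V\mid\cG_m],
\]
where now $M_m$ is $\Vsp$-valued. This is \eqref{eq:finite-target-factorization} with $A_m$ viewed as an operator $\Vsp\to\Hsp$.

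The second step is to show $M_m\to V$ in $L^2(\overline\Pp;\Vsp)$, and this is the main obstacle. The earlier density argument used $A_mX_\tau\to X_\tau$ almost surely, but here $A_mX_\tau$ converges only in $\Hsp$, and $V$ lives in $\Vsp$. I would first approximate $V$ in $L^2(\Vsp)$ by $g(\tau,X_\tau)$, where $g:[0,T]\times\Hsp\to\Vsp$ is bounded and continuous for the $\Hsp$-topology on the state variable. This is possible because the Borel $\sigma$-algebra of $\Vsp$ equals the trace of that of $\Hsp$, so $\Law(\tau,X_\tau)$ can be viewed as a Radon measure on $[0,T]\times\Hsp$. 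Bounded $\Hsp$-continuous functions are dense in $L^2$ of this measure with values in $\Vsp$, by regularity and Urysohn on the metric space $[0,T]\times\Hsp$, followed by simple-function approximation. With such a $g$, the random variable $g(\tau,A_mX_\tau)$ is $\cG_m$-measurable. By \eqref{eq:regularity-reconstruction}, $A_mX_\tau\to X_\tau$ in $\Hsp$ almost surely, so $g(\tau,A_mX_\tau)\to g(\tau,X_\tau)$ boundedly in $\Vsp$. The best-approximation property of conditional expectation in $L^2(\Vsp)$ then gives $\limsup_m\norm{V-M_m}_{L^2(\Vsp)}\le\varepsilon$, and letting $\varepsilon\downarrow0$ proves the step.

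The third step concludes as in \eqref{eq:target-two-terms}:
\[
  \norm{A_mM_m-V}_{L^2(\Hsp)}\le\sup_k\norm{A_k}_{\mathcal L(\Vsp,\Hsp)}\norm{M_m-V}_{L^2(\Vsp)}+\norm{(A_m-\iota)V}_{L^2(\Hsp)}.
\]
The first term vanishes by the second step. The second term vanishes by dominated convergence: pointwise it tends to zero by \eqref{eq:regularity-reconstruction}, and it is dominated by $(\sup_k\norm{A_k}+\norm\iota)\norm V_{\Vsp}\in L^2$.
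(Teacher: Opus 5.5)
Your proposal is correct and follows the paper's proof essentially step for step. You use the same $\Vsp$-valued $V$ and $M_m=\Ee[V\mid\cG_m]$, the factorization $v_m^\star(\tau,A_mX_\tau)=A_mM_m$, $\Vsp$-valued approximants that are continuous for the $\Hsp$-topology on the state (the paper's choice $\xi=(\tau,\iota X_\tau)$), and the same two-term bound with $\sup_m\norm{A_m}_{\mathcal L(\Vsp,\Hsp)}$; you also make explicit why $\Hsp$-continuity is needed, namely that $A_mX_\tau$ converges only in $\Hsp$, a point the paper leaves implicit.
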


\begin{proof}
Let $\iota:\Vsp\hookrightarrow\Hsp$ denote the continuous injective
embedding.  Lusin--Souslin shows that viewing a $\Vsp$-valued variable in
$\Hsp$ does not change its sigma-algebra.  On the product space define
\[
  V
  :=
  \Ee_{\Vsp}[U_\tau\mid\cG]
  \in L^2(\overline\Pp;\Vsp),
  \qquad
  M_m
  :=
  \Ee_{\Vsp}[V\mid\cG_m].
\]
Since bounded linear maps commute with Bochner conditional expectation,
\[
  \iota V
  =
  \Ee_{\Hsp}[\iota U_\tau\mid\cG]
  =
  v^\star(\tau,X_\tau).
\]
Repeating the continuous-approximation argument in
Theorem~\ref{thm:target-consistency}, now with
$\xi_m=(\tau,A_mX_\tau)$, $\xi=(\tau,\iota X_\tau)$ and
$\Vsp$-valued approximants, gives $M_m\to V$ in
$L^2(\overline\Pp;\Vsp)$.  Bounded linear maps commute with Bochner
conditional expectation, so the tower property gives
\[
  v_m^\star(\tau,A_mX_\tau)
  =
  A_m\Ee_{\Vsp}[U_\tau\mid\cG_m]
  =
  A_mM_m.
\]
With $C_{VH}=\sup_m\norm{A_m}_{\mathcal L(\Vsp,\Hsp)}$,
\[
  \norm{A_mM_m-\iota V}_{L^2(\Hsp)}
  \le
  C_{VH}\norm{M_m-V}_{L^2(\Vsp)}
  +
  \norm{A_mV-\iota V}_{L^2(\Hsp)}.
\]
The first term vanishes by the preceding convergence and the second by
\eqref{eq:regularity-reconstruction} and dominated convergence.
\end{proof}

\section{From target consistency to flow consistency}
\label{sec:flow-consistency}

Proposition~\ref{prop:finite-continuity} already gives a measure-valued
finite-observation flow with the correct marginals.  Convergence of unique
deterministic flow maps is stronger and requires stability not supplied by
conditional expectation alone.

\begin{assumption}[Mesh-uniform population ODE stability]
\label{ass:finite-lipschitz}
The continuum regular-ODE assumption holds.  Every equivalence class
$v_m^\star$ also admits a specified Carath\'eodory representative on $\Hm$
with
\[
  \norm{v_m^\star(t,x)-v_m^\star(t,y)}_{\Hsp}
  \le L_m(t)\norm{x-y}_{\Hsp},
  \qquad
  \sup_m\int_0^TL_m(t)\dt\le\Lambda<\infty,
\]
together with an integrable linear-growth bound sufficient for global
existence.
\end{assumption}

\begin{remark}[This is a substantive assumption]
\label{rem:lipschitz-not-inherited}
Lipschitz regularity of $v^\star$ does not automatically imply uniform
Lipschitz regularity of $v_m^\star$.  Conditional expectation over unresolved
coordinates can create an irregular function of the observed coordinate.
Assumption~\ref{ass:finite-lipschitz} must therefore be verified for the
model at hand, not inferred from Corollary~\ref{cor:quantitative-target}.
\end{remark}

\begin{theorem}[Failure of the mesh-uniform Gr\"onwall condition]
\label{thm:lipschitz-counterexample}
There exist an interpolation on $\Hsp=\ell^2$, a continuum target
$v^\star(x)=Bx$ with $\norm B=1$, and the standard nested orthogonal
projections $P_m\to I$ strongly such that every finite target admits a
globally Lipschitz representative, but every choice
$\widetilde v_m$ of globally Lipschitz representatives satisfies
\begin{equation}
\label{eq:counterexample-lipschitz-divergence}
  \sup_m\int_0^T
  \Lip\!\left(\widetilde v_m(t,\cdot)\right)\dt
  =
  \infty
  \qquad(T>0).
\end{equation}
Thus even a linear, globally $1$-Lipschitz continuum velocity does not imply
Assumption~\ref{ass:finite-lipschitz}.  Nevertheless, for the canonical
representatives constructed below, the finite and continuum flow maps obey
\begin{equation}
\label{eq:counterexample-exact-flow}
  \Phi_t^m(P_mX_0)
  =
  P_m\Phi_t(X_0)
  \quad\text{almost surely for every }m\text{ and }t\in[0,T].
\end{equation}
Hence this construction violates a sufficient hypothesis but not the desired
flow-convergence conclusion.
\end{theorem}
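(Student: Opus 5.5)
The plan is to build the counterexample from a one-parameter random law whose support at every level $m$ is a smooth closed curve, with $v^\star$ a block rotation generator. The finite conditional expectation is then deterministic on the support; it is a function of the observed coordinates whose slope along the curve can be made as large as we like.

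\textbf{Construction.} Let $\Hsp=\ell^2$ with basis $(e_j)$. Let $B$ act on each pair $(e_{2k-1},e_{2k})$ as the rotation generator $Be_{2k-1}=e_{2k}$, $Be_{2k}=-e_{2k-1}$. Then $B$ is skew, $\norm B=1$, and $e^{tB}$ rotates every pair by angle $t$. Take $\Theta$ uniform on $[0,2\pi)$, choose $\rho_1=1$, $n_1=1$, and $\rho_k=1/k$, $n_k=k^2$ for $k\ge2$. Put
\[
  X_0=\sum_k\rho_k\bigl(\cos(n_k\Theta)e_{2k-1}+\sin(n_k\Theta)e_{2k}\bigr),
  \qquad X_t=e^{tB}X_0,\qquad U_t=BX_t .
\]
Then $\sum_k\rho_k^2<\infty$, so Assumption~\ref{ass:interpolation} holds. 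Since $U_t=BX_t$ is $\sigma(X_t)$-measurable, $v^\star(t,x)=Bx$, and Theorem~\ref{thm:deterministic-continuum-flow} applies with $L\equiv1$. Let $P_m$ be the projection onto $\operatorname{span}(e_1,\dots,e_m)$.

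\textbf{Finite targets.} For every $m\ge2$ the observation $P_mX_t$ contains pair $1$, namely $(\cos(\Theta+t),\sin(\Theta+t))$. Because $\tau$ is included in $\cG_m$, this determines $\Theta$. Hence $X_t$ is $\cG_m$-measurable and
\[
  v_m^\star(t,P_mX_t)=P_mBX_t .
\]
Two cases arise.
\begin{itemize}
  \item For even $m$, all observed pairs are complete. The target is $P_mBP_m$ on the support, which is $1$-Lipschitz.
  \item For odd $m=2k-1$, the last coordinate of the target is $-\rho_k\sin(n_k\Theta+t)$. With $\phi=\Theta+t$ the observed pair-$1$ angle, this equals $-\rho_k\sin(n_k(\phi-t)+t)$, a function of $\phi$ alone on the support.
\end{itemize}
The support $\Gamma_t^m$ of $\mu_t^m$ is the image of the circle under an injective smooth map, injective because pair $1$ embeds the circle. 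The field is smooth on $\Gamma_t^m$, and we extend it to a globally Lipschitz \emph{canonical representative} on $\Hm$. The plan is to use a Lipschitz retraction near the compact embedded curve combined with a cutoff, or more simply a McShane/Kirszbraun extension of the Lipschitz map on the compact curve. This gives existence of globally Lipschitz representatives at every level. (For $m=1$ the target is $-\,\Ee[x_2\mid x_1]$, which equals $0$ by the $\Theta\mapsto-\Theta$ symmetry, so it is trivial.)

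\textbf{Divergence of Lipschitz constants.} This is the step needing care, since the statement quantifies over \emph{every} Lipschitz representative. Fix $m=2k-1$ and let $\widetilde v_m(t,\cdot)$ be any Lipschitz representative. It equals $v_m^\star(t,\cdot)$ for $\mu_t^m$-a.e.\ point for a.e.\ $t$. Since $\mu_t^m$ is the pushforward of the uniform law on the circle, it has full support on $\Gamma_t^m$, so continuity forces equality everywhere on $\Gamma_t^m$. Now compare two nearby points of $\Gamma_t^m$ with angles $\phi,\phi+h$.
\begin{itemize}
  \item Their distance in $\Hm$ is at most $|h|\bigl(\sum_{j\le k}\rho_j^2n_j^2\bigr)^{1/2}$, which is of order $k^{3/2}|h|$ when $\rho_jn_j=j$.
  \item The last target coordinate changes by about $\rho_kn_k|h|=k|h|$ for suitable $\phi$.
\end{itemize}
This gives only $\Lip\gtrsim k^{-1/2}$, which does not diverge. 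I expect this to be the main obstacle.

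To fix it, I will choose growth so that one level dominates the earlier ones. Take $n_k$ growing super-exponentially with $\rho_kn_k\to\infty$, but make the earlier pairs contribute little: set $\rho_j n_j\le2^{-j}\rho_kn_k$ for $j<k$ by taking $\rho_kn_k=4^k$, $n_k=8^k$, $\rho_k=2^{-k}$, with pair $1$ fixed. The curve then has arclength derivative $O(\rho_{k-1}n_{k-1}+1)=O(4^{k-1})$ in the observed coordinates, while the target derivative is $\sim\rho_kn_k=4^k$. Hence $\Lip(\widetilde v_m(t,\cdot))\ge c\,4^{k}/4^{k-1}$ is only bounded below by a constant.

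To get divergence I will instead separate scales: take $\rho_kn_k/\bigl(\sum_{j<k}\rho_j^2n_j^2\bigr)^{1/2}\to\infty$, for example $\rho_k=2^{-k}$ and $n_k=2^{k}a_k$ with $a_k=k\,a_{k-1}$. This is compatible with $\sum_k\rho_k^2<\infty$ because $\rho_k$ is chosen independently. Then $\Lip\ge c\,k\to\infty$ uniformly in $t$, and integrating over $[0,T]$ gives \eqref{eq:counterexample-lipschitz-divergence}.

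\textbf{Exact flows.} On the support, $P_mX_t$ is absolutely continuous with derivative $P_mBX_t=v_m^\star(t,P_mX_t)$. The canonical representative agrees with this on $\Gamma_t^m$, so $t\mapsto P_mX_t$ solves the finite ODE from $P_mX_0$. The canonical field is globally Lipschitz, so the Carath\'eodory solution is unique, and therefore $\Phi_t^m(P_mX_0)=P_mX_t=P_m\Phi_t(X_0)$ almost surely. This proves \eqref{eq:counterexample-exact-flow}.
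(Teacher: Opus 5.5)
\textbf{Gap: the exact-flow identity fails at $m=1$.}
Your identification $v_m^\star(t,P_mX_t)=P_mBX_t$ needs the whole of pair~$1$ to be observed, so that $(\tau,P_mX_\tau)$ determines $\Theta$. That holds only for $m\ge2$. At $m=1$ you observe only $\cos(\Theta+t)$, and you yourself find $v_1^\star\equiv0$. Consequently $\Phi^1_t$ is the identity and
\[
  \Phi^1_t(P_1X_0)=\cos\Theta,
  \qquad
  P_1\Phi_t(X_0)=\cos(\Theta+t).
\]
These differ almost surely for every $t\in(0,T]\setminus2\pi\mathbb Z$. No choice of representative helps: $\mu_t^1$ has full support on $[-1,1]$, so any continuous representative vanishes there. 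Your construction as written therefore contradicts \eqref{eq:counterexample-exact-flow}, which is claimed for every $m$.

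The paper avoids this by a different mechanism. It uses the nilpotent shear $Bx=\sum_kx_{2k}e_{2k-1}$ with independent coordinates and a monotone distortion $f_k$. At every level, including $m=1$, the last observed coordinate $F_{k,t}(A_k)$ is strictly increasing in $A_k$. Hence $P_mU_t$ is $\cG_m$-measurable, and the blow-up is the slope $M_k/(1+tM_k)$ of the scalar map $g_{k,t}=f_k\circ F_{k,t}^{-1}$.

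You can repair your construction in the same spirit. Prepend a $B$-invariant coordinate carrying $\Theta\in[0,2\pi)$ itself, with $Be_1=0$ and the rotation pairs shifted by one index. Then $\Theta$, hence $X_t$, is $\cG_m$-measurable for all $m\ge1$, and $\norm B=1$ is unchanged. The curve speed at the chosen phase only gains an additive $1$ under the square root. Your divergence and exact-flow arguments then go through, with the split levels becoming even.

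\textbf{Presentation points.}
Your divergence argument is sound for $m\ge2$ with your final parameters $\rho_1=n_1=1$ and $\rho_kn_k=k!$ for $k\ge2$. However, two things should be fixed.
\begin{itemize}
  \item Discard the initial choice $\rho_k=1/k$, $n_k=k^2$ stated in your Construction, since you show it does not diverge.
  \item Justify dropping the $j=k$ term from the denominator. At a phase with $\sin(n_k\Theta+t)=0$, the observed coordinate $x_{2k-1}=\rho_k\cos(n_k\Theta+t)$ is stationary. So the curve moves at speed $(\sum_{j<k}\rho_j^2n_j^2)^{1/2}$ while the last target component moves at speed $\rho_kn_k$, which gives $\Lip\gtrsim k$ uniformly in $t$. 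Your first-attempt distance bound kept this term and, at a generic phase, it would swamp the ratio.
\end{itemize}
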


\begin{proof}
Let $(e_j)$ be the canonical basis,
$Bx=\sum_{k\ge1}x_{2k}e_{2k-1}$, $a_k=2^{-k}$, $M_k=2^k$, and
$f_k(a)=a_k\tanh(M_ka/a_k)$.  For independent
$Z_k\sim{\rm Unif}[-1,1]$, set
\[
  X_0
  =
  \sum_{k\ge1}
  \left\{
    a_kZ_ke_{2k-1}
    +
    f_k(a_kZ_k)e_{2k}
  \right\},
  \qquad
  X_t=(I+tB)X_0.
\]
Since $\abs{f_k}\le a_k$, the series is bounded in $\ell^2$.
Also $\norm B=1$, $B^2=0$, and $U_t=BX_0=BX_t$, hence
$v^\star(x)=Bx$.

Let $P_m$ be the standard coordinate projection.  At level $m=2k-1$,
write $A_k=a_kZ_k$ and $F_{k,t}(a)=a+tf_k(a)$.  Since $f_k'\ge0$,
$F_{k,t}$ is strictly increasing, and conditioning on the last observed
coordinate $F_{k,t}(A_k)$ gives, on the projected support,
\begin{align}
  v_{2k-1}^\star(t,z)
  &=
  \sum_{j<k}z_{2j}e_{2j-1}
  +
  g_{k,t}(z_{2k-1})e_{2k-1},
  \label{eq:counterexample-finite-target}\\
  g_{k,t}
  &:=
  f_k\circ F_{k,t}^{-1}.
  \nonumber
\end{align}
Composing $g_{k,t}$ with metric projection onto its compact support interval
gives a globally Lipschitz extension jointly continuous in $(t,z)$: the
interval endpoints and $F_{k,t}^{-1}$ depend continuously on $t$.
Even-level targets are linear restrictions, so every finite target has such
a representative.

For any globally Lipschitz representative $\widetilde v_{2k-1}$, Fubini and
the positive density of $F_{k,t}(A_k)$ imply that its last component agrees
almost everywhere with $g_{k,t}$ after fixing almost every earlier pair.
Continuity upgrades this to the whole support interval.  Hence, for almost
every $t$,
\[
  \Lip\!\left(\widetilde v_{2k-1}(t,\cdot)\right)
  \ge
  g_{k,t}'(0)
  =
  \frac{M_k}{1+tM_k}.
\]
Therefore
\[
  \int_0^T
  \Lip\!\left(\widetilde v_{2k-1}(t,\cdot)\right)\dt
  \ge
  \log(1+TM_k).
\]
This diverges with $k$, proving
\eqref{eq:counterexample-lipschitz-divergence}.

Finally $\Phi_t(x)=(I+tB)x$.  Under the canonical finite representatives,
the partially observed coordinate satisfies
\[
  z_{2k-1}(t)=F_{k,t}(A_k),
  \qquad
  \dot z_{2k-1}(t)
  =
  f_k(A_k)
  =
  g_{k,t}\!\left(z_{2k-1}(t)\right).
\]
The complete-pair coordinates are immediate, so for every $m$,
\[
  \Phi_t^m(P_mX_0)=P_mX_t=P_m\Phi_t(X_0)
  \quad\text{almost surely}.
\]
\end{proof}

\begin{remark}[Scope of the counterexample]
\label{rem:counterexample-scope}
The interpolation above is the deterministic transport
$X_1=(I+B)X_0$, not an interpolation between independent endpoints.  Its
steep finite targets arise when an observation splits a coupled coordinate
pair without any independent-endpoint smoothing.  The theorem does not show
that the same Lipschitz divergence persists under an independent Gaussian
endpoint; that requires a different construction.
\end{remark}

\begin{theorem}[Population-law flow convergence]
\label{thm:population-flow-convergence}
Suppose the interpolation, continuum regularity, and mesh-uniform population
stability assumptions hold.  Let
\[
  Y_t=\Phi_t(X_0),\qquad
  \dot Y_t^m=v_m^\star(t,Y_t^m),\quad Y_0^m=A_mX_0.
\]
Define the three error terms
\begin{align*}
  r_{m,0}
  &:=
  \norm{(A_m-I)X_0}_{L^2(\Pp;\Hsp)},\\
  r_{m,\mathrm{path}}
  &:=
  \int_0^T
  L_m(t)\norm{(A_m-I)Y_t}_{L^2(\Pp;\Hsp)}
  \dt,\\
  r_{m,\mathrm{vel}}
  &:=
  \int_0^T
  \norm{v_m^\star(t,A_mY_t)-v^\star(t,Y_t)}
  _{L^2(\Pp;\Hsp)}
  \dt.
\end{align*}
Then
\begin{equation}
  \left(
    \Ee\sup_{0\le t\le T}\norm{Y_t^m-Y_t}_{\Hsp}^2
  \right)^{1/2}
  \le
  e^\Lambda
  \bigl(r_{m,0}+r_{m,\mathrm{path}}+r_{m,\mathrm{vel}}\bigr).
  \label{eq:strong-flow-bound}
\end{equation}
Consequently,
\[
  Y^m\longrightarrow Y
  \quad\text{in }L^2(\Omega;C([0,T];\Hsp)).
\]
Moreover, $\Law(Y_t^m)=\mu_t^m$ and $\Law(Y_t)=\mu_t$ for every $t$.
\end{theorem}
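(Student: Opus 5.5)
The strategy is a Grönwall argument on the difference $E_t=Y_t^m-Y_t$, run directly in $L^2(\Pp;\Hsp)$-norm and integrated in time. After that we identify the laws using Theorem~\ref{thm:deterministic-continuum-flow} and Proposition~\ref{prop:finite-continuity}.

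Both $Y^m$ and $Y$ are absolutely continuous, so for every $t$
\[
  E_t=(A_m-I)X_0+\int_0^t\bigl[v_m^\star(r,Y_r^m)-v^\star(r,Y_r)\bigr]\dd r .
\]
We insert the intermediate point $A_mY_r$ and split the integrand:
\[
  v_m^\star(r,Y_r^m)-v_m^\star(r,A_mY_r)
  +v_m^\star(r,A_mY_r)-v^\star(r,Y_r).
\]
Here $A_mY_r\in\Hm$, so $v_m^\star$ may be evaluated there.

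\textbf{Lipschitz term.} By the Lipschitz bound of Assumption~\ref{ass:finite-lipschitz},
\[
  \norm{v_m^\star(r,Y^m_r)-v_m^\star(r,A_mY_r)}
  \le L_m(r)\bigl(\norm{E_r}+\norm{(A_m-I)Y_r}\bigr).
\]
The first summand uses $Y^m_r-A_mY_r=E_r+(Y_r-A_mY_r)$.

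\textbf{Pathwise inequality.} Set $D_t=\sup_{s\le t}\norm{E_s}$. Pathwise,
\[
  D_t\le\norm{(A_m-I)X_0}+\int_0^tL_m\norm{(A_m-I)Y_r}\dd r+\int_0^t\norm{v_m^\star(r,A_mY_r)-v^\star(r,Y_r)}\dd r+\int_0^tL_m(r)D_r\dd r .
\]

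\textbf{Passing to $L^2$.} We take $L^2(\Pp)$ norms and apply Minkowski's integral inequality, which moves the norm inside the time integrals. This yields
\[
  \norm{D_t}_{L^2}\le r_{m,0}+r_{m,\mathrm{path}}+r_{m,\mathrm{vel}}+\int_0^tL_m(r)\norm{D_r}_{L^2}\dd r .
\]
Grönwall's inequality with $\int_0^T L_m\le\Lambda$ then gives \eqref{eq:strong-flow-bound}.

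\textbf{Main obstacle: measurability and finiteness.} Grönwall needs $\norm{D_t}_{L^2}<\infty$ and a measurable function of $t$. We get these from the linear-growth bound together with the Lipschitz/growth assumptions: they make $\sup_t\norm{Y^m_t}$ and $\sup_t\norm{Y_t}$ square integrable. The starting points are $Y_0^m=A_mX_0$ and $Y_0=X_0\in L^2$, and the pathwise Grönwall estimate for linear growth transfers this integrability to the whole path.

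\textbf{Convergence of the error terms.} This is the second delicate point.
\begin{itemize}
\item[] $r_{m,0}\to0$ by strong convergence and dominated convergence.
\item[] For $r_{m,\mathrm{path}}$, we bound it by $\Lambda\,\norm{\sup_t\norm{(A_m-I)Y_t}}_{L^2}$. The path $\{Y_t\}$ is compact, so convergence is uniform on compacts as in Proposition~\ref{prop:law-convergence}, and domination by $(C_A+1)\sup_t\norm{Y_t}$ gives convergence to zero.
\item[] $r_{m,\mathrm{vel}}$ is the genuinely new term. First we identify laws, so that $r_{m,\mathrm{vel}}$ becomes a target-consistency quantity.
\end{itemize}

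\textbf{Identification of laws.} Theorem~\ref{thm:deterministic-continuum-flow} gives $\Law(Y_t)=\mu_t$, and more strongly $\Law(Y)=\Law(X|_{[0,T]})$ on path space. Indeed, both are characteristics of $v^\star$ with the same initial law, and uniqueness of the deterministic flow forces the superposition measure of Corollary~\ref{cor:superposition} to be the push-forward of $\mu_0$ under $\Phi$. The same disintegration argument applies to $\eta_m$ from Proposition~\ref{prop:finite-continuity}(iii), now using uniqueness under Assumption~\ref{ass:finite-lipschitz}. This gives $\Law(Y^m_t)=\mu^m_t$.

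\textbf{Bounding $r_{m,\mathrm{vel}}$.} Since $(A_mY_t,Y_t)$ has the same joint law as $(A_mX_t,X_t)$, for each $t$
\[
  \norm{v_m^\star(t,A_mY_t)-v^\star(t,Y_t)}_{L^2}
  =\norm{v_m^\star(t,A_mX_t)-v^\star(t,X_t)}_{L^2}.
\]
One caveat: the representatives must be the specified Carathéodory ones, so that evaluating on the a.e.-defined class is harmless. This holds because the laws agree for almost every $t$. Cauchy–Schwarz in $t$ bounds $r_{m,\mathrm{vel}}\le T\,\norm{\cdot}_{\overline L^2}$, which tends to zero by Theorem~\ref{thm:target-consistency}.

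Putting the three limits together with \eqref{eq:strong-flow-bound} gives $Y^m\to Y$ in $L^2(\Omega;C([0,T];\Hsp))$.
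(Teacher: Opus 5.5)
Your proof is correct and is essentially the paper's proof: you insert $A_mY_r$, use the mesh-uniform Lipschitz bound with Gr\"onwall and Minkowski, identify the laws by superposition plus ODE uniqueness, and reduce $r_{m,\mathrm{vel}}$ to Theorem~\ref{thm:target-consistency} through equality of one-time marginals. The only difference is that you apply Minkowski before Gr\"onwall, which is why you need the square-integrability check that the paper's pathwise-Gr\"onwall-first order avoids.

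One aside should be deleted: the claim $\Law(Y)=\Law(X|_{[0,T]})$ on path space is false in general, because $X$ is not a characteristic of $v^\star$ ($\dot X_t=U_t$, not $v^\star(t,X_t)$), so $\Law(Y)=\eta\neq\Law(X|_{[0,T]})$. For example, in \eqref{eq:gaussian-velocity} $Y_t=\sqrt{q(t)}X_0$, whereas $X_t$ involves the independent $X_1$. Nothing depends on this aside, since your argument only uses $\Law(Y_t)=\mu_t$ for each fixed $t$, which is correct.
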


\begin{proof}
For almost every sample path, add and subtract
$v_m^\star(t,A_mY_t)$ in the two integral equations.  Assumption
\ref{ass:finite-lipschitz} gives
\begin{align*}
  \norm{Y_t^m-Y_t}
  &\le \norm{A_mX_0-X_0}\\
  &\quad+
  \int_0^tL_m(r)\norm{Y_r^m-A_mY_r}\dd r\\
  &\quad+
  \int_0^t
  \norm{v_m^\star(r,A_mY_r)-v^\star(r,Y_r)}\dd r.
\end{align*}
Because
\[
  \norm{Y_r^m-A_mY_r}
  \le
  \norm{Y_r^m-Y_r}+\norm{(I-A_m)Y_r},
\]
Gr\"onwall's inequality, followed by Minkowski's integral inequality in
$L^2(\Omega)$, proves \eqref{eq:strong-flow-bound}.

Theorem~\ref{thm:deterministic-continuum-flow} gives
$\Law(Y_t)=\mu_t$.  Therefore the joint law of $(t,Y_t)$ under normalized
Lebesgue time and $\Pp$ is the same as that of $(t,X_t)$.  Theorem
\ref{thm:target-consistency} consequently implies
\[
  \frac1T\int_0^T
  \Ee\norm{
    v_m^\star(t,A_mY_t)-v^\star(t,Y_t)
  }^2\dt\longrightarrow0.
\]
Cauchy--Schwarz makes the time integral in
\eqref{eq:strong-flow-bound} tend to zero, while strong convergence and
dominated convergence handle the initial term.  They also give
\[
  \norm{\sup_{t\le T}\norm{(A_m-I)Y_t}}_{L^2(\Pp)}
  \longrightarrow0:
\]
each continuous path has compact image, the convergence $A_m\to I$ is uniform
on compact sets, and the ODE growth estimate makes
$\sup_{t\le T}\norm{Y_t}$ square integrable.  Hence the middle term in
\eqref{eq:strong-flow-bound} is bounded by
$\Lambda\norm{\sup_t\norm{(A_m-I)Y_t}}_{L^2}$ and also tends to zero.

Finally, Proposition~\ref{prop:finite-continuity} supplies a superposition
measure for $(\mu_t^m,v_m^\star)$.  ODE uniqueness under
Assumption~\ref{ass:finite-lipschitz}, followed by the disintegration
argument from Theorem~\ref{thm:deterministic-continuum-flow}, shows that
$\mu_t^m$ is the pushforward of $\mu_0^m$ by the finite flow.  This is exactly
the law of $Y_t^m$.
\end{proof}

If the initial and uniform reconstructed-path errors are at most
$\epsilon_m$, and the normalized squared target error is at most
$\epsilon_m^2$, then \eqref{eq:strong-flow-bound} and Cauchy--Schwarz give
\[
  \left(\Ee\sup_{t\le T}\norm{Y_t^m-Y_t}^2\right)^{1/2}
  \le e^\Lambda(1+\Lambda+T)\epsilon_m.
\]

The theorem gives the strongest pathwise mode of convergence considered in
this paper.  When Assumption~\ref{ass:finite-lipschitz} is unavailable,
Proposition~\ref{prop:law-convergence} still gives convergence of the
reconstructed marginal curve, and the learned-law argument in
Section~\ref{sec:end-to-end} will require stability only of the learned field,
not uniqueness of the population ODE.

\section{Operator approximation and mesh-uniform learning}
\label{sec:learning}

At level $m$, training uses i.i.d.\ copies of
$Z_m=(\tau,A_mX_\tau)$ and $W_m=A_mU_\tau$, with
$\tau\sim\operatorname{Unif}[0,T]$.  For a field class $\cF_{m,p}$, define
\begin{equation}
\label{eq:approximation-error}
  a_{m,p}
  =
  \inf_{f\in\cF_{m,p}}
  \{\cL_m(f)-\cL_m(v_m^\star)\}
  =
  \inf_{f\in\cF_{m,p}}
  \Ee\norm{f(Z_m)-v_m^\star(Z_m)}^2.
\end{equation}

We call $\widehat v_m\in\cF_{m,p}$ an
$\varepsilon_{\rm opt}$-approximate empirical risk minimizer if
\begin{equation}
\label{eq:approximate-erm}
  \widehat\cL_{m,n}(\widehat v_m)
  \le\inf_{f\in\cF_{m,p}}\widehat\cL_{m,n}(f)
  +\varepsilon_{\rm opt}.
\end{equation}

\subsection{Mesh-uniform statistical control}

\begin{assumption}[Uniform parametrized class]
\label{ass:parametric-class}
For every $m$, let
$\cF_{m,p}=\{f_{\theta,m}:\theta\in\Theta\}$ with nonempty
$\Theta\subseteq\{\theta\in\R^p:\norm\theta_2\le R\}$.  There are constants
$B,G\in[0,\infty)$, independent of $m$, such that, on a single event of
probability one, the following bounds hold simultaneously for all
$\theta,\theta'\in\Theta$:
\[
  \norm{f_{\theta,m}(Z_m)}_{\Hsp}\le B,\quad
  \norm{W_m}_{\Hsp}\le B,\quad
  \norm{f_{\theta,m}(Z_m)-f_{\theta',m}(Z_m)}_{\Hsp}
  \le G\norm{\theta-\theta'}_2.
\]
\end{assumption}

Here $B$ is the common output--target magnitude envelope and $G$ is
parameter sensitivity.  The assumption covers bounded or clipped fields;
untruncated Gaussian data require moment-based or localized bounds.  Proposition
\ref{prop:gaussian-clipping} quantifies the clipping cost.

\subsection{Verification for a quadrature neural operator}

Let $x_{m,1},\ldots,x_{m,d_m}$ have positive weights summing to one, and set
$\norm y_m^2=\sum_jw_{m,j}\norm{y_j}_2^2$.  Consider
\begin{align}
  h_i^0&=h^0(t,x_{m,i},x_i),\nonumber\\
  h_i^{\ell+1}
  &=
  \sigma_\ell\!\left(
    B_{\ell,\theta}h_i^\ell+
    \sum_jw_{m,j}
    \kappa_{\ell,\theta}(x_{m,i},x_{m,j},t)h_j^\ell+
    b_{\ell,\theta}(x_{m,i},t)
  \right),\label{eq:concrete-qno}\\
  f_{\theta,m}(t,x)_i&=C_\theta h_i^L.\nonumber
\end{align}

Let $\mathcal Y_m=(\R^q)^{d_m}$ and let
$\iota_m:\mathcal Y_m\to\Hm$ be a linear isomorphism realizing nodal vectors.
To place the
nodal recurrence in the same $\Hsp$ norm as the risk, assume
\begin{equation}
\label{eq:qno-mass-stability}
  0<c_{\rm mass}\le C_{\rm mass}<\infty,
  \qquad
  c_{\rm mass}\norm y_m
  \le\norm{\iota_my}_{\Hsp}
  \le C_{\rm mass}\norm y_m
  \quad(y\in\mathcal Y_m),
\end{equation}
independently of $m$.  The components in \eqref{eq:concrete-qno} define
$\mathbf f_{\theta,m}$, with
$f_{\theta,m}(t,\iota_mx)=\iota_m\mathbf f_{\theta,m}(t,x)$ and
$\mathbf W_m=\iota_m^{-1}W_m$.  Condition
\eqref{eq:qno-mass-stability} is exact
($c_{\rm mass}=C_{\rm mass}=1$) for cellwise realizations with cell-mass
weights in the abstract coefficient/local-average setting, and is the usual
stable finite-element mass-matrix equivalence.

\begin{theorem}[Uniform operator constants]
\label{thm:qno-uniform-constants}
Assume $\norm{h^0}_m\le H_0$ almost surely for a constant independent of
$m$, and let $h^0$ be independent of $\theta$.  For
$\ell<L$, suppose
\begin{align*}
 \norm{B_{\ell,\theta}}_{\rm op}&\le\overline B_\ell,
 &\norm{B_{\ell,\theta}-B_{\ell,\theta'}}_{\rm op}
 &\le\beta_\ell\norm{\theta-\theta'}_2,\\
 \sup_{x,y,t}\norm{\kappa_{\ell,\theta}}_{\rm op}
 &\le\overline K_\ell,
 &\sup_{x,y,t}\norm{\kappa_{\ell,\theta}-\kappa_{\ell,\theta'}}_{\rm op}
 &\le\gamma_\ell\norm{\theta-\theta'}_2,\\
 \sup_{x,t}\norm{b_{\ell,\theta}-b_{\ell,\theta'}}_2
 &\le\eta_\ell\norm{\theta-\theta'}_2.
\end{align*}
Let $\sigma_\ell$ be globally $s_\ell$-Lipschitz.  Starting from $H_0$,
suppose that, for every $\ell<L$, one of the following two envelope
conditions holds:
\begin{enumerate}[label=(\roman*),leftmargin=2.2em]
  \item $\sup_z\norm{\sigma_\ell(z)}_2\le H_{\ell+1}$;
  \item for some $A_\ell<\infty$,
  \[
    \sup_{\theta\in\Theta,x,t}
    \norm{b_{\ell,\theta}(x,t)}_2\le A_\ell,
  \]
  and, with $c_{\sigma,\ell}=\norm{\sigma_\ell(0)}_2$,
  \begin{equation}
  \label{eq:qno-magnitude-recurrence}
    H_{\ell+1}
    =
    c_{\sigma,\ell}
    +
    s_\ell\{
      (\overline B_\ell+\overline K_\ell)H_\ell+A_\ell
    \}.
  \end{equation}
\end{enumerate}
Assume also
$\norm{C_\theta}_{\rm op}\le\overline C$ and
$\norm{C_\theta-C_{\theta'}}_{\rm op}
\le\chi\norm{\theta-\theta'}_2$.
With $D_0=0$ and
\begin{equation}
\label{eq:qno-parameter-recurrence}
  D_{\ell+1}
  =
  s_\ell\{
    (\overline B_\ell+\overline K_\ell)D_\ell
    +(\beta_\ell+\gamma_\ell)H_\ell+\eta_\ell
  \},
\end{equation}
one has, uniformly in $m$,
\begin{align}
\label{eq:qno-output-bound}
  \norm{f_{\theta,m}(Z_m)}_{\Hsp}
  &\le C_{\rm mass}\overline C H_L,\\
\label{eq:qno-parameter-bound}
  \norm{f_{\theta,m}(Z_m)-f_{\theta',m}(Z_m)}_{\Hsp}
  &\le
  C_{\rm mass}(\overline C D_L+\chi H_L)
  \norm{\theta-\theta'}_2.
\end{align}
Hence Assumption~\ref{ass:parametric-class} holds with
\[
  B=C_{\rm mass}\max\{M_U,\overline C H_L\},
  \qquad
  G=C_{\rm mass}(\overline C D_L+\chi H_L),
\]
whenever $\norm{\mathbf W_m}_m\le M_U$ almost surely.

If, in nodal coordinates,
$\norm{h^0(t,x)-h^0(t,x')}_m\le J_0\norm{x-x'}_m$ and
$J_{\ell+1}=s_\ell(\overline B_\ell+\overline K_\ell)J_\ell$, then
\begin{equation}
\label{eq:qno-input-lipschitz}
  \norm{f_{\theta,m}(t,z)-f_{\theta,m}(t,z')}_{\Hsp}
  \le
  \frac{C_{\rm mass}}{c_{\rm mass}}\,
  \overline C J_L\norm{z-z'}_{\Hsp},
  \qquad z,z'\in\Hm.
\end{equation}
\end{theorem}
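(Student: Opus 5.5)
The plan is a layerwise induction in the discrete weighted norm $\norm{\cdot}_m$, transferred to $\Hsp$ only at the end through the mass-stability bounds \eqref{eq:qno-mass-stability}. Write $h^\ell=(h^\ell_i)_i\in(\R^{q_\ell})^{d_m}$ and define the layer map
\[
  \Psi_{\ell,\theta}(h)_i
  =
  B_{\ell,\theta}h_i
  +\sum_jw_{m,j}\kappa_{\ell,\theta}(x_{m,i},x_{m,j},t)h_j
  +b_{\ell,\theta}(x_{m,i},t).
\]
The first step is a single lemma. Because the weights are positive and sum to one, the quadrature term is an averaging operator, and Jensen (or Cauchy--Schwarz in the weighted sum) gives
\[
  \Bigl\|\sum_jw_{m,j}\kappa(x_{m,i},x_{m,j},t)h_j\Bigr\|_2
  \le \overline K\sum_jw_{m,j}\norm{h_j}_2
  \le \overline K\norm h_m
\]
for every $i$. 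Hence this term has $\norm{\cdot}_m$-norm at most $\overline K\norm h_m$, with no dependence on $d_m$. The pointwise term $B_{\ell,\theta}h_i$ has norm at most $\overline B\norm h_m$. A bias bounded pointwise by $A_\ell$ has $\norm{\cdot}_m$-norm at most $A_\ell$, again because the weights sum to one. This weight normalization is the single fact that makes every constant sensor-independent.

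\emph{Magnitude bounds.} I will show $\norm{h^\ell}_m\le H_\ell$ by induction.
\begin{itemize}
\item In case (i), $\norm{\sigma_\ell(\cdot)}_2\le H_{\ell+1}$ pointwise, and averaging gives $\norm{h^{\ell+1}}_m\le H_{\ell+1}$.
\item In case (ii), use $\norm{\sigma_\ell(z)}_2\le c_{\sigma,\ell}+s_\ell\norm z_2$ nodewise, then the weighted triangle inequality and the lemma. These yield exactly the recurrence \eqref{eq:qno-magnitude-recurrence}.
\end{itemize}
For the output, $\norm{\mathbf f_{\theta,m}}_m=\norm{C_\theta h^L}_m\le\overline C H_L$. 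The upper mass bound then gives \eqref{eq:qno-output-bound}.

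\emph{Parameter sensitivity.} Let $\Delta^\ell=h^\ell_\theta-h^\ell_{\theta'}$; then $\Delta^0=0$ because $h^0$ does not depend on $\theta$. By the Lipschitz property of $\sigma_\ell$ applied nodewise,
\[
  \norm{\Delta^{\ell+1}}_m\le s_\ell\norm{\Psi_{\ell,\theta}(h_\theta)-\Psi_{\ell,\theta'}(h_{\theta'})}_m.
\]
Split the right-hand side by adding and subtracting $\Psi_{\ell,\theta}(h_{\theta'})$:
\begin{itemize}
\item the same-parameter difference is at most $(\overline B_\ell+\overline K_\ell)\norm{\Delta^\ell}_m$ by the lemma;
\item the same-state difference is at most $(\beta_\ell+\gamma_\ell)H_\ell+\eta_\ell$, times $\norm{\theta-\theta'}_2$, by the lemma applied to the difference kernels together with the magnitude bound $\norm{h_{\theta'}^\ell}_m\le H_\ell$.
\end{itemize}
This reproduces \eqref{eq:qno-parameter-recurrence}. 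For the readout, $C_\theta h_\theta^L-C_{\theta'}h_{\theta'}^L$ is bounded by $\overline C D_L+\chi H_L$, times $\norm{\theta-\theta'}_2$, and $C_{\rm mass}$ transfers this to \eqref{eq:qno-parameter-bound}. All bounds hold for every realization, so they hold on one almost-sure event uniformly in $\theta,\theta'$. The target bound follows from $\norm{W_m}_\Hsp\le C_{\rm mass}\norm{\mathbf W_m}_m\le C_{\rm mass}M_U$, which gives the stated $B$ and $G$.

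\emph{Input Lipschitz bound.} The same argument with $\theta$ fixed works: the bias cancels and the state difference propagates with factor $s_\ell(\overline B_\ell+\overline K_\ell)$, so the nodal Lipschitz constant is $\overline C J_L$. To pass to $\Hsp$, write $z=\iota_mx$ and $z'=\iota_mx'$. Then
\[
  \norm{f(t,z)-f(t,z')}_\Hsp
  \le C_{\rm mass}\overline CJ_L\norm{x-x'}_m
  \le \frac{C_{\rm mass}}{c_{\rm mass}}\,\overline CJ_L\norm{z-z'}_\Hsp,
\]
using the lower mass bound in the last step.

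The main obstacle is the averaging lemma for the kernel term, specifically getting a constant $\overline K$ rather than $\overline K\sqrt{d_m}$. I will handle it with weighted Cauchy--Schwarz and $\sum_jw_{m,j}=1$, then square and average over $i$. The second delicate point is the bookkeeping in case (ii): I must use the uniform bias bound $A_\ell$ rather than bias Lipschitzness, so that the magnitude recurrence is affine and closes.
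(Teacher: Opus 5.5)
Your proposal is correct and follows the paper's proof essentially step for step. The shared steps are the normalized-weight Jensen bound on the quadrature term (constant $\overline K_\ell$ with no $d_m$ factor), layerwise induction for the magnitude recurrence under both envelope conditions, the add-and-subtract split giving the $D_\ell$ recurrence, and transfer to $\Hsp$ through the upper mass bound for outputs and both mass bounds for the input-Lipschitz estimate.
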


\begin{proof}
Normalized weights and Jensen give
\[
  \norm{\textstyle\sum_jw_{m,j}\kappa_{ij}r_j}_2
  \le
  \overline K_\ell
  (\textstyle\sum_jw_{m,j}\norm{r_j}_2^2)^{1/2}.
\]
Under envelope condition~(i), $\norm{h^{\ell+1}}_m\le H_{\ell+1}$
immediately.  Under condition~(ii), the same estimate gives
\[
  \norm{q^\ell}_m
  \le
  (\overline B_\ell+\overline K_\ell)\norm{h^\ell}_m
  +A_\ell.
\]
Thus
\[
  \norm{h^{\ell+1}}_m
  \le
  c_{\sigma,\ell}+s_\ell\norm{q^\ell}_m
  \le H_{\ell+1},
\]
by \eqref{eq:qno-magnitude-recurrence}.

For parameters $\theta,\theta'$, the preactivation difference is bounded by
\[
  \{(\overline B_\ell+\overline K_\ell)D_\ell
    +(\beta_\ell+\gamma_\ell)H_\ell+\eta_\ell\}
  \norm{\theta-\theta'}_2.
\]
Applying the activation and iterating proves
\eqref{eq:qno-parameter-recurrence}; the output layer gives the corresponding
nodal bounds.
The same argument with fixed parameters gives the $J_\ell$ recurrence.
Finally, the upper mass bound transfers output and parameter estimates to
$\Hsp$, while both mass bounds give
\[
  \norm{f_{\theta,m}(t,z)-f_{\theta,m}(t,z')}_{\Hsp}
  \le
  C_{\rm mass}\overline C J_L
  \norm{\iota_m^{-1}(z-z')}_m
  \le
  \frac{C_{\rm mass}}{c_{\rm mass}}\overline C J_L
  \norm{z-z'}_{\Hsp}.
\]
\end{proof}

For scalar activations applied componentwise, ReLU has $s_\ell=1$, while
the exact GeLU $z\mapsto z\Phi(z)$ and SiLU
$z\mapsto z/(1+e^{-z})$ have bounded scalar derivatives and hence finite
global Lipschitz constants.  Differentiability everywhere is not required;
the theorem uses the Lipschitz inequality itself.

\paragraph{Affine dictionaries.}
Let $T_{\ell,\theta}=T_\ell^0+\sum_{r\le p}\theta_rT_\ell^r$ be affine.
Using the relevant uniform operator or Euclidean norm, set
$\Gamma_{T,\ell}=(\sum_{r\le p}\norm{T_\ell^r}_*^2)^{1/2}$.
Cauchy--Schwarz gives
\[
  \sup_\theta\norm{T_{\ell,\theta}}_*
  \le\norm{T_\ell^0}_*+R\Gamma_{T,\ell},
  \qquad
  \norm{T_{\ell,\theta}-T_{\ell,\theta'}}_*
  \le\Gamma_{T,\ell}\norm{\theta-\theta'}_2.
\]
Thus the theorem's $B,\kappa,b,C$ constants are sensor-independent.  With
$M_U,\overline C,\chi=O(1)$, fixed-depth order-one atoms give
$B_p=O(1)$ and $G_p=O(p^{L/2})$ with bounded activations, and
$B_p,G_p=O(p^{L/2})$ under magnitude recurrence; if instead
$\chi=O(\sqrt p)$, then $G_p=O(p^{(L+1)/2})$.  Normalized dictionaries restore
$B_p,G_p=O(1)$.  The weight normalization $\sum_jw_{m,j}=1$ is essential
to avoid sensor-count factors.

\begin{theorem}[Mesh-uniform fast oracle bound]
\label{thm:fast-oracle}
Under Assumption~\ref{ass:parametric-class}, suppose
$\widehat v_m\in\cF_{m,p}$ satisfies the approximate ERM condition
\eqref{eq:approximate-erm}.  For $n\ge1$ and $0<\delta<1$, set
\[
  q_{p,n,\delta}
  =
  p\log(1+2n)+\log(2/\delta).
\]
Then, with probability at least $1-\delta$,
\begin{equation}
\label{eq:fast-oracle}
  \cL_m(\widehat v_m)-\cL_m(v_m^\star)
  \le
  2a_{m,p}
  +\frac32\varepsilon_{\rm opt}
  +\frac{10BGR}{n}
  +\frac{80B^2q_{p,n,\delta}}{n}.
\end{equation}
No convexity, well-specification, or attainment of the infimum defining
$a_{m,p}$ is required.  In the realizable case
$v_m^\star\in\cF_{m,p}$, so that $a_{m,p}=0$, the last constant improves:
\begin{equation}
\label{eq:fast-oracle-realizable}
  \cL_m(\widehat v_m)-\cL_m(v_m^\star)
  \le
  \frac32\varepsilon_{\rm opt}
  +\frac{10BGR}{n}
  +\frac{40B^2q_{p,n,\delta}}{n}.
\end{equation}
\end{theorem}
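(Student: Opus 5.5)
We follow the standard localized-Bernstein route for squared loss over a finite-dimensional Lipschitz-parametrized class, applied to excess losses measured against the target $v_m^\star$. For $f\in\cF_{m,p}$, define the excess loss
\[
  \ell_f(Z_m,W_m)=\norm{f(Z_m)-W_m}^2-\norm{v_m^\star(Z_m)-W_m}^2 .
\]
By \eqref{eq:finite-pythagoras}, $\Ee\ell_f=\cL_m(f)-\cL_m(v_m^\star)=:\mathcal E(f)$. Since $v_m^\star(Z_m)=\Ee[W_m\mid Z_m]$ and $\norm{W_m}\le B$, we have $\norm{v_m^\star(Z_m)}\le B$. Hence
\[
  \abs{\ell_f}\le 8B^2 .
\]
The key variance--mean relation follows from writing
\[
  \ell_f=\inner{f-v_m^\star}{f+v_m^\star-2W_m},
\]
which gives
\[
  \Ee\ell_f^2\le 16B^2\,\Ee\norm{f(Z_m)-v_m^\star(Z_m)}^2=16B^2\mathcal E(f).
\]
This Bernstein condition is what permits the $1/n$ rate without convexity. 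All constants come only from Assumption~\ref{ass:parametric-class}, so the bound is uniform in $m$.

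\textbf{Step 1: finite net.} Take a $\rho$-net $N_\rho$ of the ball of radius $R$ in $\R^p$, projected into $\Theta$. Its size is at most $(1+2R/\rho)^p$. Choosing $\rho=R/n$ gives
\[
  \log\abs{N_\rho}\le p\log(1+2n),
\]
which matches the first term of $q_{p,n,\delta}$. On the almost-sure event of the assumption,
\[
  \abs{\ell_f-\ell_{f'}}\le 4BG\norm{\theta-\theta'}_2 ,
\]
so discretization perturbs both empirical and population excess risks by at most $4BGR/n$. This is the source of the $BGR/n$ term; we track it so that the total constant comes out at $10$ after the combinations below.

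\textbf{Step 2: Bernstein with a union bound.} For each net element, apply Bernstein's inequality to $\ell_f$ in both tail directions, with a union bound over $N_\rho$. With probability at least $1-\delta$, simultaneously for all $f\in N_\rho$,
\[
  \abs{\mathcal E(f)-\widehat{\mathcal E}_n(f)}\le\sqrt{\frac{32B^2\mathcal E(f)\,q}{n}}+\frac{16B^2q}{3n}.
\]
Absorbing the square root by AM--GM, $\sqrt{ab}\le \epsilon a+b/(4\epsilon)$, with $\epsilon=1/3$, turns this into two-sided relations of the form
\[
  \mathcal E(f)\le\tfrac32\,\widehat{\mathcal E}_n(f)+c\,\frac{B^2q}{n},
  \qquad
  \widehat{\mathcal E}_n(f)\le\tfrac43\,\mathcal E(f)+c'\,\frac{B^2q}{n}.
\]

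\textbf{Step 3: combine with approximate ERM.} Let $\theta^\circ$ be a near-minimizer of $a_{m,p}$ within slack $1/n$; attainment is unnecessary because we let the slack vanish at the end. Approximate ERM gives
\[
  \widehat{\mathcal E}_n(\widehat v_m)\le\widehat{\mathcal E}_n(f_{\theta^\circ})+\varepsilon_{\rm opt}.
\]
Apply the upper deviation to the net point nearest $\widehat v_m$, and the lower deviation to the net point nearest $f_{\theta^\circ}$. The factor $3/2$ multiplies $\varepsilon_{\rm opt}$, and the product $\tfrac32\cdot\tfrac43=2$ multiplies $a_{m,p}$.

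In the realizable case, set $f_{\theta^\circ}=v_m^\star$. Then $\ell_{f_{\theta^\circ}}\equiv0$, so only the one-sided deviation for $\widehat v_m$ is needed, with no deviation term for the comparator. This halves the $B^2q/n$ constant, giving \eqref{eq:fast-oracle-realizable}.

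\textbf{Main obstacle.} The delicate part is the constant bookkeeping: choosing the AM--GM weights and the net radius so that the coefficients land exactly at $2$, $3/2$, $10$, and $80$ (respectively $40$). The concern is not the rate itself. If exact constants fail under the first choice, we would rebalance $\epsilon$ and take the net at $\rho=R/n$ with the $\log(2/\delta)$ split across the two tails.
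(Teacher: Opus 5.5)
Your architecture matches the paper's: the same excess loss, the Bernstein condition $\Ee\ell_f^2\le16B^2\mathcal E(f)$, AM--GM with weight $1/3$, an $R/n$-net of size $(1+2n)^p$, and the realizable shortcut. The weight $1/3$ is in fact forced, since $\tfrac32$ on $\varepsilon_{\rm opt}$ and $\tfrac32\cdot\tfrac43=2$ on $a_{m,p}$ both require it, so ``rebalancing $\epsilon$'' is not available. However, the bookkeeping as you set it up does not reach the stated constants, for two concrete reasons.

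First, with $\abs{\ell_f}\le8B^2$ the centered variable has range $16B^2$. Your deviation is then $\mathcal E(f)/3+24B^2q/n+16B^2q/(3n)=\mathcal E(f)/3+88B^2q/(3n)$, which gives $88$ and $44$ instead of $80$ and $40$. The missing observation is that $\norm{f(Z_m)-W_m}^2$ and $\norm{v_m^\star(Z_m)-W_m}^2$ both lie in $[0,4B^2]$. Hence $\ell_f\in[-4B^2,4B^2]$ and $\mathcal E(f)\in[0,4B^2]$, so $\abs{\ell_f-\mathcal E(f)}\le8B^2$. Then $24+\tfrac83=\tfrac{80}{3}$, which yields exactly $40$ and $80$.

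Second, applying the lower deviation at the net point nearest the comparator pays the $4BGR/n$ transfer twice more. The comparator side contributes $\tfrac32\bigl(\tfrac43\cdot4+4\bigr)=14$ units of $BGR/n$, on top of the $4+\tfrac32\cdot4=10$ from the side of $\widehat v_m$. Your general bound would therefore carry $24BGR/n$, not $10BGR/n$. Relatedly, ``projecting'' a net of the ball into $\Theta$ yields only a $2R/n$-net. Use instead a maximal $R/n$-separated subset of $\Theta$, which has the same volumetric bound $(1+2n)^p$.

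The paper avoids the second loss by applying the lower Bernstein tail directly at a deterministic near-minimizer $\theta_\eta$ with $\mathcal E(f_{\theta_\eta})\le a_{m,p}+\eta$. A fixed point needs no net transfer. The probability budget still closes, because you need only upper tails on the net plus one lower tail: $(\abs{N_\rho}+1)e^{-q}\le2(1+2n)^pe^{-q}=\delta$. The cost is that this event now depends on $\eta$, so you can no longer ``let the slack vanish'' on a single fixed event, as your net-based comparator allowed. A fixed slack of $1/n$ would also leave an extra $2/n$ in the bound. Pass to the limit instead by noting that, for each $\eta>0$, the event $\{\mathcal E(\widehat v_m)\le2(a_{m,p}+\eta)+\tfrac32\varepsilon_{\rm opt}+10BGR/n+80B^2q_{p,n,\delta}/n\}$ has probability at least $1-\delta$. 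These events decrease as $\eta\downarrow0$, so continuity from above gives \eqref{eq:fast-oracle}. With the sharper range, your realizable argument (upper tails on the net only, with $\ell_{v_m^\star}\equiv0$) then gives exactly \eqref{eq:fast-oracle-realizable}.
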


\begin{proof}
Write $P$ and $P_n$ for population and empirical averages, put
$v=v_m^\star$, and choose $\widehat\theta$ with
$\widehat v_m=f_{\widehat\theta,m}$.  Define
\[
  g_\theta
  =
  \norm{f_{\theta,m}(Z_m)-W_m}_{\Hsp}^2
  -
  \norm{v(Z_m)-W_m}_{\Hsp}^2,
  \qquad
  r_\theta=Pg_\theta.
\]
Conditional Jensen and finite Pythagoras give
$\norm{v(Z_m)}_{\Hsp}\le B$ and
\[
  r_\theta
  =
  \Ee\norm{f_{\theta,m}(Z_m)-v(Z_m)}_{\Hsp}^2.
\]
Since
\[
  g_\theta
  =
  \inner{
    f_{\theta,m}(Z_m)-v(Z_m)
  }{
    f_{\theta,m}(Z_m)+v(Z_m)-2W_m
  }_{\Hsp}.
\]
Thus
\[
  P g_\theta^2\le16B^2r_\theta,
  \qquad
  \abs{g_\theta-Pg_\theta}\le8B^2.
\]
The bounded-variable Bernstein inequality
\citep[Sections~2.7--2.8]{boucheron2013concentration}, in the form
\[
  \Pp\!\left(
    (P_n-P)Y
    \ge
    \sqrt{\frac{2(PY^2)q}{n}}+\frac{bq}{3n}
  \right)
  \le e^{-q}
\]
for centered $Y$ with $\abs Y\le b$, together with
\[
  \sqrt{\frac{32B^2r_\theta q}{n}}
  \le
  \frac{r_\theta}{3}+\frac{24B^2q}{n}
\]
bounds either one-sided deviation by
$r_\theta/3+80B^2q/(3n)$ with failure probability at most $e^{-q}$.

An $R/n$-net of $\Theta$ has cardinality at most $(1+2n)^p$; the loss is
$4BG$-Lipschitz in $\theta$.  A union bound over both deviation signs and
transfer from the net give,
simultaneously for all $\theta$,
\begin{equation}
\label{eq:fast-net-transfer}
  r_\theta
  \le
  \frac32P_ng_\theta
  +\frac{40B^2q_{p,n,\delta}}{n}
  +\frac{10BGR}{n}.
\end{equation}

For a deterministic $\eta$-minimizer
$r_{\theta_\eta}\le a_{m,p}+\eta$, the opposite Bernstein tail and
\eqref{eq:approximate-erm} give on the same event
\[
  P_ng_{\widehat\theta}
  \le
  P_ng_{\theta_\eta}+\varepsilon_{\rm opt}
  \le
  \frac43r_{\theta_\eta}
  +\frac{80B^2q_{p,n,\delta}}{3n}
  +\varepsilon_{\rm opt}.
\]
Substitution in \eqref{eq:fast-net-transfer} yields
\[
  r_{\widehat\theta}
  \le
  2(a_{m,p}+\eta)
  +\frac32\varepsilon_{\rm opt}
  +\frac{10BGR}{n}
  +\frac{80B^2q_{p,n,\delta}}{n}.
\]
Letting a deterministic $\eta_j\downarrow0$ on this uniform event proves
\eqref{eq:fast-oracle}.  In the realizable case
$g_{\theta_\star}=0$ pointwise, so direct comparison with $\theta_\star$
removes the comparator deviation and proves
\eqref{eq:fast-oracle-realizable}.
\end{proof}

\begin{remark}[Statistical rate]
\label{rem:fast-rate}
For fixed $p,B,G,R$, the estimation contribution in
\eqref{eq:fast-oracle} is $\widetilde O(n^{-1})$.  In the realizable case
with $\varepsilon_{\rm opt}=\widetilde O(n^{-1})$, this is also the total
excess risk, giving a $\widetilde O(n^{-1/2})$ statistical contribution in
Theorem~\ref{thm:learned-law-stability}.
\end{remark}

\subsection{Continuum realization and approximation}

\begin{proposition}[Objective consistency]
\label{prop:objective-consistency}
\label{prop:approximation-decomposition}
Let $\Theta$ be nonempty, let $f_\theta$ be continuum fields, and let
$f_{\theta,m}$ be their realizations with
$\cF_{m,p}=\{f_{\theta,m}:\theta\in\Theta\}$.
Suppose
\begin{align}
\label{eq:operator-realization-error}
 d_m&:=
 \sup_\theta
 \norm{f_{\theta,m}(\tau,A_mX_\tau)-f_\theta(\tau,X_\tau)}
 _{\overline L^2}\to0,\\
\label{eq:target-projection-error}
 r_m&:=\norm{A_mU_\tau-U_\tau}_{\overline L^2}\to0,
\end{align}
and
$M=\sup_\theta
\norm{f_\theta(\tau,X_\tau)-U_\tau}_{\overline L^2}<\infty$.
Then
\begin{equation}
\label{eq:uniform-objective-consistency}
  \sup_\theta
  \abs{\cL_m(f_{\theta,m})-\cL(f_\theta)}
  \le(d_m+r_m)(2M+d_m+r_m)\to0.
\end{equation}
Moreover, with
\[
  e_m=
  \norm{v_m^\star(\tau,A_mX_\tau)-v^\star(\tau,X_\tau)}
  _{\overline L^2},
  \qquad
  a_p^\infty=
  \inf_\theta
  \norm{f_\theta(\tau,X_\tau)-v^\star(\tau,X_\tau)}
  _{\overline L^2}^2,
\]
\begin{equation}
\label{eq:approximation-decomposition-bound}
  a_{m,p}\le(d_m+\sqrt{a_p^\infty}+e_m)^2.
\end{equation}
\end{proposition}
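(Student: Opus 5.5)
Both claims follow from the triangle inequality in the normalized space $\overline L^2$ together with the difference-of-squares identity $\abs{\norm a^2-\norm b^2}\le\norm{a-b}(\norm a+\norm b)$. For fixed $\theta$, write
\[
  a_\theta=f_{\theta,m}(\tau,A_mX_\tau)-A_mU_\tau,
  \qquad
  b_\theta=f_\theta(\tau,X_\tau)-U_\tau,
\]
so that $\cL_m(f_{\theta,m})=\norm{a_\theta}_{\overline L^2}^2$ and $\cL(f_\theta)=\norm{b_\theta}_{\overline L^2}^2$. This uses the normalizations in \eqref{eq:population-risk} and \eqref{eq:finite-risk}, both of which carry the factor $T^{-1}$.

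Next,
\[
  a_\theta-b_\theta
  =\{f_{\theta,m}(\tau,A_mX_\tau)-f_\theta(\tau,X_\tau)\}
  -\{A_mU_\tau-U_\tau\},
\]
so Minkowski gives $\norm{a_\theta-b_\theta}_{\overline L^2}\le d_m+r_m$. It also gives $\norm{a_\theta}\le\norm{b_\theta}+d_m+r_m\le M+d_m+r_m$. Multiplying,
\[
  \abs{\norm{a_\theta}^2-\norm{b_\theta}^2}
  \le(d_m+r_m)\{2M+d_m+r_m\}.
\]
The right-hand side is independent of $\theta$, so taking the supremum yields \eqref{eq:uniform-objective-consistency}. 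It tends to zero because $d_m,r_m\to0$ and $M<\infty$.

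For \eqref{eq:approximation-decomposition-bound}, I would start from the second expression in \eqref{eq:approximation-error}:
\[
  a_{m,p}=\inf_\theta\norm{f_{\theta,m}(Z_m)-v_m^\star(Z_m)}_{\overline L^2}^2.
\]
This rests on the finite Pythagoras identity of Proposition~\ref{prop:finite-continuity}(i). Note that $Z_m=(\tau,A_mX_\tau)$ under $\overline\Pp$, so the expectation is exactly the $\overline L^2$ norm. For each $\theta$, insert $f_\theta(\tau,X_\tau)$ and $v^\star(\tau,X_\tau)$:
\[
  \norm{f_{\theta,m}(\tau,A_mX_\tau)-v_m^\star(\tau,A_mX_\tau)}
  \le d_m+\norm{f_\theta(\tau,X_\tau)-v^\star(\tau,X_\tau)}+e_m.
\]
Squaring is monotone on nonnegative numbers. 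So taking the infimum over $\theta$ of the middle term, which equals $\inf_\theta$ of the square root and hence $\sqrt{a_p^\infty}$, gives the bound.

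The main point needing care is the admissibility and finiteness of the quantities. I must check that $f_{\theta,m}(Z_m)\in L^2$ so that the Pythagorean form of $a_{m,p}$ applies. This follows from $d_m<\infty$, $M<\infty$ and $U\in\overline L^2$, and $v_m^\star$ is square integrable by conditional Jensen. The infimum step uses only that $\inf_\theta(c+x_\theta)^2=(c+\inf_\theta x_\theta)^2$ for $c,x_\theta\ge0$, so no attainment is required. Beyond this, the argument is routine.
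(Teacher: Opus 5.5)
Your proof is correct and follows the paper's argument essentially verbatim: you use the same residuals, with $\norm{a_\theta-b_\theta}_{\overline L^2}\le d_m+r_m$ and $\norm{a_\theta}_{\overline L^2}\le M+d_m+r_m$ fed into the difference-of-squares bound, and the same three-term triangle inequality followed by taking the infimum and squaring. Applying $\abs{\norm a^2-\norm b^2}\le\norm{a-b}(\norm a+\norm b)$ directly to the $\overline L^2$ norms, rather than pointwise in $\Hsp$ followed by Cauchy--Schwarz as the paper does, is an inessential variation, and your admissibility and no-attainment checks are a welcome bit of extra care.
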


\begin{proof}
Set $E_m=f_{\theta,m}(\tau,A_mX_\tau)-A_mU_\tau$ and
$E=f_\theta(\tau,X_\tau)-U_\tau$.  Then
$\norm{E_m-E}_{\overline L^2}\le d_m+r_m$ and
$\norm{E_m}_{\overline L^2}\le M+d_m+r_m$.
Cauchy--Schwarz applied to
$\abs{\norm a^2-\norm b^2}\le\norm{a-b}(\norm a+\norm b)$
proves \eqref{eq:uniform-objective-consistency}.  Also,
\[
  \norm{f_{\theta,m}-v_m^\star}_{\overline L^2}
  \le d_m+\norm{f_\theta-v^\star}_{\overline L^2}+e_m.
\]
Taking the infimum and squaring proves the second claim.
\end{proof}

\begin{proposition}[Realization rate for the quadrature operator]
\label{prop:qno-realization}
Let $D$ be a compact metric space with metric $\mathfrak d$.  Let $\nu$ be
a Borel probability measure on $D$, and work in the regularity-space setting of
Section~\ref{subsec:point-sensor-setting} with
\[
  \Hsp=L^2(D,\nu;\R^q),
  \qquad
  \Vsp\hookrightarrow C^{0,\alpha}(D;\R^q),
  \qquad 0<\alpha\le1.
\]
For an $\alpha$-H\"older function $g$, write
\[
  [g]_\alpha
  =
  \sup_{\xi\ne\eta}
  \frac{\norm{g(\xi)-g(\eta)}_2}
       {\mathfrak d(\xi,\eta)^\alpha}.
\]
Let the point-sampling map be
\[
  S_mg=(g(x_{m,1}),\ldots,g(x_{m,d_m})),
  \qquad
  A_m=\iota_mS_m,
  \qquad
  \nu_m=\sum_iw_{m,i}\delta_{x_{m,i}}.
\]
Assume $\iota_m(\mathcal Y_m)=\Hm\subset\Vsp$ and
\eqref{eq:qno-mass-stability}, define
\[
  q_m=W_{1,\mathfrak d^\alpha}(\nu_m,\nu),
\]
the $1$-Wasserstein distance for the metric
$(\xi,\eta)\mapsto\mathfrak d(\xi,\eta)^\alpha$, and suppose $q_m\to0$.
Suppose also that, for some $\eta_m\to0$,
\begin{equation}
\label{eq:qno-reconstruction-rate}
  \norm{\iota_mS_mg-g}_{\Hsp}
  \le
  \eta_m\bigl(\norm{g}_\infty+[g]_\alpha\bigr)
\end{equation}
for every $\R^q$-valued $\alpha$-H\"older function $g$.

Define the continuum realization of \eqref{eq:concrete-qno} by
\begin{align*}
  h_\theta^0(t,\xi;x)
  &=
  h^0(t,\xi,x(\xi)),\\
  h_\theta^{\ell+1}(t,\xi;x)
  &=
  \sigma_\ell\left(
    B_{\ell,\theta}h_\theta^\ell(t,\xi;x)
    +
    \int_D
    \kappa_{\ell,\theta}(\xi,\eta,t)
    h_\theta^\ell(t,\eta;x)\,\nu(\dd\eta)
    +
    b_{\ell,\theta}(\xi,t)
  \right),\\
  f_\theta(t,x)(\xi)
  &=
  C_\theta h_\theta^L(t,\xi;x).
\end{align*}
In addition to the operator and activation bounds of
Theorem~\ref{thm:qno-uniform-constants}, suppose, uniformly in $\theta$ and
$t$,
\begin{align*}
  \norm{
    \kappa_{\ell,\theta}(\xi,\eta,t)
    -
    \kappa_{\ell,\theta}(\xi',\eta',t)
  }_{\rm op}
  &\le
  K_{\ell,1}\mathfrak d(\xi,\xi')^\alpha
  +
  K_{\ell,2}\mathfrak d(\eta,\eta')^\alpha,\\
  [b_{\ell,\theta}(\cdot,t)]_\alpha
  &\le L_{b,\ell},\\
  \norm{b_{\ell,\theta}(\cdot,t)}_\infty
  &\le A_\ell.
\end{align*}
For $(t,\omega)$ distributed according to $\overline\Pp$, set
\[
  \mathsf H_0
  =
  \norm{h^0(t,\cdot,X_t(\cdot))}_\infty,
  \qquad
  \mathsf P_0
  =
  [h^0(t,\cdot,X_t(\cdot))]_\alpha,
\]
and assume $\mathsf H_0,\mathsf P_0\in
L^2(\overline\Pp)$.  Recursively define
\begin{align}
\label{eq:qno-spatial-envelope}
  \mathsf H_{\ell+1}
  &=
  c_{\sigma,\ell}
  +
  s_\ell\{
    (\overline B_\ell+\overline K_\ell)\mathsf H_\ell
    +A_\ell
  \},\nonumber\\
  \mathsf P_{\ell+1}
  &=
  s_\ell\{
    \overline B_\ell\mathsf P_\ell
    +K_{\ell,1}\mathsf H_\ell
    +L_{b,\ell}
  \},
\end{align}
and
\begin{align}
\label{eq:qno-realization-recurrence}
  \mathsf E_{m,0}
  &=0,\nonumber\\
  \mathsf E_{m,\ell+1}
  &=
  s_\ell\left\{
    (\overline B_\ell+\overline K_\ell)\mathsf E_{m,\ell}
    +
    q_m\bigl(
      \overline K_\ell\mathsf P_\ell
      +K_{\ell,2}\mathsf H_\ell
    \bigr)
  \right\}.
\end{align}
Then
\begin{equation}
\label{eq:qno-realization-bound}
  d_m
  \le
  \overline C\left[
    C_{\rm mass}
    \norm{\mathsf E_{m,L}}_{L^2(\overline\Pp)}
    +
    \eta_m
    \left(
      \norm{\mathsf H_L}_{L^2(\overline\Pp)}
      +
      \norm{\mathsf P_L}_{L^2(\overline\Pp)}
    \right)
  \right].
\end{equation}
In particular, $d_m\to0$ at fixed depth whenever $q_m,\eta_m\to0$.
Moreover,
\[
  \sup_\theta
  \norm{f_\theta(\tau,X_\tau)}_{\overline L^2}
  \le
  \overline C
  \norm{\mathsf H_L}_{L^2(\overline\Pp)},
\]
so the finiteness condition $M<\infty$ in
Proposition~\ref{prop:objective-consistency} follows from
$U_\tau\in L^2(\overline\Pp;\Hsp)$.
\end{proposition}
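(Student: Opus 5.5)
The idea is to compare, for fixed $(t,\omega)$ and $\theta$, the discrete hidden states with the continuum hidden states sampled at the nodes. We then split the realization error into a nodal quadrature part and a reconstruction part. Write $x=X_t(\omega)$ and $h^\ell(\xi)=h^\ell_\theta(t,\xi;x)$ for the continuum states. Let $h^\ell_{m,i}$ be the discrete states of \eqref{eq:concrete-qno} driven by $S_mx$, so that $h^0_{m,i}=h^0(t,x_{m,i},x(x_{m,i}))=h^0(x_{m,i})$. Because $A_mX_\tau=\iota_mS_mX_\tau$, the discrete field is $f_{\theta,m}(t,A_mx)=\iota_m(C_\theta h^L_{m,i})_i$. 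The triangle inequality then gives
\[
\norm{f_{\theta,m}(t,A_mx)-f_\theta(t,x)}_{\Hsp}
\le
\norm{\iota_m\bigl(C_\theta(h^L_{m,i}-h^L(x_{m,i}))\bigr)_i}_{\Hsp}
+
\norm{\iota_mS_m(C_\theta h^L)-C_\theta h^L}_{\Hsp}.
\]
The first term is at most $C_{\rm mass}\overline C\norm{(h^L_{m,i}-h^L(x_{m,i}))_i}_m$ by \eqref{eq:qno-mass-stability}. The second term is at most $\eta_m\overline C(\norm{h^L}_\infty+[h^L]_\alpha)$ by \eqref{eq:qno-reconstruction-rate}, applied to $g=C_\theta h^L$; this is legitimate because $\norm{C_\theta h}_\infty\le\overline C\norm h_\infty$ and $[C_\theta h]_\alpha\le\overline C[h]_\alpha$. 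Taking $L^2(\overline\Pp)$ norms and the supremum over $\theta$ yields \eqref{eq:qno-realization-bound}, provided three pointwise bounds hold for every $\ell$: $\norm{h^\ell}_\infty\le\mathsf H_\ell$, $[h^\ell]_\alpha\le\mathsf P_\ell$, and $\norm{(h^\ell_{m,i}-h^\ell(x_{m,i}))_i}_m\le\mathsf E_{m,\ell}$.

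\textbf{Step 1 (sup and Hölder envelopes).} I will prove the first two bounds by induction on $\ell$. For the sup bound, $\nu$ is a probability measure, so
\[
\Bigl\|\int\kappa(\xi,\eta)h^\ell(\eta)\,\nu(\dd\eta)\Bigr\|_2\le\overline K_\ell\mathsf H_\ell .
\]
Adding $\overline B_\ell\mathsf H_\ell$ and $A_\ell$, and using $\norm{\sigma_\ell(z)}\le c_{\sigma,\ell}+s_\ell\norm z$, reproduces the first recursion in \eqref{eq:qno-spatial-envelope}. For the Hölder bound I compare the preactivations at $\xi$ and $\xi'$:
\begin{itemize}
\item the $B$ term contributes $\overline B_\ell\mathsf P_\ell\,\mathfrak d^\alpha$;
\item the integral term contributes $\int\norm{\kappa(\xi,\eta)-\kappa(\xi',\eta)}\norm{h^\ell(\eta)}\,\nu(\dd\eta)\le K_{\ell,1}\mathsf H_\ell\,\mathfrak d^\alpha$;
\item the bias contributes $L_{b,\ell}\,\mathfrak d^\alpha$.
\end{itemize}
Composing with the $s_\ell$-Lipschitz $\sigma_\ell$ gives the second recursion. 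The same sup estimate at $\ell=L$ gives $\norm{f_\theta(\tau,X_\tau)}_{\Hsp}\le\overline C\mathsf H_L$, because $\nu$ is a probability measure and so the $L^2(\nu)$ norm is at most the sup norm. Then $M<\infty$ follows from Minkowski's inequality and $U_\tau\in L^2$. Measurability of these envelopes in $(t,\omega)$ is inherited from $\mathsf H_0$ and $\mathsf P_0$, since they are deterministic affine functions of those.

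\textbf{Step 2 (the nodal error recursion).} This is the main step. Fix $\ell$ and a node $i$. The difference of the preactivations at $x_{m,i}$ is
\[
B_{\ell,\theta}\,\delta^\ell_i
+\sum_j w_{m,j}\kappa(x_{m,i},x_{m,j})\,\delta^\ell_j
+\Bigl[\int\kappa(x_{m,i},\eta)h^\ell(\eta)\,(\nu_m-\nu)(\dd\eta)\Bigr],
\qquad \delta^\ell_j=h^\ell_{m,j}-h^\ell(x_{m,j}).
\]
In the $\norm\cdot_m$ norm the first two terms are bounded by $(\overline B_\ell+\overline K_\ell)\mathsf E_{m,\ell}$, using the Jensen estimate from the proof of Theorem~\ref{thm:qno-uniform-constants}. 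The bracketed term is a quadrature error. For a fixed $\xi$, the integrand $\eta\mapsto\kappa(\xi,\eta)h^\ell(\eta)$ is Lipschitz for the metric $\mathfrak d^\alpha$, with constant at most $\overline K_\ell\mathsf P_\ell+K_{\ell,2}\mathsf H_\ell$. Kantorovich--Rubinstein duality for $W_{1,\mathfrak d^\alpha}$ then bounds the error by $q_m(\overline K_\ell\mathsf P_\ell+K_{\ell,2}\mathsf H_\ell)$. This duality applies componentwise to vector-valued integrands; to avoid a dimension factor I will pair the integrand with a unit vector $u$ and take the supremum over $u$, which works because the scalar Lipschitz constant is bounded by the operator-norm one. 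The bound is uniform in $i$, so it also bounds the weighted norm. Multiplying by $s_\ell$ gives \eqref{eq:qno-realization-recurrence}.

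I expect two delicate points. The first is exactly this vector-valued duality step. The second is checking that $\mathfrak d^\alpha$ is a genuine metric for $0<\alpha\le1$, which holds by subadditivity of $r\mapsto r^\alpha$, so that Kantorovich--Rubinstein duality applies on the compact space $D$.

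Finally, all the recursions are deterministic and affine in $(\mathsf H_0,\mathsf P_0)$ with coefficients independent of $m$, and $\mathsf E_{m,L}$ is $q_m$ times such a combination. Hence $d_m\to0$ whenever $q_m,\eta_m\to0$ at fixed depth.
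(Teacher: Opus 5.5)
Your proposal is correct and follows the paper's proof essentially step for step: the same add-and-subtract of $\iota_m S_m f_\theta(t,X_t)$ using mass stability and the reconstruction rate, the same inductive sup and H\"older envelopes, and the same nodal error recursion with zero initial error because $\iota_m^{-1}A_mX_t=S_mX_t$. The only difference is the quadrature step. The paper integrates the H\"older increment of $\eta\mapsto\kappa_{\ell,\theta}(\xi,\eta,t)h^\ell(\eta)$ directly against a coupling of $\nu_m$ and $\nu$ and then takes the infimum over couplings; this handles the vector-valued integrand with a Bochner integral and needs only that trivial direction, not Kantorovich--Rubinstein duality, scalarization, or the metric property of $\mathfrak d^\alpha$, so your two ``delicate points'' do not actually arise.
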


\begin{proof}
The recurrences \eqref{eq:qno-spatial-envelope} follow by induction.  The
supremum estimate uses that $\nu$ is normalized and
$\norm{\kappa_{\ell,\theta}}_{\rm op}\le\overline K_\ell$.  For the spatial
seminorm, the local term contributes
$\overline B_\ell\mathsf P_\ell$, variation of the first kernel argument
contributes $K_{\ell,1}\mathsf H_\ell$, and the bias contributes
$L_{b,\ell}$.  Global $s_\ell$-Lipschitzness of $\sigma_\ell$ preserves the
resulting H\"older bound.

For fixed $\xi$, the integrand
\[
  \eta\longmapsto
  \kappa_{\ell,\theta}(\xi,\eta,t)
  h_\theta^\ell(t,\eta;X_t)
\]
has H\"older seminorm at most
\[
  \overline K_\ell\mathsf P_\ell
  +K_{\ell,2}\mathsf H_\ell.
\]
Coupling $\nu_m$ and $\nu$ in the definition of
$W_{1,\mathfrak d^\alpha}$ therefore bounds the vector-valued quadrature
error by
\[
  q_m(
    \overline K_\ell\mathsf P_\ell
    +K_{\ell,2}\mathsf H_\ell
  ).
\]
Comparing the discrete and continuum hidden states at the nodes, applying
the normalized-kernel Jensen estimate from
Theorem~\ref{thm:qno-uniform-constants}, and then applying the activation
gives \eqref{eq:qno-realization-recurrence}.  The initial nodal error is zero
because $\iota_m^{-1}A_mX_t=S_mX_t$.

Finally, add and subtract
$\iota_mS_mf_\theta(t,X_t)$.  Mass stability and
\eqref{eq:qno-reconstruction-rate} give, pointwise,
\[
  \norm{
    f_{\theta,m}(t,A_mX_t)-f_\theta(t,X_t)
  }_{\Hsp}
  \le
  \overline C\left[
    C_{\rm mass}\mathsf E_{m,L}
    +
    \eta_m(\mathsf H_L+\mathsf P_L)
  \right].
\]
Taking the $L^2(\overline\Pp)$ norm and the supremum over $\theta$ proves
\eqref{eq:qno-realization-bound}.  The final envelope follows directly from
$\norm{C_\theta}_{\rm op}\le\overline C$.
\end{proof}

For cell-mass quadrature on cells of diameter $h_m$ and a stable
reconstruction satisfying the standard $C^{0,\alpha}$ interpolation
estimate, $q_m,\eta_m=O(h_m^\alpha)$ and hence
$d_m=O(h_m^\alpha)$ at fixed depth.  Thus the realization term is no longer
an unverified appeal to ``consistent quadrature'': input regularity implies
layerwise spatial regularity, which controls cubature and reconstruction at
every layer.

Finally, a continuum Gaussian source on $\Hsp$ must have trace-class
covariance.  In particular, spatial white noise with covariance $I$ is not
$L^2$-valued because $\operatorname{Tr}(I)=\infty$
\citep{daprato2014stochastic}.
\section{An end-to-end generated-law bound}
\label{sec:end-to-end}

The finite population ODE need not be unique: its superposition measure
suffices for comparison with the learned flow.  We condition throughout on
the training sigma-algebra $\cD_{m,n}$; all remaining randomness is fresh,
and the bounds hold pathwise in the training data before imposing a
high-probability risk event.

\begin{assumption}[Stable learned field]
\label{ass:learned-lipschitz}
The realized $\widehat v_m$ is a Carath\'eodory field on $\Hm$ with an
integrable linear-growth bound, and
\[
  \norm{\widehat v_m(t,x)-\widehat v_m(t,y)}_{\Hsp}
  \le \widehat L_m(t)\norm{x-y}_{\Hsp},
  \qquad
  \int_0^T\widehat L_m(t)\dt\le\widehat\Lambda,
\]
where $\widehat\Lambda$ is independent of $m$.
\end{assumption}

Let $\widehat\Phi_{m,t}$ be the unique learned ODE flow and define
\[
  \widehat\mu_{m,t}
  =
  (\widehat\Phi_{m,t})_\#\mu_0^m.
\]

\begin{theorem}[Excess risk controls the generated law]
\label{thm:learned-law-stability}
Under Assumption~\ref{ass:learned-lipschitz},
\begin{equation}
\label{eq:learned-law-finite}
  \Wtwo(\widehat\mu_{m,T},\mu_T^m)
  \le
  T e^{\widehat\Lambda}
  \left[
    \cL_m(\widehat v_m)-\cL_m(v_m^\star)
  \right]^{1/2}.
\end{equation}
Consequently,
\begin{equation}
\label{eq:learned-law-continuum}
  \Wtwo(\widehat\mu_{m,T},\mu_T)
  \le
  \norm{A_mX_T-X_T}_{L^2(\Pp;\Hsp)}
  +
  T e^{\widehat\Lambda}
  \left[
    \cL_m(\widehat v_m)-\cL_m(v_m^\star)
  \right]^{1/2}.
\end{equation}
\end{theorem}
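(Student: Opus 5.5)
The plan is to couple the learned ODE directly to the finite superposition measure $\eta_m$ from Proposition~\ref{prop:finite-continuity}(iii). We never compare against a population flow map.

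Let $\gamma\sim\eta_m$. Then $\gamma$ is absolutely continuous with $\dot\gamma_t=v_m^\star(t,\gamma_t)$ for almost every $t$, and $(e_t)_\#\eta_m=\mu_t^m$. Set $Z_t=\widehat\Phi_{m,t}(\gamma_0)$. Since $\gamma_0\sim\mu_0^m$, we have $\Law(Z_T)=\widehat\mu_{m,T}$. Hence $(Z_T,\gamma_T)$ is a coupling of $\widehat\mu_{m,T}$ and $\mu_T^m$, and
\[
  \Wtwo(\widehat\mu_{m,T},\mu_T^m)\le\bigl(\Ee_{\eta_m}\norm{Z_T-\gamma_T}_{\Hsp}^2\bigr)^{1/2}.
\]

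Pathwise, both curves start at $\gamma_0$. Writing their integral equations and adding and subtracting $\widehat v_m(r,\gamma_r)$ gives
\[
  \norm{Z_t-\gamma_t}\le\int_0^t\widehat L_m(r)\norm{Z_r-\gamma_r}\dd r+\int_0^t\norm{\widehat v_m(r,\gamma_r)-v_m^\star(r,\gamma_r)}\dd r.
\]
The second integral is nondecreasing in $t$, so Gr\"onwall gives
\[
  \norm{Z_T-\gamma_T}\le e^{\widehat\Lambda}\int_0^T\norm{\widehat v_m(r,\gamma_r)-v_m^\star(r,\gamma_r)}\dd r.
\]
We take the $L^2(\eta_m)$ norm and apply Minkowski's integral inequality, then Cauchy--Schwarz in time:
\[
  \int_0^T\bigl(\Ee_{\eta_m}\norm{\cdot}^2\bigr)^{1/2}\dd r
  \le\sqrt T\Bigl(\int_0^T\!\int\norm{\widehat v_m(r,z)-v_m^\star(r,z)}^2\mu_r^m(\dd z)\dd r\Bigr)^{1/2}.
\]
This uses the marginal identity $(e_r)_\#\eta_m=\mu_r^m$. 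We also use that $\mu_r^m=\Law(A_mX_r)$, so the inner integral is $\Ee\norm{\widehat v_m(r,A_mX_r)-v_m^\star(r,A_mX_r)}^2$. By the finite Pythagoras identity \eqref{eq:finite-pythagoras}, the time integral equals $T[\cL_m(\widehat v_m)-\cL_m(v_m^\star)]$. The product is therefore $T e^{\widehat\Lambda}[\cL_m(\widehat v_m)-\cL_m(v_m^\star)]^{1/2}$, which proves \eqref{eq:learned-law-finite}. Conditioning on $\cD_{m,n}$ makes $\widehat v_m$ deterministic throughout.

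For \eqref{eq:learned-law-continuum}, we use the triangle inequality together with $\Wtwo(\mu_T^m,\mu_T)\le\norm{A_mX_T-X_T}_{L^2}$ from \eqref{eq:law-canonical-bound}.

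The main obstacle is measure-theoretic. The population field $v_m^\star$ is only an equivalence class modulo $\dd t\,\mu_t^m$, and the characteristic equation holds only $\eta_m$-a.s.\ for a.e.\ $t$. We must check that evaluating along $\gamma$ is representative-independent. Fubini handles this: if two representatives agree $\dd t\,\mu_t^m$-a.e., they agree $\dd t\otimes\eta_m$-a.e.\ along $(t,\gamma_t)$, by the marginal identity. Joint measurability of $(r,\gamma)\mapsto\widehat v_m(r,\gamma_r)$ follows from the Carath\'eodory property. The path integral of the velocity error is finite $\eta_m$-a.s.\ by the same Fubini computation, so Gr\"onwall applies pathwise. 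Existence and uniqueness of $\widehat\Phi_{m,t}$, and its measurability in the initial state, follow from Assumption~\ref{ass:learned-lipschitz} via Carath\'eodory theory.
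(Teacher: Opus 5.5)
Your proposal is correct and follows the paper's proof essentially step for step. You couple the learned flow started at $\gamma_0$ to a characteristic $\gamma\sim\eta_m$, apply Gr\"onwall, Minkowski and Cauchy--Schwarz, identify the bracket with the excess risk via the finite Pythagoras identity, and conclude with the triangle inequality and the canonical coupling $(A_mX_T,X_T)$. Your remarks on representative-independence and joint measurability supply details the paper leaves implicit.
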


\begin{proof}
Draw $Z$ from the superposition measure $\eta_m$ of
Proposition~\ref{prop:finite-continuity}, and let $\widehat Z$ solve the
learned ODE with $\widehat Z_0=Z_0$.  Then
$Z_t\sim\mu_t^m$ and $\widehat Z_T\sim\widehat\mu_{m,T}$.
Gr\"onwall, Minkowski, and Cauchy--Schwarz give
\begin{align*}
  \norm{\widehat Z_T-Z_T}_{L^2(\eta_m)}
  &\le
  e^{\widehat\Lambda}
  \int_0^T
  \norm{\widehat v_m(t,\cdot)-v_m^\star(t,\cdot)}
        _{L^2(\mu_t^m)}\dt\\
  &\le
  T e^{\widehat\Lambda}
  \left[
    \frac1T\int_0^T
    \norm{\widehat v_m(t,\cdot)-v_m^\star(t,\cdot)}
          _{L^2(\mu_t^m)}^2\dt
  \right]^{1/2}.
\end{align*}
The bracket is the excess risk by
\eqref{eq:finite-pythagoras}; the displayed coupling proves
\eqref{eq:learned-law-finite}.  The triangle inequality and the
canonical coupling $(A_mX_T,X_T)$ from
\eqref{eq:law-canonical-bound} prove
\eqref{eq:learned-law-continuum}.
\end{proof}

\begin{remark}
The Lipschitz constant in Theorem~\ref{thm:learned-law-stability} belongs to
the learned field, not the population field.  Without some such stability,
small $L^2(\dd t\,\mu_t^m)$ velocity error need not control trajectories that
leave the region of small error \citep{benton2024error}.
\end{remark}

\subsection{Numerical and endpoint errors}

Let $\widetilde\mu_{m,T}$ be the law produced by a numerical ODE solver.
We make its required guarantee explicit rather than assigning an order that
may hide mesh-dependent constants.

\begin{assumption}[Coupled solver error]
\label{ass:solver}
There is a coupling of the numerical endpoint $\widetilde Z_T$ and the exact
learned-flow endpoint $\widehat Z_T$ such that
\begin{equation}
\label{eq:solver-error}
  \left(
    \Ee[
      \norm{\widetilde Z_T-\widehat Z_T}_{\Hsp}^2
      \mid\cD_{m,n}
    ]
  \right)^{1/2}
  \le\varepsilon_{{\rm sol},m}(h),
\end{equation}
almost surely in the training data, where $h$ denotes the numerical step or
tolerance.
\end{assumption}

Define
\[
  b_{\rm sens}(m,T)
  =
  \norm{A_mX_T-X_T}_{L^2(\Pp;\Hsp)},
  \qquad
  b_{\rm end}(T)=\Wtwo(\mu_T,\mu_1).
\]

\begin{theorem}[End-to-end error decomposition]
\label{thm:end-to-end}
Suppose Assumptions~\ref{ass:learned-lipschitz} and \ref{ass:solver} hold
and, with probability at least $1-\delta$, the learned field has the
excess-risk certificate
\[
  \cL_m(\widehat v_m)-\cL_m(v_m^\star)
  \le\overline{\mathfrak R}_{m,p,n}(\delta).
\]
Then, with the same probability,
\begin{align}
  \Wtwo(\widetilde\mu_{m,T},\mu_1)
  \le{}&
  b_{\rm end}(T)
  +b_{\rm sens}(m,T)
  +\varepsilon_{{\rm sol},m}(h)
  \nonumber\\
  &+
  T e^{\widehat\Lambda}
  \overline{\mathfrak R}_{m,p,n}(\delta)^{1/2}.
  \label{eq:end-to-end}
\end{align}
Under Assumption~\ref{ass:parametric-class} and the approximate ERM
condition \eqref{eq:approximate-erm}, Theorem~\ref{thm:fast-oracle} permits
\begin{equation}
\label{eq:end-to-end-fast-risk}
  \overline{\mathfrak R}_{m,p,n}(\delta)
  =
  2a_{m,p}
  +\frac32\varepsilon_{\rm opt}
  +\frac{10BGR}{n}
  +\frac{80B^2q_{p,n,\delta}}{n}.
\end{equation}
In the realizable case, one may instead use the right-hand side of
\eqref{eq:fast-oracle-realizable}.
\end{theorem}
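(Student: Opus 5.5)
The argument is a triangle inequality in $\Wtwo$ along the chain
\[
  \widetilde\mu_{m,T}\;\to\;\widehat\mu_{m,T}\;\to\;\mu_T^m\;\to\;\mu_T\;\to\;\mu_1 .
\]
Each link is bounded by an earlier result or assumption. The probabilistic content enters only through the excess-risk certificate.

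\textbf{Deterministic links.} Everything here is conditional on $\cD_{m,n}$.
\begin{enumerate}[label=(\alph*),leftmargin=2.2em]
  \item For the first link, Assumption~\ref{ass:solver} supplies a coupling of $\widetilde Z_T\sim\widetilde\mu_{m,T}$ and $\widehat Z_T\sim\widehat\mu_{m,T}$. The definition of $\Wtwo$ as an infimum over couplings then gives $\Wtwo(\widetilde\mu_{m,T},\widehat\mu_{m,T})\le\varepsilon_{{\rm sol},m}(h)$, almost surely in the training data.
  \item For the second link, Theorem~\ref{thm:learned-law-stability}, inequality \eqref{eq:learned-law-finite}, gives
  \[
    \Wtwo(\widehat\mu_{m,T},\mu_T^m)\le Te^{\widehat\Lambda}\bigl[\cL_m(\widehat v_m)-\cL_m(v_m^\star)\bigr]^{1/2}.
  \]
  This holds pathwise in the training data under Assumption~\ref{ass:learned-lipschitz}.
  \item For the third link, the canonical coupling in \eqref{eq:law-canonical-bound} gives $\Wtwo(\mu_T^m,\mu_T)\le b_{\rm sens}(m,T)$.
  \item The last link is $b_{\rm end}(T)$ by definition.
\end{enumerate}
All four measures lie in $\cP_2(\Hsp)$, so the triangle inequality for $\Wtwo$ applies. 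For $\widehat\mu_{m,T}$ this follows from the linear-growth bound on the learned field. For $\widetilde\mu_{m,T}$ it follows from the solver coupling together with the square integrability of $\widehat Z_T$.

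\textbf{Probability and the fast rate.} On the event of probability at least $1-\delta$ where the certificate holds, the excess risk is at most $\overline{\mathfrak R}_{m,p,n}(\delta)$. Since $x\mapsto x^{1/2}$ is monotone, substituting into link (b) yields \eqref{eq:end-to-end}. Links (a), (c) and (d) are data-independent or hold almost surely, so no union bound is needed and the probability is unchanged. For the second claim, Theorem~\ref{thm:fast-oracle} applies under Assumption~\ref{ass:parametric-class} and \eqref{eq:approximate-erm}. It shows that the right-hand side of \eqref{eq:fast-oracle} is a valid certificate with probability at least $1-\delta$, which is exactly \eqref{eq:end-to-end-fast-risk}. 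The realizable case uses \eqref{eq:fast-oracle-realizable} in the same way.

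\textbf{Main obstacle.} The only delicate point is bookkeeping of randomness. The learned field $\widehat v_m$ is $\cD_{m,n}$-measurable, while the flows, the superposition draw and the solver use fresh randomness. So link (b) must be read as a statement about a fixed field, applied conditionally on the training data. The solver guarantee must hold almost surely in $\cD_{m,n}$, as Assumption~\ref{ass:solver} stipulates. With that in place the high-probability event concerns only the risk certificate, and the inequality holds on that event.
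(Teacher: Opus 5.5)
Your proposal is correct and follows the paper's proof essentially verbatim. It uses the same Wasserstein triangle inequality through $\widehat\mu_{m,T}$, $\mu_T^m$, and $\mu_T$, bounds the links by Assumption~\ref{ass:solver}, Theorem~\ref{thm:learned-law-stability}, the canonical coupling \eqref{eq:law-canonical-bound}, and the definition of $b_{\rm end}(T)$, and then substitutes the certificate from Theorem~\ref{thm:fast-oracle}; your added bookkeeping on conditioning on $\cD_{m,n}$ and on intersecting the certificate event with the almost-sure solver event only makes explicit what the paper leaves implicit.
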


\begin{proof}
Apply the Wasserstein triangle inequality successively through
$\widehat\mu_{m,T}$, $\mu_T^m$, and $\mu_T$.  Assumption~\ref{ass:solver},
Theorem~\ref{thm:learned-law-stability}, and the canonical coupling bound
the first three terms by $\varepsilon_{{\rm sol},m}(h)$,
$Te^{\widehat\Lambda}\overline{\mathfrak R}_{m,p,n}^{1/2}$, and
$b_{\rm sens}(m,T)$; the last is $b_{\rm end}(T)$.  The fast choices follow
from Theorem~\ref{thm:fast-oracle}.
\end{proof}

\begin{corollary}[End-to-end bound for literal point sensors]
\label{cor:point-end-to-end}
Suppose the regularity-space setting of
Section~\ref{subsec:point-sensor-setting} holds, with bounded linear sensors
and reconstructions satisfying \eqref{eq:regularity-reconstruction} and
$\sigma(\tau,S_mX_\tau)=\sigma(\tau,A_mX_\tau)$.  If the learned-field, solver, and
statistical assumptions of Theorem~\ref{thm:end-to-end} hold in the inherited
$\Hsp$ norm on $\Hm$, then \eqref{eq:end-to-end} remains valid with
\[
  b_{\rm sens}(m,T)
  =
  \norm{R_mS_mX_T-X_T}_{L^2(\Pp;\Hsp)}.
\]
Furthermore,
\[
  \Ee\sup_{t\le T}\norm{R_mS_mX_t-X_t}_{\Hsp}^2\longrightarrow0.
\]
\end{corollary}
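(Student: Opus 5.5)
The plan is to reduce the corollary to Theorem~\ref{thm:end-to-end} and then handle the uniform-in-time sensing statement by adapting the compactness argument of Proposition~\ref{prop:law-convergence}. Since the Wasserstein distance, the learned flow and the risks are all measured in $\Hsp$, we view $X_t$ and $U_t$ in $\Hsp$ through the embedding $\iota:\Vsp\hookrightarrow\Hsp$. Assumption~\ref{ass:interpolation} in $\Vsp$ then implies the same assumption in $\Hsp$, because $\norm{\iota x}_{\Hsp}\le\norm{\iota}\norm x_{\Vsp}$ and the Bochner derivative commutes with $\iota$. The finite objects $\Hm=\operatorname{ran}(R_mS_m)$, $\mu_t^m$, $v_m^\star$ and $\cL_m$ are built from $A_m=R_mS_m$ on the $\Vsp$-valued path. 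The hypothesis $\sigma(\tau,S_mX_\tau)=\sigma(\tau,A_mX_\tau)$ shows that the target obtained by conditioning on the literal sensors equals \eqref{eq:finite-target}.

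Next we recheck which hypotheses of Theorem~\ref{thm:end-to-end} were actually used. The proof there used only four facts:
\begin{enumerate}[label=(\alph*),leftmargin=2.2em]
\item the superposition measure $\eta_m$ of Proposition~\ref{prop:finite-continuity} for the finite-dimensional path $A_mX_t\in\Hm$;
\item Theorem~\ref{thm:learned-law-stability};
\item Assumption~\ref{ass:solver};
\item the canonical coupling $(A_mX_T,X_T)$.
\end{enumerate}
None of these needs $A_m$ to be bounded on $\Hsp$. Proposition~\ref{prop:finite-continuity} only requires $\int_0^T\Ee\norm{A_mU_t}_{\Hsp}^2\dt<\infty$. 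This follows from $\norm{A_m}_{\mathcal L(\Vsp,\Hsp)}<\infty$ and square-integrability of $U$ in $\Vsp$, and the derivative of $A_mX_t$ is $A_mU_t$ by boundedness $\Vsp\to\Hsp$. The coupling gives $\Wtwo(\mu_T^m,\mu_T)\le\norm{R_mS_mX_T-X_T}_{L^2(\Pp;\Hsp)}$. The learned-field, solver and risk-certificate inputs are assumed in the inherited $\Hsp$ norm. Chaining the triangle inequality exactly as in the proof of Theorem~\ref{thm:end-to-end} then yields \eqref{eq:end-to-end} with the stated $b_{\rm sens}$.

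For the uniform sensing limit we work pathwise in $\Vsp$. Almost surely $t\mapsto X_t$ is continuous into $\Vsp$, so $K_\omega=\{X_t(\omega):t\le T\}$ is compact in $\Vsp$. The operators $A_m-\iota:\Vsp\to\Hsp$ are uniformly bounded by $C_{VH}+\norm\iota$, and they converge to zero pointwise by \eqref{eq:regularity-reconstruction}. A finite $\varepsilon$-net of $K_\omega$ in $\Vsp$ then gives $\sup_t\norm{R_mS_mX_t-X_t}_{\Hsp}\to0$ almost surely. For domination we use the bound $(C_{VH}+\norm\iota)\sup_t\norm{X_t}_{\Vsp}$, which is square integrable by \eqref{eq:path-sup-moment} applied in $\Vsp$. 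Dominated convergence finishes the argument.

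The main obstacle is this last step. One must take the compactness and equicontinuity in the $\Vsp$ topology, not in $\Hsp$, since $A_m$ need not be bounded on $\Hsp$ for point sensors. One must also confirm that the moment bound \eqref{eq:path-sup-moment} is available in $\Vsp$, which holds because the interpolation assumption is imposed there. Everything else is bookkeeping inherited from the earlier proofs.
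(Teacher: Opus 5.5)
Your proposal is correct and follows the paper's own proof essentially step for step: you check that the construction of Proposition~\ref{prop:finite-continuity} goes through with the $\Vsp\to\Hsp$ bound and $U\in L^2(\Vsp)$, obtain $b_{\rm sens}$ from the canonical coupling, and chain the triangle inequality as in Theorem~\ref{thm:end-to-end}. The uniform mean-square statement is proved the same way in the paper, via compactness of almost every path image in $\Vsp$, a finite-net argument for the uniformly bounded operators $R_mS_m-\iota:\Vsp\to\Hsp$, and domination by a constant times $\sup_{t\le T}\norm{X_t}_{\Vsp}$.
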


\begin{proof}
Here $A_m=R_mS_m:\Vsp\to\Hsp$, so $A_mX$ is an absolutely continuous
$\Hm$-valued path with derivative $A_mU$.  Proposition
\ref{prop:finite-continuity}'s construction applies with the
$\Vsp\to\Hsp$ bound and $U\in L^2(\Vsp)$; the learned-law and statistical
arguments apply in the inherited $\Hsp$ norm.  The canonical coupling gives
the stated $b_{\rm sens}$.
Almost every path has compact image in $\Vsp$, on which pointwise
consistency plus uniform boundedness gives uniform convergence by
a finite-net argument.  Domination by a constant times
$\sup_{t\le T}\norm{X_t}_{\Vsp}$ proves the mean-square statement.
\end{proof}

\begin{remark}
Corollary~\ref{cor:point-end-to-end} does not automatically extend the strong
population flow-map theorem.  That extension additionally requires the
continuum flow to preserve $\Vsp$ and its trajectories to satisfy the
corresponding $\Vsp$ moment bounds.
\end{remark}

For the linear interpolation \eqref{eq:linear-interpolation},
\begin{equation}
\label{eq:endpoint-linear}
  b_{\rm end}(T)
  \le
  \norm{X_T-X_1}_{L^2}
  =
  (1-T)\norm{X_1-X_0}_{L^2}.
\end{equation}
Thus stopping before a potentially singular endpoint has a transparent,
vanishing price.

Consequently, on a common probability space, for training problems indexed
by $k$, if
$m_k,n_k\to\infty$, $h_k\to0$, possibly $T_k\uparrow1$, and
\[
  b_{\rm end}(T_k)\to0,\quad
  b_{\rm sens}(m_k,T_k)\to0,\quad
  \varepsilon_{{\rm sol},m_k}(h_k)\to0,\quad
  \overline{\mathfrak R}_{m_k,p_k,n_k}(\delta_k)\to0,
\]
while $\widehat\Lambda_k$ remains bounded, then
$\Wtwo(\widetilde\mu_{m_k,T_k},\mu_1)\to0$ in probability when
$\delta_k\to0$, and almost surely when $\sum_k\delta_k<\infty$, by
Theorem~\ref{thm:end-to-end} and Borel--Cantelli.

\begin{remark}[Joint model--sample growth]
\label{rem:joint-model-sample-growth}
Using \eqref{eq:end-to-end-fast-risk} requires
$a_{m_k,p_k},\varepsilon_{{\rm opt},k}\to0$ and
\begin{equation}
\label{eq:joint-model-sample-growth}
  \frac{B_kG_kR_k}{n_k}\longrightarrow0,
  \qquad
  B_k^2
    \frac{
      p_k\log(1+2n_k)+\log(2/\delta_k)
    }{n_k}
  \longrightarrow0.
\end{equation}
At fixed depth, bounded $R_k$, and under the output/data controls stated
after Theorem~\ref{thm:qno-uniform-constants}, if
$n_k,p_k,\delta_k^{-1}$ are polynomially related, the bounded-activation
estimates $B_k=O(1)$, $G_k=O(p_k^{L/2})$ make
\eqref{eq:joint-model-sample-growth} follow from
$p_k^{\max\{L/2,1\}}\log p_k=o(n_k)$.  For the raw magnitude recurrence,
$B_k,G_k=O(p_k^{L/2})$, and $p_k^{L+1}\log p_k=o(n_k)$ suffices; normalized
dictionaries with $B_k,G_k=O(1)$ leave only the entropy condition.
When $p_k$ grows, the right side of
\eqref{eq:qno-realization-bound}, evaluated at $(m_k,p_k)$, must also tend
to zero.  Thus $n_k,m_k\to\infty$ alone proves neither statistical nor
operator-realization consistency.
\end{remark}

Proposition~\ref{prop:approximation-decomposition} makes the approximation
term still more explicit:
\begin{equation}
\label{eq:resolved-approximation}
  \sqrt{a_{m,p}}
  \le
  \underbrace{d_m}_{\text{operator realization}}
  +
  \underbrace{\sqrt{a_p^\infty}}_{\text{continuum approximation}}
  +
  \underbrace{e_m}_{\text{conditional-target refinement}}.
\end{equation}

\begin{table}[t]
\centering
\caption{Certificates and controls for the end-to-end terms.}
\label{tab:end-to-end-map}
\small
\begin{tabular}{@{}>{\raggedright\arraybackslash}p{.17\linewidth}
                    >{\raggedright\arraybackslash}p{.31\linewidth}
                    >{\raggedright\arraybackslash}p{.43\linewidth}@{}}
\toprule
Term & Certificate & Primary control \\
\midrule
$b_{\rm end}(T)$ & endpoint coupling; \eqref{eq:endpoint-linear}
  & stopping time and interpolation \\
$b_{\rm sens}(m,T)$ & canonical reconstruction coupling
  & sensor geometry, reconstruction, and $m$ \\
$d_m$ & Proposition~\ref{prop:qno-realization}
  & input regularity, cubature, and output reconstruction \\
$\sqrt{a_p^\infty}$ & continuum approximation error
  & architecture and model dimension $p$ \\
$e_m$ & Theorem~\ref{thm:target-consistency} and its corollaries
  & reconstruction tails and observation information \\
$(BGR+B^2q)/n$ & Theorem~\ref{thm:fast-oracle}
  & $n,p$ and the envelopes $B,G,R$ \\
$\varepsilon_{\rm opt}$ & approximate ERM condition
  & optimization accuracy \\
$\varepsilon_{{\rm sol},m}(h)$ & Assumption~\ref{ass:solver}
  & solver, tolerance, and mesh-uniform constants \\
$e^{\widehat\Lambda}$ & Theorem~\ref{thm:learned-law-stability}
  & learned-field Lipschitz control \\
\bottomrule
\end{tabular}
\end{table}

Together, \eqref{eq:end-to-end} and
\eqref{eq:resolved-approximation} separate the errors due to
sensing, target refinement, operator discretization, function-class
approximation, finite samples, optimization, time integration, and endpoint
truncation.  No one of these terms is hidden behind the phrase
``resolution invariant.''
\section{Gaussian tests}
\label{sec:gaussian}

Let $\Hsp=\ell^2(\N)$ with basis $(e_k)$ and, for $\alpha>1/2$, let
$Ce_k=c_ke_k$ with $c_k=k^{-2\alpha}$.  Then $C$ is positive, injective,
and trace class.

\subsection{A commuting baseline}

Take independent
$X_0\sim\mathcal N(0,C)$ and
$X_1\sim\mathcal N(0,\rho^2C)$, where $\rho>0$, and define
\[
  X_t=(1-t)X_0+tX_1,\quad U=X_1-X_0,\quad
  q(t)=(1-t)^2+\rho^2t^2.
\]
The pair $(U,X_t)$ is jointly Gaussian with cross-covariance
$\{\rho^2t-(1-t)\}C$ and covariance $q(t)C$, so
$U-a(t)X_t$ is independent of $X_t$ for
$a(t)=\{\rho^2t-(1-t)\}/q(t)$.  Since $a=q'/(2q)$,
\begin{equation}
\label{eq:gaussian-velocity}
  v^\star(t,X_t)=\Ee[U\mid X_t]=a(t)X_t,
  \qquad
  \Phi_t(x)=\sqrt{q(t)}\,x.
\end{equation}
Thus $(\Phi_t)_\#\mathcal N(0,C)=\Law(X_t)$ and
$\Ee[P_mU\mid P_mX_t]=a(t)P_mX_t$.
If $P_m$ projects onto $\operatorname{span}\{e_1,\ldots,e_m\}$ and
$\mu_t=\Law(X_t)$, then
\begin{equation}
\label{eq:gaussian-projection-exact}
  \Wtwo^2((P_m)_\#\mu_t,\mu_t)
  =
  q(t)\sum_{k>m}k^{-2\alpha},
  \qquad
  \Wtwo((P_m)_\#\mu_t,\mu_t)
  =
  \sqrt{\frac{q(t)}{2\alpha-1}}\,
  m^{1/2-\alpha}\{1+O(m^{-1})\}.
\end{equation}
Indeed, orthogonality makes $(X_t,P_mX_t)$ an optimal coupling and the
integral-test expansion of the spectral tail gives the final equality.

\subsection{A noncommuting observation model}
\label{subsec:noncommuting-gaussian}

Write $e_k^+=e_{2k-1}$ and $e_k^-=e_{2k}$.  Fix $r_+>r_->0$ and define
\[
  C_0e_k^\pm=c_ke_k^\pm,\qquad
  C_1e_k^+=r_+c_ke_k^+,\qquad
  C_1e_k^-=r_-c_ke_k^-.
\]
For independent $X_i\sim\mathcal N(0,C_i)$, set
$X_t=(1-t)X_0+tX_1$ and
\[
  Q_\pm=(1-t)^2+r_\pm t^2,\qquad
  D_\pm=r_\pm t-(1-t),\qquad
  k_\pm=D_\pm/Q_\pm.
\]
The continuum target is $K_tx$, where $K_te_k^\pm=k_\pm e_k^\pm$.
At level $m$, observe every earlier pair and only
$g_m=(e_m^++e_m^-)/\sqrt2$ in the boundary pair:
\[
  \mathsf G_m
  =
  \operatorname{span}\{e_k^+,e_k^-:k<m\}
  \oplus\operatorname{span}\{g_m\},
\]
with orthogonal projection $\Pi_m$.  Although $\Pi_m\to I$ strongly, it does
not commute with $\operatorname{Cov}(X_t)$ when $Q_+\ne Q_-$.

\begin{proposition}[Exact noncommuting target]
\label{prop:noncommuting-gaussian}
For $z\in\mathsf G_m$,
\begin{equation}
\label{eq:noncommuting-finite-target}
  v_m^\star(t,z)
  =
  \sum_{k<m}
  \{k_+z_k^+e_k^++k_-z_k^-e_k^-\}
  +
  \kappa(t)\inner{z}{g_m}g_m,
  \qquad
  \kappa(t)=\frac{D_++D_-}{Q_++Q_-}.
\end{equation}
The projected restriction has boundary coefficient
$\overline k=(k_++k_-)/2$, which generally differs from $\kappa$.
Nevertheless,
\begin{equation}
\label{eq:noncommuting-uniform-lipschitz}
  \Lip(v_m^\star(t,\cdot))
  \le L(t):=\max\{\abs{k_+(t)},\abs{k_-(t)}\}
\end{equation}
uniformly in $m$.

Let $\mathsf T_m=\sum_{k>m}c_k$ and
$A(t)=D_+^2/Q_++D_-^2/Q_-$.  Then
\begin{align}
\label{eq:noncommuting-target-error}
  E_m(t)
  &:=
  \Ee\norm{K_tX_t-v_m^\star(t,\Pi_mX_t)}^2\\
  &=
  \mathsf T_mA(t)
  +
  c_m\left[
    A(t)-\frac{(D_++D_-)^2}{2(Q_++Q_-)}
  \right],\nonumber\\
\label{eq:noncommuting-target-bound}
  E_m(t)
  &\le
  B_m(t):=
  \mathsf T_mA(t)+\frac{c_m}{2}A(t)
  +
  L(t)^2\left(\mathsf T_m+\frac{c_m}{2}\right)(Q_++Q_-).
\end{align}
For every fixed $T>0$, both
$\int_0^T E_m(t)\dt$ and $\int_0^T B_m(t)\dt$ are
$\Theta(m^{1-2\alpha})$.
\end{proposition}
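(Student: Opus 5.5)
The plan is to reduce everything to Gaussian conditioning, coordinate pair by coordinate pair. Since $C_0,C_1$ are diagonal in $(e_k^\pm)$ and $X_0,X_1$ are independent centered Gaussians, the scalar pairs $(\inner{X_t}{e_k^\pm},\inner{U}{e_k^\pm})$ are jointly Gaussian and independent across distinct basis vectors. Their covariances are $\operatorname{Var}\inner{X_t}{e_k^\pm}=c_kQ_\pm$ and $\operatorname{Cov}(\inner{U}{e_k^\pm},\inner{X_t}{e_k^\pm})=c_kD_\pm$. As in the commuting baseline, this gives the continuum target $K_t$ with multipliers $k_\pm=D_\pm/Q_\pm$.

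\textbf{Finite target.} With $\Pi_m$ in place of $A_m$, the finite target is $\Ee[\Pi_mU\mid\Pi_mX_t]$. The observation $\Pi_mX_t$ consists of the full pairs $k<m$ together with $w=\inner{X_t}{g_m}$.
\begin{itemize}
\item For $k<m$, independence across pairs reduces the conditioning on $\inner{U}{e_k^\pm}$ to $\inner{X_t}{e_k^\pm}$, giving $k_\pm z_k^\pm$.
\item For the boundary component $\inner{U}{g_m}g_m$, only the boundary pair is relevant. There $\operatorname{Cov}(\inner{U}{g_m},w)=c_m(D_++D_-)/2$ and $\operatorname{Var}w=c_m(Q_++Q_-)/2$. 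Jointly Gaussian regression therefore gives the coefficient $\kappa$. The residual $\inner{U}{g_m}-\kappa w$ is independent of all observations, so \eqref{eq:noncommuting-finite-target} holds $\mu_t^m$-a.e., and we take this linear formula as the representative.
\item The projected restriction $\Pi_mK_t\Pi_m$ has boundary coefficient $\inner{K_tg_m}{g_m}=(k_++k_-)/2$.
\item For \eqref{eq:noncommuting-uniform-lipschitz}, write $\kappa=(Q_+k_++Q_-k_-)/(Q_++Q_-)$. This is a convex combination of $k_\pm$, so $\abs{\kappa}\le L(t)$. The target is linear and block-diagonal, so its operator norm is at most $L(t)$.
\end{itemize}

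\textbf{Error identity.} On the pairs $k<m$ the error vanishes. Each tail pair $k>m$ contributes $\Ee\norm{K_tX_t}^2=c_k(k_+^2Q_++k_-^2Q_-)=c_kA(t)$, and these sum to $\mathsf T_mA(t)$. For the boundary pair, set $Y$ to be its component and expand $\Ee\norm{K_tY-\kappa\inner{Y}{g_m}g_m}^2$. We need three moments:
\[
\Ee\norm{K_tY}^2=c_mA,\qquad
\Ee\inner{K_tY}{g_m}\inner{Y}{g_m}=c_m(D_++D_-)/2,\qquad
\Ee\inner{Y}{g_m}^2=c_m(Q_++Q_-)/2.
\]
Substituting $\kappa$ yields $c_m\bigl[A-(D_++D_-)^2/(2(Q_++Q_-))\bigr]$, which is \eqref{eq:noncommuting-target-error}.

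\textbf{Upper bound.} The bracket is at most $A$. Since $A=k_+^2Q_++k_-^2Q_-\le L^2(Q_++Q_-)$, we have $c_mA\le\frac{c_m}{2}A+\frac{c_m}{2}L^2(Q_++Q_-)$. Adding the nonnegative term $L^2\mathsf T_m(Q_++Q_-)$ gives $E_m\le B_m$. Alternatively, $B_m$ is the instance of \eqref{eq:orthogonal-target-bound} with $\Ee\norm{(I-\Pi_m)X_t}^2=(\mathsf T_m+c_m/2)(Q_++Q_-)$.

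\textbf{Rates, the main care point.} The bracket in \eqref{eq:noncommuting-target-error} is a Gaussian conditional variance, hence nonnegative, so
\[
\mathsf T_mA(t)\le E_m(t)\le B_m(t).
\]
On $[0,T]$ we have $Q_\pm\ge r_\pm/(1+r_\pm)>0$ and $Q_\pm$ is bounded, so $A$, $L^2$ and $Q_++Q_-$ are bounded. Moreover $\int_0^TA\,dt>0$, because $D_\pm$ are affine in $t$ and not identically zero. Finally $\mathsf T_m\sim m^{1-2\alpha}/(2\alpha-1)$ by the integral test, while $c_m=m^{-2\alpha}=o(\mathsf T_m)$. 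Both integrals are therefore squeezed between constant multiples of $\mathsf T_m$, i.e.\ they are $\Theta(m^{1-2\alpha})$.

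The main obstacle is to state the exact-target step cleanly: independence across pairs (not merely zero correlation of a projection) is what justifies dropping the tail and the other pairs from the conditioning, and the equality only determines $v_m^\star$ $\mu_t^m$-a.e.
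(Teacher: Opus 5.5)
Your proposal is correct and follows essentially the same route as the paper. Both arguments use pairwise Gaussian conditioning (with independence across pairs) for the exact target, the convex-combination form of $\kappa$ for the uniform Lipschitz bound, an orthogonal pair-by-pair split of $E_m(t)$, and the asymptotics of $\mathsf T_m$ and $c_m$; on the boundary pair you expand the square directly, where the paper splits it into the unobserved $g_m^\perp$ component and the residual $R-\Ee[R\mid s]$. Your direct algebraic check of $E_m\le B_m$ is an equivalent alternative to citing Corollary~\ref{cor:orthogonal-target}, and your lower bound $E_m\ge\mathsf T_mA$ with $\int_0^T A\dt>0$ usefully spells out the $\Theta$ claim, which the paper leaves implicit (the bracket is nonnegative because $c_m$ times it is the expected squared norm you computed, namely a variance plus a conditional variance, not literally a single conditional variance).
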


\begin{proof}
Complete pairs are observed exactly.  In the boundary pair set
$s=(\xi_++\xi_-)/\sqrt2$ and
$R=(k_+\xi_++k_-\xi_-)/\sqrt2$, where
$\operatorname{Var}(\xi_\pm)=c_mQ_\pm$.  Then
\[
  \operatorname{Cov}(R,s)=\frac{c_m}{2}(D_++D_-),
  \qquad
  \operatorname{Var}(s)=\frac{c_m}{2}(Q_++Q_-).
\]
so Gaussian conditioning gives \eqref{eq:noncommuting-finite-target}, and
$\kappa=(Q_+k_++Q_-k_-)/(Q_++Q_-)$ is a convex combination of
$k_+$ and $k_-$, proving \eqref{eq:noncommuting-uniform-lipschitz}.

The error is the orthogonal sum of the unobserved boundary component, the
conditional residual $R-\Ee[R\mid s]$, and the later pairs, with variances
\[
  \frac{c_m}{2}A,\qquad
  \frac{c_m}{2}
  \left[A-\frac{(D_++D_-)^2}{Q_++Q_-}\right],
  \qquad
  \mathsf T_mA,
\]
respectively; their sum is \eqref{eq:noncommuting-target-error}.  Also
\[
  \Ee\norm{(I-\Pi_m)K_tX_t}^2=\mathsf T_mA+\frac{c_m}{2}A,\quad
  \Ee\norm{(I-\Pi_m)X_t}^2
  =
  \left(\mathsf T_m+\frac{c_m}{2}\right)(Q_++Q_-),
\]
and Corollary~\ref{cor:orthogonal-target} gives
\eqref{eq:noncommuting-target-bound}.  Finally
$\mathsf T_m\sim(2\alpha-1)^{-1}m^{1-2\alpha}$ and
$c_m=O(m^{-2\alpha})$.
\end{proof}

For $\alpha=3/2$, $r_+=4$, $r_-=1/4$, and $t=1/2$,
\[
  k_+=1.2,\qquad k_-=-1.2,\qquad
  \overline k=0,\qquad \kappa=0.72.
\]
Thus the natural projected restriction predicts zero boundary velocity while
the exact conditional target is nonzero.

\paragraph{Finite-sample check.}
For each $n$, we performed 200 independent repetitions.  In each repetition
we drew $n$ boundary pairs with
$\xi_+\sim\mathcal N(0,1.25)$ and
$\xi_-\sim\mathcal N(0,0.3125)$ and fitted
\[
  \widehat\kappa_n
  =
  \frac{\sum_{i=1}^ns_iR_i}{\sum_{i=1}^ns_i^2},
  \qquad
  s_i=\frac{\xi_{i,+}+\xi_{i,-}}{\sqrt2},
  \qquad
  R_i=\frac{1.2(\xi_{i,+}-\xi_{i,-})}{\sqrt2}.
\]
Table~\ref{tab:gaussian-sanity-check} uses PCG64 seeds $0,\ldots,199$ and
shows convergence to $0.72$; projected restriction remains zero.

\begin{table}[t]
\centering
\caption{Mean, sample standard deviation, and root-mean-square error (RMSE) of
$\widehat\kappa_n$ relative to $0.72$ over 200 repetitions.}
\label{tab:gaussian-sanity-check}
\small
\begin{tabular}{rrrr}
\toprule
$n$ & Mean & Sample SD & RMSE \\
\midrule
128&0.7061&0.0887&0.0896\\
512&0.7221&0.0425&0.0424\\
2048&0.7182&0.0214&0.0215\\
8192&0.7193&0.0099&0.0099\\
\bottomrule
\end{tabular}
\end{table}

\subsection{Clipping and a bounded learning regime}
\label{subsec:gaussian-clipping}

For $r>0$, let $T_r$ be metric projection onto the closed radius-$r$ ball.

\begin{proposition}[Cost of radial Gaussian clipping]
\label{prop:gaussian-clipping}
Let $(G_0,G_1)$ be any coupling of centered Gaussian elements with nonzero
trace-class covariances $C_0,C_1$.  Define
\[
  G_i^{r_i}=T_{r_i}(G_i),\quad
  X_t=(1-t)G_0+tG_1,\quad
  X_t^{\boldsymbol r}=(1-t)G_0^{r_0}+tG_1^{r_1},
\]
and
$\epsilon_i(r_i)=\{\Ee(\norm{G_i}-r_i)_+^2\}^{1/2}$.
Then
\begin{align}
\label{eq:clipping-w2}
  \norm{X_t^{\boldsymbol r}}
  &\le(1-t)r_0+tr_1,\qquad
  \Wtwo(\Law(X_t^{\boldsymbol r}),\Law(X_t))
  \le(1-t)\epsilon_0+t\epsilon_1,\\
\label{eq:gaussian-clipping-tail}
  \epsilon_i(r_i)^2
  &\le
  2\norm{C_i}_{\rm op}
  \exp\!\left[
    -\frac{(r_i-\sqrt{\operatorname{Tr}C_i})^2}
           {2\norm{C_i}_{\rm op}}
  \right]
\end{align}
whenever $r_i>\sqrt{\operatorname{Tr}C_i}$.  If
$\sup_m\norm{A_m}_{\rm op}\le C_A$, then
\begin{equation}
\label{eq:clipped-target-bound}
  \sup_m\norm{A_m(G_1^{r_1}-G_0^{r_0})}
  \le C_A(r_0+r_1)
  \quad\text{almost surely}.
\end{equation}
\end{proposition}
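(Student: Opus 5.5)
The proof splits into three independent parts: the deterministic norm bound, the Wasserstein bound via the natural coupling, and the Gaussian tail estimate. For the first claim in \eqref{eq:clipping-w2}, note that $T_{r_i}$ maps into the closed ball of radius $r_i$, so $\norm{G_i^{r_i}}\le r_i$ pointwise. The triangle inequality applied to the convex combination then gives $\norm{X_t^{\boldsymbol r}}\le(1-t)r_0+tr_1$. The same argument yields \eqref{eq:clipped-target-bound}:
\[
  \norm{A_m(G_1^{r_1}-G_0^{r_0})}\le C_A(\norm{G_1^{r_1}}+\norm{G_0^{r_0}})\le C_A(r_0+r_1),
\]
uniformly in $m$ and surely, not merely almost surely.

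For the Wasserstein bound we use the coupling $(X_t^{\boldsymbol r},X_t)$ built from the same $(G_0,G_1)$. Radial projection satisfies $\norm{T_r(x)-x}=(\norm x-r)_+$ exactly, since $T_r(x)=x\min\{1,r/\norm x\}$. Hence
\[
  \norm{X_t^{\boldsymbol r}-X_t}\le(1-t)(\norm{G_0}-r_0)_++t(\norm{G_1}-r_1)_+ .
\]
Minkowski's inequality in $L^2(\Pp)$ gives $\Wtwo\le(1-t)\epsilon_0+t\epsilon_1$. This step needs no assumption on the coupling of $G_0$ and $G_1$.

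The only substantive step is \eqref{eq:gaussian-clipping-tail}, which we derive from Gaussian concentration of the norm. The map $x\mapsto\norm x$ is $1$-Lipschitz on $\Hsp$. Writing $G=C^{1/2}\gamma$ with $\gamma$ an isonormal or standard cylindrical Gaussian (or passing through finite-dimensional projections and a limit), the function $\gamma\mapsto\norm{C^{1/2}\gamma}$ is $\norm{C}_{\rm op}^{1/2}$-Lipschitz. The Borell--TIS / Gaussian concentration inequality then gives
\[
  \Pp(\norm G\ge \Ee\norm G+u)\le e^{-u^2/(2\sigma^2)},\qquad \sigma^2=\norm C_{\rm op}.
\]
Since $\Ee\norm G\le(\Ee\norm G^2)^{1/2}=\sqrt{\operatorname{Tr}C}$, we have $\Pp(\norm G\ge s)\le\exp[-(s-\sqrt{\operatorname{Tr}C})^2/(2\sigma^2)]$ for $s\ge\sqrt{\operatorname{Tr}C}$. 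Setting $a=r-\sqrt{\operatorname{Tr}C}>0$, the layer-cake formula gives
\[
  \Ee(\norm G-r)_+^2=\int_0^\infty 2u\,\Pp(\norm G>r+u)\dd u\le\int_0^\infty 2u\,e^{-(a+u)^2/(2\sigma^2)}\dd u .
\]
Using $(a+u)^2\ge a^2+u^2$ for $u,a\ge0$, the integral is bounded by $e^{-a^2/(2\sigma^2)}\int_0^\infty 2u e^{-u^2/(2\sigma^2)}\dd u=2\sigma^2e^{-a^2/(2\sigma^2)}$, which is exactly the claimed constant.

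The main obstacle is justifying the infinite-dimensional concentration with the operator-norm variance proxy. The first approach to try is to apply the finite-dimensional Gaussian concentration inequality to $\norm{P_NG}$, where $P_N$ projects onto the first $N$ eigenvectors of $C$. In that setting the Lipschitz constant is bounded by $\norm{C}_{\rm op}^{1/2}$ and the mean by $\sqrt{\operatorname{Tr}C}$, both uniformly in $N$. Letting $N\to\infty$, with $\norm{P_NG}\uparrow\norm G$ almost surely, then transfers the tail bound by monotone convergence. Nonzero covariance guarantees $\sigma>0$.
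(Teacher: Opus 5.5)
Your proposal is correct and follows the paper's proof. You use the same three steps: the triangle inequality for the almost-sure norm bounds; the coupling $(X_t^{\boldsymbol r},X_t)$ with $\norm{T_r(x)-x}=(\norm x-r)_+$ and Minkowski for the Wasserstein bound; and integration of the Gaussian norm tail $\Pp(\norm{G_i}>\sqrt{\operatorname{Tr}C_i}+s)\le e^{-s^2/(2\norm{C_i}_{\rm op})}$ for the clipping cost. The only difference is that you derive that tail bound yourself (Lipschitz constant $\norm{C}_{\rm op}^{1/2}$, Jensen for the mean, finite-dimensional projections) and carry out the layer-cake integration via $(a+u)^2\ge a^2+u^2$, whereas the paper cites Bogachev for the tail bound and simply says to integrate it.
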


\begin{proof}
Metric projection gives
$\norm{G_i-T_{r_i}(G_i)}=(\norm{G_i}-r_i)_+$, so the displayed coupling and
Minkowski prove \eqref{eq:clipping-w2}.  Integrating the Gaussian
concentration bound
$\Pp(\norm{G_i}>\sqrt{\operatorname{Tr}C_i}+s)
\le e^{-s^2/(2\norm{C_i}_{\rm op})}$
\citep{bogachev1998gaussian} gives
\eqref{eq:gaussian-clipping-tail}.  Finally,
$\norm{A_m(G_1^{r_1}-G_0^{r_0})}\le C_A(r_0+r_1)$.
\end{proof}

\begin{proposition}[A simultaneous bounded regime]
\label{prop:simultaneous-bounded-regime}
Let $G\sim\mathcal N(0,C)$, $G^r=T_r(G)$, $\rho>0$, and
\[
  X_t^r=s_\rho(t)G^r,\quad U^r=(\rho-1)G^r,\quad
  s_\rho(t)=1+t(\rho-1).
\]
For any strongly convergent finite-rank orthogonal projections $P_m$,
\[
  v^\star(t,x)=\ell_\rho(t)x,\qquad
  v_m^\star(t,z)=\ell_\rho(t)z,\qquad
  \ell_\rho(t)=\frac{\rho-1}{s_\rho(t)},
\]
and
$\sup_m\int_0^1\Lip(v_m^\star(t,\cdot))\dt=\abs{\log\rho}$.
For $R\ge1$, the class
$f_{\theta,m}(t,z)=\theta\ell_\rho(t)z$, $\abs\theta\le R$,
has $a_{m,1}=0$, learned-field Lipschitz constant
$R\ell_{\max}$, and satisfies Assumption~\ref{ass:parametric-class} with
\[
  B=\max\{\abs{\rho-1}r,R\ell_{\max}s_{\max}r\},
  \qquad
  G=\ell_{\max}s_{\max}r,
\]
where
$s_{\max}=\max\{1,\rho\}$ and
$\ell_{\max}=\abs{\rho-1}/\min\{1,\rho\}$.
Moreover,
\[
  \Wtwo(\Law(\rho G^r),\mathcal N(0,\rho^2C))
  \le\rho\{\Ee(\norm G-r)_+^2\}^{1/2}.
\]
\end{proposition}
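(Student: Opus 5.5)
The approach is to exploit the fact that the whole interpolation is a deterministic time-rescaling of a single bounded random element. Then every conditional expectation can be computed exactly, with no Gaussian regression needed.

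\emph{Step 1: the targets.} Since $X_t^r=s_\rho(t)G^r$ with $s_\rho(t)>0$ on $[0,1]$ (it interpolates between $1$ and $\rho>0$), we have $U^r=(\rho-1)G^r=\ell_\rho(t)X_t^r$. This is already $\sigma(X_t^r)$-measurable, so $\Ee[U^r\mid X_t^r]=\ell_\rho(t)X_t^r$, which gives $v^\star(t,x)=\ell_\rho(t)x$. The same argument works after projection. We have $P_mU^r=\ell_\rho(t)P_mX_t^r$, and this is measurable with respect to $\cG_m$ (time is part of the sigma-algebra). Hence $v_m^\star(t,z)=\ell_\rho(t)z$ holds on the support, and we take this linear map as the specified representative. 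Its Lipschitz constant is $\abs{\ell_\rho(t)}$, because $P_m\neq0$ for large $m$; for $P_m=0$ the constant is smaller and harmless for the supremum. We then compute
\[
\int_0^1\frac{\abs{\rho-1}}{1+t(\rho-1)}\dt=\abs{\log\rho}
\]
by the substitution $u=s_\rho(t)$, treating the cases $\rho>1$ and $\rho<1$ separately, and noting that the case $\rho=1$ is trivial.

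\emph{Step 2: the class constants.} Taking $\theta=1\in[-R,R]$ gives $f_{1,m}=v_m^\star$, so $a_{m,1}=0$ by the Pythagoras identity \eqref{eq:finite-pythagoras}. The learned Lipschitz constant is $\abs\theta\,\abs{\ell_\rho(t)}\le R\ell_{\max}$, where $\ell_{\max}=\sup_t\abs{\ell_\rho}$ equals $\abs{\rho-1}/\min s_\rho$ and $\min s_\rho=\min\{1,\rho\}$. For the envelopes we use $\norm{P_m}\le1$ and $\norm{G^r}\le r$. The target satisfies $\norm{W_m}=\abs{\rho-1}\norm{P_mG^r}\le\abs{\rho-1}r$. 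The outputs satisfy $\norm{f_{\theta,m}(Z_m)}\le R\ell_{\max}s_{\max}r$, using $\max s_\rho=s_{\max}$. Parameter sensitivity follows from $\norm{(\theta-\theta')\ell_\rho P_mX_t^r}\le\ell_{\max}s_{\max}r\abs{\theta-\theta'}$. All of these bounds hold pointwise on a single full-measure event, uniformly in $m$, as Assumption~\ref{ass:parametric-class} requires. The constant $\Theta=[-R,R]$ lies in the radius-$R$ ball with $p=1$.

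\emph{Step 3: the endpoint bound.} We couple $\rho G^r$ with $\rho G\sim\mathcal N(0,\rho^2C)$. By the metric-projection identity from the proof of Proposition~\ref{prop:gaussian-clipping}, $\norm{\rho G-\rho T_r(G)}=\rho(\norm G-r)_+$. Taking the $L^2$ norm gives the displayed $\Wtwo$ bound.

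No step is a genuine obstacle. The only delicate point is Step 1's measurability claim: $v_m^\star$ is defined as an equivalence class, so I would state explicitly that the linear representative is chosen. I would also stress that the identity uses $P_mU^r=\ell_\rho P_mX^r_t$ exactly, which is why no commutation or Gaussianity is needed, in contrast with the noncommuting model of Proposition~\ref{prop:noncommuting-gaussian}.
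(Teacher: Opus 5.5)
Your proposal is correct and follows the paper's proof essentially step for step. The paper likewise uses $G^r=X_t^r/s_\rho(t)$ (equivalent to your identity $U^r=\ell_\rho(t)X_t^r$), both before and after applying $P_m$, to get the two targets and the integrated Lipschitz constant. It then bounds $\norm{P_mX_t^r}\le s_{\max}r$ and $\norm{P_mU^r}\le\abs{\rho-1}r$ for the envelope and parameter constants, takes $\theta=1$ for realizability, and couples $\rho G^r$ with $\rho G$. Your explicit evaluation of the integral and your explicit choice of the linear representative add detail that the paper leaves implicit.
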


\begin{proof}
Because $s_\rho(t)>0$, $G^r=X_t^r/s_\rho(t)$, and the same identity holds
after applying $P_m$; this proves the two conditional targets and the
integrated Lipschitz constant.  On the training support,
$\norm{P_mX_t^r}\le s_{\max}r$ and
$\norm{P_mU^r}\le\abs{\rho-1}r$, which give the stated envelope and
parameter-Lipschitz constants.  The class contains the target at $\theta=1$.
The final inequality couples $\rho G^r$ with $\rho G$.
\end{proof}

\begin{corollary}[Explicit clipped-Gaussian rate]
\label{cor:explicit-clipped-gaussian-rate}
Let $Ce_k=k^{-2\alpha}e_k$ with $\alpha>1/2$, let $P_m$ project onto the
first $m$ coordinates, and use the clipped scaling model and class of
Proposition~\ref{prop:simultaneous-bounded-regime}, with $R\ge1$ and
$r>\sqrt{\operatorname{Tr}C}$.  Let $\widetilde\mu_{m,1}^{r}$ be the law
returned by a solver satisfying Assumption~\ref{ass:solver}, and write its
certificate as $\varepsilon_{{\rm sol},m}^{(r)}(h)$.  Set
\[
 b_\rho=\max\{\abs{\rho-1},R\ell_{\max}s_{\max}\},\quad
 g_\rho=\ell_{\max}s_{\max},
\]
\[
 \Psi_{r,n}(\delta)
 =\frac32\varepsilon_{\rm opt}
 +\frac{10b_\rho g_\rho Rr^2}{n}
 +\frac{40b_\rho^2r^2}{n}
   \{\log(1+2n)+\log(2/\delta)\}.
\]
If $\widehat v_m$ satisfies \eqref{eq:approximate-erm}, then with probability
at least $1-\delta$,
\begin{align}
 \Wtwo\!\left(\widetilde\mu_{m,1}^{r},\mathcal N(0,\rho^2C)\right)
 \le{}&\rho\epsilon(r)
 +\rho\left(\sum_{k>m}k^{-2\alpha}\right)^{1/2}
 +\varepsilon_{{\rm sol},m}^{(r)}(h)
 +e^{R\abs{\log\rho}}\Psi_{r,n}(\delta)^{1/2},
 \label{eq:explicit-clipped-gaussian-bound}
\end{align}
where
\[
 \epsilon(r):=\{\Ee(\norm G-r)_+^2\}^{1/2}
 \le\sqrt{2\norm C_{\rm op}}
 \exp\!\left[-\frac{(r-\sqrt{\operatorname{Tr}C})^2}
                    {4\norm C_{\rm op}}\right].
\]
If $\varepsilon_{{\rm sol},m}^{(r)}(h)\le C_{\rm sol}r h^\beta$ with
$C_{\rm sol}$ independent
of $m$ and $r$, choose, for $n\ge2$,
\[
 \begin{gathered}
 r_n=\sqrt{\operatorname{Tr}C}+\sqrt{2\norm C_{\rm op}\log n},\qquad
 m_n=\left\lceil n^{1/(2\alpha-1)}\right\rceil,\\
 p=T=1,\quad h_n=n^{-1/(2\beta)},\quad \delta_n=n^{-2},\quad
 \varepsilon_{{\rm opt},n}=O((\log n)^2/n).
 \end{gathered}
\]
Then
\begin{equation}
\label{eq:explicit-clipped-gaussian-rate}
 \Wtwo\!\left(\widetilde\mu_{m_n,1}^{r_n},
                    \mathcal N(0,\rho^2C)\right)
 =O((\log n)/\sqrt n)
\end{equation}
with probability at least $1-n^{-2}$, and eventually almost surely on a
common probability space.
\end{corollary}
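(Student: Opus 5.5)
The plan is to instantiate the end-to-end decomposition of Theorem~\ref{thm:end-to-end} at $T=1$. We use the clipped scaling model of Proposition~\ref{prop:simultaneous-bounded-regime} as the interpolation, so $\mu_1=\Law(\rho G^r)$, and then add one more Wasserstein triangle step to reach $\mathcal N(0,\rho^2C)$. Concretely,
\[
\Wtwo(\widetilde\mu^r_{m,1},\mathcal N(0,\rho^2C))\le\Wtwo(\widetilde\mu^r_{m,1},\Law(\rho G^r))+\Wtwo(\Law(\rho G^r),\mathcal N(0,\rho^2C)).
\]
The last term is at most $\rho\epsilon(r)$ by the final display of Proposition~\ref{prop:simultaneous-bounded-regime}. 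Because $T=1$ and $X_1^r=\rho G^r$, the endpoint bias $b_{\rm end}(1)=0$. For the sensing term, $b_{\rm sens}(m,1)=\norm{(I-P_m)\rho G^r}_{L^2}\le\rho\norm{(I-P_m)G}_{L^2}$, because $T_r$ is a radial contraction $G\mapsto\lambda G$ with $\lambda\le1$. This equals $\rho(\sum_{k>m}k^{-2\alpha})^{1/2}$. The solver term is taken directly from the assumption.

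The statistical term uses Theorem~\ref{thm:learned-law-stability} together with the realizable bound \eqref{eq:fast-oracle-realizable}. Proposition~\ref{prop:simultaneous-bounded-regime} supplies $a_{m,1}=0$, $B=b_\rho r$, $G=g_\rho r$ and $p=1$. Substituting these gives $q_{1,n,\delta}=\log(1+2n)+\log(2/\delta)$, which turns \eqref{eq:fast-oracle-realizable} into $\Psi_{r,n}(\delta)$. For the learned Lipschitz factor $e^{\widehat\Lambda}$, the plan is to bound $\int_0^1 R\abs{\ell_\rho(t)}\dt=R\abs{\log\rho}$ directly, rather than using the cruder $R\ell_{\max}$. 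This is exactly where the factor $e^{R\abs{\log\rho}}$ comes from. The tail estimate on $\epsilon(r)$ follows from \eqref{eq:gaussian-clipping-tail} after taking the square root, since halving the exponent gives the $4\norm C_{\rm op}$ in the denominator.

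For the rate we substitute the stated choices term by term.
\begin{itemize}
\item With $r_n$ as chosen, $(r_n-\sqrt{\operatorname{Tr}C})^2/(4\norm C_{\rm op})=\tfrac12\log n$, so $\epsilon(r_n)=O(n^{-1/2})$.
\item The tail sum is $O(m^{1/2-\alpha})$ by \eqref{eq:gaussian-projection-exact}, which is $O(n^{-1/2})$ for $m_n=\lceil n^{1/(2\alpha-1)}\rceil$.
\item The solver term is $C_{\rm sol}r_nh_n^\beta=O(\sqrt{\log n}\,n^{-1/2})$.
\item With $r_n^2=O(\log n)$ and $\log(2/\delta_n)=O(\log n)$, we get $\Psi=O((\log n)^2/n)$, so its square root is $O(\log n/\sqrt n)$. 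This term dominates.
\end{itemize}
The failure probability is $n^{-2}$, which is summable, so Borel--Cantelli gives the eventually-almost-sure statement.

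The main obstacle is keeping every constant uniform in $m$ and correctly $r$-scaled. Two points need care. First, the envelope constants of Assumption~\ref{ass:parametric-class} must hold for the clipped data at all levels simultaneously. Second, $r_n$ grows, and this feeds into both $\Psi$ and the solver certificate; getting that scaling right is what keeps the final rate at $\log n/\sqrt n$ rather than something worse.
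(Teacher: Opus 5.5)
Your proposal is correct and follows essentially the same route as the paper's proof. Both instantiate Theorem~\ref{thm:end-to-end} at $T=1$ with the realizable bound \eqref{eq:fast-oracle-realizable}, take $B=b_\rho r$, $G=g_\rho r$ and $\widehat\Lambda\le R\abs{\log\rho}$ from Proposition~\ref{prop:simultaneous-bounded-regime}, bound the sensing term using that $T_r$ is a scalar contraction, and compare $\rho G^r$ with $\rho G$; both then substitute the schedule term by term and invoke Borel--Cantelli, and you simply spell out the details that the paper's terse proof leaves implicit.
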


\begin{proof}
Proposition~\ref{prop:simultaneous-bounded-regime} gives
$a_{m,1}=0$, $B=b_\rho r$, $G=g_\rho r$, and
$\widehat\Lambda\le R\abs{\log\rho}$.  Since $G^r$ is a scalar contraction,
$\norm{(I-P_m)\rho G^r}_{L^2}
\le\rho(\sum_{k>m}k^{-2\alpha})^{1/2}$.
Apply Theorem~\ref{thm:end-to-end} at $T=1$ with
\eqref{eq:fast-oracle-realizable}, then compare $\rho G^r$ with $\rho G$.
For the schedule above, clipping and sensing are $O(n^{-1/2})$ and the
solver error is $O(\sqrt{\log n}/\sqrt n)$, while
$r_n^2=O(\log n)$ makes the statistical term
$O((\log n)/\sqrt n)$.  Borel--Cantelli gives the last assertion.
\end{proof}

\begin{remark}
This is a composition certificate for the exactly realizable clipped scaling
coupling $G_1^r=\rho G_0^r$, so its generic statistical rate is conservative.
It is not an independent-endpoint result: clipping destroys joint Gaussianity,
and that drift needs separate stability and approximation bounds.
\end{remark}
\section{Discussion}
\label{sec:discussion}

The analysis separates three claims often grouped under ``resolution
invariance.''  Information refinement reconstructs the same random
function and is controlled by Theorem~\ref{thm:target-consistency};
mesh transfer evaluates a resolved operator with new quadrature and
is certified by Proposition~\ref{prop:qno-realization}; frequency extrapolation asks for components absent from both data and hypothesis class
and needs additional structure.  Together with
Theorem~\ref{thm:end-to-end}, these results make the targets converge
strongly in $L^2$ and the generated laws converge in $\Wtwo$, provided the
reconstruction, realization, and stability constants are controlled.

The qualifications are substantive.  The pair-splitting example only shows
that continuum Lipschitz regularity need not imply a mesh-uniform finite
target envelope: its finite flows converge exactly, and persistence under
independent Gaussian endpoint smoothing remains open.  The learned-law
theorem avoids population uniqueness but still requires learned-field
stability.  The quadrature result covers ReLU, GeLU, and SiLU through its
magnitude recurrence, but a standard FNO still needs mesh-uniform normalized
Fourier-layer bounds.  Untruncated Gaussian data require moment-based rather
than bounded-loss estimates; the clipped scaling model instead gives the
explicit
$O((\log n)/\sqrt n)$ certificate.

The results control laws, not densities relative to a nonexistent
infinite-dimensional Lebesgue measure, and do not justify unseen-frequency
recovery.  Natural extensions are moment-based fast rates, an
independent-endpoint stability analysis, and weaker flow stability based on
monotonicity or regular Lagrangian flows.  Table
\ref{tab:gaussian-sanity-check} checks the conditioning mechanism only; a
learned-operator benchmark would test realization and optimization effects.

\setcitestyle{numbers}

\newpage

\bibliography{references}

\end{document}